\long\def\comment#1{}
\newcommand*{\addFileDependency}[1]{
  \typeout{(#1)}
  \@addtofilelist{#1}
  \IfFileExists{#1}{}{\typeout{No file #1.}}
}
\theoremstyle{plain}
\newtheorem{theorem}{Theorem}[section]
\newtheorem{proposition}[theorem]{Proposition}
\newtheorem{lemma}[theorem]{Lemma}
\newtheorem{corollary}[theorem]{Corollary}
\theoremstyle{definition}
\newtheorem{definition}[theorem]{Definition}
\newtheorem{assumption}[theorem]{Assumption}
\theoremstyle{remark}
\newtheorem{remark}[theorem]{Remark}
\def\eqref#1{(\ref{#1})}
\def\1{\bm{1}}
\def\eps{{\epsilon}}
\DeclareMathAlphabet{\mathsfit}{\encodingdefault}{\sfdefault}{m}{sl}
\SetMathAlphabet{\mathsfit}{bold}{\encodingdefault}{\sfdefault}{bx}{n}
\def\gB{{\mathcal{B}}}
\def\gD{{\mathcal{D}}}
\def\gF{{\mathcal{F}}}
\def\gH{{\mathcal{H}}}
\def\gO{{\mathcal{O}}}
\def\gP{{\mathcal{P}}}
\def\gQ{{\mathcal{Q}}}
\def\gU{{\mathcal{U}}}
\def\gZ{{\mathcal{Z}}}
\DeclareMathOperator*{\argmax}{arg\,max}
\DeclareMathOperator*{\argmin}{arg\,min}
\crefname{assumption}{assumption}{assumptions}
\crefname{Assumption}{Assumption}{Assumptions}
\newcommand{\multilines}[1]{%
	\begin{tabularx}{\dimexpr\linewidth-\ALG@thistlm}[t]{@{}X@{}}
		#1
	\end{tabularx}
}
\newcommand{\OurAlg}{\ensuremath{\textsf{WAFL}}\xspace}
\Crefname{figure}{Fig.}{Figs.}
\Crefname{table}{Table}{Tables}
\Crefname{section}{Sec.}{Secs.}
\DeclarePairedDelimiterX{\norm}[1]{\lVert}{\rVert}{#1}
\DeclarePairedDelimiterX{\abs}[1]{\lvert}{\rvert}{#1}
\newcommand{\innProd}[1]{\left\langle#1\right\rangle}
\newcommand{\set}[1]{\left\lbrace#1\right\rbrace}
\DeclareSymbolFont{extraup}{U}{zavm}{m}{n}
\DeclareMathSymbol{\varheart}{\mathalpha}{extraup}{86}
\newcommand{\SumNoLim}[2]{\ensuremath{\sum\nolimits_{#1}^{#2}}}
\newcommand{\SumLim}[2]{\ensuremath{\sum\limits_{#1}^{#2}}}
\newcommand{\defeq}{\vcentcolon=}
\newcommand{\eqdef}{=\vcentcolon}
\newcommand{\nbigP}[1]{\ensuremath{(#1)}}
\newcommand{\bigP}[1]{\ensuremath{\bigl(#1\bigr)}}
\newcommand{\bigS}[1]{\ensuremath{\bigl[#1\bigr]}}
\newcommand{\bigC}[1]{\ensuremath{\bigl\{#1\bigr\}}}
\newcommand{\bigNorm}[1]{\ensuremath{\bigl\lVert#1\bigr\rVert}}
\newcommand{\biggP}[1]{\ensuremath{\biggl(#1\biggr)}}
\newcommand{\biggS}[1]{\ensuremath{\biggl[#1\biggr]}}
\newcommand{\biggC}[1]{\ensuremath{\biggl\{#1\biggr\}}}
\newcommand{\BigP}[1]{\ensuremath{\Bigl(#1\Bigr)}}
\newcommand{\BigS}[1]{\ensuremath{\Bigl[#1\Bigr]}}
\newcommand{\BigC}[1]{\ensuremath{\Bigl\{#1\Bigr\}}}
\newcommand{\BigNorm}[1]{\ensuremath{\Bigl\lVert#1\Bigr\rVert}}
\newcommand{\BiggP}[1]{\ensuremath{\Biggl(#1\Biggr)}}
\newcommand{\BiggS}[1]{\ensuremath{\Biggl[#1\Biggr]}}
\newcommand{\blue}[1]{{\color{blue}#1}\xspace}
\newcommand{\up}{\texttt{up}\xspace}
\newcommand{\mt}{\texttt{mt}\xspace}
\newcommand{\sv}{\texttt{sv}\xspace}
\title{On the Generalization of Wasserstein Robust Federated Learning}
\author{%
  Tung-Anh Nguyen \\
  School of Computer Science\\
  The University of Sydney\\
  Sydney, NSW 2000 \\
  \texttt{tung6100@uni.sydney.edu.au} \\
  \And
  Tuan Dung Nguyen \\
  School of Computing \\
  The Australian National University \\
  Canberra, ACT 2601\\
  \texttt{josh.nguyen@anu.edu.au} \\
  \AND
  Long Tan Le \\
  School of Computer Science\\
  The University of Sydney\\
  Sydney, NSW 2000 \\
  \texttt{long.le@sydney.edu.au} \\
  \And
  Canh T. Dinh\\
  School of Computer Science\\
  The University of Sydney\\
  Sydney, NSW 2000 \\
  \texttt{canh.dinh@sydney.edu.au} \\
  \And
  Nguyen H. Tran \\
  School of Computer Science\\
  The University of Sydney\\
  Sydney, NSW 2000 \\
  \texttt{nguyen.tran@sydney.edu.au} \\
}
\begin{document}

\maketitle

\begin{abstract}

In federated learning, participating clients  typically possess non-i.i.d. data, posing a significant challenge to generalization to unseen distributions. To address this, we propose a Wasserstein distributionally robust optimization scheme called \OurAlg. Leveraging its duality, we frame \OurAlg as an empirical surrogate risk minimization problem, and solve it using a local SGD-based algorithm with convergence guarantees. We show that the robustness of \OurAlg is more general than related approaches, and the generalization bound is robust to all adversarial distributions inside the Wasserstein ball (ambiguity set). Since the center location and radius of the Wasserstein ball can be suitably modified, \OurAlg shows its applicability not only in robustness but also in domain adaptation. Through empirical evaluation, we demonstrate that \OurAlg generalizes better than the vanilla FedAvg in non-i.i.d. settings, and is more robust than other related methods in distribution shift settings. Further, using benchmark datasets we show that \OurAlg is capable of generalizing to  unseen target domains.

\end{abstract}

\section{Introduction}


Federated learning (FL) \citep{konecny_federated_2016,mcmahan_communication-efficient_2017} has emerged as a cutting-edge technique in distributed and privacy-preserving machine learning. The nature of non-i.i.d. data in clients' devices poses an important challenge to FL commonly called \emph{statistical heterogeneity}. The global model trained on this data using the de facto FedAvg algorithm \citep{mcmahan_communication-efficient_2017} has been shown to generalize poorly to individual clients' data, and further to \emph{unseen distributions} on new clients as they enter the network.

Several solutions to data heterogeneity have been proposed. Personalized FL \citep{mansour_three_2020, fallah_personalized_2020,deng_adaptive_2020,dinh_personalized_2021,li_ditto_2021,Collins2021} and multi-task FL \citep{smith_federated_2018, Marfoq2021} are \emph{client-adaptive} approaches, where a personalized model is adapted to each client from the global model. From another perspective, distributionally robust FL trains a model using a worst-case objective over an \emph{ambiguity set} \citep{mohri_agnostic_2019, du_fairness-aware_2020, reisizadeh_robust_2020, deng_distributionally_2020}. This approach is \emph{client-uniform} because a single global model is judiciously learned to deliver uniformly good performance not only for all training clients but also for new/unseen clients with unknown data distributions. It is specifically useful when test distributions drift away from the training distributions.

A natural question when designing distributionally robust FL frameworks is \emph{generalization}: How can minimizing the training error also bound the test error? In FL, \citet{mohri_agnostic_2019} proposed agnostic FL where a model is designed to be robust against any distribution that lies inside the convex hull of the clients' distributions. 
\citet{reisizadeh_robust_2020} applied the general affine covariate shift -- used in the standard adversarial robust training -- into FL training. In characterizing the generalization bounds, while \citet{mohri_agnostic_2019} relied on the standard Rademacher complexity, \citet{reisizadeh_robust_2020} use the margin-based technique developed by \citet{bartlett_spectrally-normalized_2017}.

In this work, we take a different approach called WAsserstein distributionally robust FL (\OurAlg for short). The ambiguity set in \OurAlg is a Wasserstein ball of all adversarial distributions in close proximity to the nominal data distribution at the center. Our main contributions are:
\begin{itemize}[leftmargin=20pt]
    \item We propose \OurAlg, a Wasserstein distributionally robust optimization problem for FL. To make \OurAlg amenable to distributed optimization, we transform the original problem into a minimization of the empirical surrogate risk and solve it using a local SGD-based algorithm with convergence guarantees.
    \item We demonstrate \OurAlg's flexibility in robustness and domain adaptation by adjusting its hyperparameters related to the Wassterstein ball's center and radius. We show how \OurAlg's output can reduce the test error by bounding its excess risk, and call this the \emph{robust generalization bound} as it is applicable to all adversarial distributions inside the Wasserstein ball.
    \item Experimentally, we show \OurAlg's significant improvement over the de facto FedAvg and other robust FL methods both in scenarios with adversarial attacks and in applications of multi-source domain adaptation.
\end{itemize}

\comment{
\begin{itemize}
	\item We propose a distributionally robust optimization problem to address statistical heterogeneity in FL. By controlling the center and radius of the Wasserstein ball, we show that \OurAlg is robust to a wider range of adversarial distributions than agnostic or adversarial FL.
	\item To make \OurAlg more amenable to distributed optimization, we transform the original problem into a minimization of the empirical surrogate risk. We propose a local SGD-based algorithm to solve this surrogate problem. With additional Lipschitz smoothness conditions, standard techniques can be applied to find the convergence rate for the proposed algorithm.
	\item We show how \OurAlg's output can reduce the test error by bounding its excess risk. We call this the \emph{robust generalization bound} as it is applicable to all adversarial distributions inside the Wasserstein ball. By scaling the Wasserstein radius based on local data sizes, we show this bound is applicable to the true (unknown) data  distribution among all clients.
	\item We show \OurAlg's extra flexibility in controlling the Wasserstein ball by adjusting the nominal distribution. This enables applications such as multi-source domain adaptation and robustness to all unknown distributions with minimal Wasserstein radius.
	\item Experimentally, we discuss how to control this radius for a given center location by fine-tuning a robust hyperparameter. We show that \OurAlg generalizes better than \blue{the FedAvg baseline} in non-i.i.d. settings and further outperforms existing robust methods in distribution shift settings. We finally explore \OurAlg's capability in transferring knowledge from multi-source domains to related target domains with much less data and/or without labels. \blue{TODO: Rewrite this after finishing the experiments section.}
\end{itemize}
}

\section{Related Work}
\textbf{Federated learning} was introduced in response to three challenges of machine learning at scale: massive data quantities at the edge, communication-critical networks of participating devices, and privacy-preserving learning without central data storage \citep{konecny_federated_2016,mcmahan_communication-efficient_2017}. The de facto FedAvg algorithm \citep{mcmahan_communication-efficient_2017} based on local stochastic gradient descent (SGD) and averaging is often considered a baseline in FL.

Most challenges of FL are categorized into \emph{systems heterogeneity} and \emph{statistical heterogeneity}. The former focuses on communication problems such as connection loss and bandwidth minimization. This motivated some prior works to design more communication-efficient methods \citep{konecny_federated_2016,konecny_federated_2017,suresh_distributed_2017, pmlr-v108-reisizadeh20a}. On the other hand, statitical heterogeneity is concerned with clients' non-i.i.d. data, which
is the main cause behind aggregating very different models leading to one that does not perform well on any data distribution. To address this, many ideas have been introduced. \citet{li_convergence_2020} provided much theoretical analysis of FL non-i.i.d. settings. \citet{zhao_federated_2018} proposed an FL framework which globally shares a small subset of data among clients to train the model with non-i.i.d. data. Furthermore, some studied on multi-task FL frameworks \citep{smith_federated_2018, Marfoq2021} in which each client individually learns its own data pattern while borrowing information from other clients, \comment{\citet{mansour_three_2020} suggested three approaches to adapt the FL model to enable personalization, in reponse to distribution shift} while several personalized FL models have also been developed in response to distribution shifts \citep{mansour_three_2020, fallah_personalized_2020, deng_adaptive_2020, dinh_personalized_2021, li_ditto_2021, Collins2021}.


\textbf{Wasserstein distributionally robust optimization (WDRO)} aims to learn a robust model against adversarially manipulated data. An unknown data distribution is assumed to lie within a Wasserstein ball centered around the empirical distribution \citep{kuhn_wasserstein_2019}. WDRO has received attention as a promising tool for training parametric models, both in centralized and federated learning settings.

In centralized learning, many studies have proposed solutions based on WDRO problems for certain machine learning tasks. 
For instance, \citet{shafieezadeh_abadeh_distributionally_2015} considered a robust logistic regression model under the assumption that the probability distributions lie in a Wasserstein ball. \citet{chen_robust_2018, blanchet_robust_2019, gao_wasserstein_2020} leveraged WDRO to recover regularization formulations in classification and regression. \citet{gao_distributionally_2016} proposed a minimizer based on a tractable approximation of the local worst-case risk. \citet{esfahani_data-driven_2017} used WDRO to formulate the search for the largest perturbation range as an optimization problem and solve its dual problem. \citet{sinha_certifying_2020} introduced a robustness certificate based on a Lagrangian relaxation of the loss function which is provably robust against adversarial input distributions within a Wasserstein ball centered around the original input distribution. \citet{Lau2022} suggested using the notion of Wasserstein barycenter to construct the nominal distribution in WDRO problems.


In the context of FL, several works have studied robustness from different perspectives. For example, \citet{reisizadeh_robust_2020} proposed an adversarial robust training method called FedRobust based on a minimax formulation involving the Wasserstein distance. \citet{deng_distributionally_2020} proposed DRFA, a communication-efficient distributionally robust algorithm based on periodic averaging techniques. \citet{mohri_agnostic_2019} and \citet{du_fairness-aware_2020} introduced agnostic FL frameworks using two-player adversarial minimax games between the learner and the adversary to achieve fairness. 
\section{Wasserstein Robust Federated Learning}
\subsection{Expected Risk and Empirical Risk Minimization in Federated Learning}
Consider $m$ clients where each client $i \in  [m] \defeq \set{1, \ldots, m}$ has its data generating  distribution $P_i$ supported on domain $\mathcal{Z}_i \defeq (\mathcal{X}_i, \mathcal{Y}_i)$. Consider the parametrized hypothesis class $\mathcal{H} = \bigC{h_{\theta} \mid \theta  \in \mathbb{R}^d}$, where each member $h_{\theta}$ is a mapping from $\mathcal{X}_i$ to $\mathcal{Y}_i$ parametrized by $\theta$.  With $z_i \defeq (x_i, y_i) \in \mathcal{Z}_i$, we use  $\ell(z_i, h_{\theta})$, shorthand for $\ell(y_i, h_{\theta}(x_i))$, to represent the cost of predicting $h_{\theta}(x_i)$ when the ground-truth label is $y_i$.  For example, if $h_{\theta}(x_i) = \theta^{\mathsf{T}} x_i $ and $y_i \in \mathbb{R}$, a square loss $\ell(z_i, h_{\theta}) = \ell(y_i,  h_{\theta}(x_i))=(\theta^{\mathsf{T}} x_i-y_i)^{2}$ can be considered.
In FL, all clients collaborate with a server to find a global model $\theta$ such that the weighted sum of risks is minimized:
\begin{align}
\min_{\theta \in \mathbb{R}^d} \SumLim{i=1}{m} \lambda_i \mathbf{E}_{Z_i \sim P_i} \bigS{\ell(Z_i, h_{\theta})},
\label{Prob:FL_total_risk}
\end{align}
where $\mathbf{E}_{Z_i \sim P_i} \bigS{\ell(Z_i, h_{\theta})}$ is client $i$'s expected risk and $\lambda_i \geq 0$ represents the relative ``weight'' of client $i$ satisfying $\SumNoLim{i=1}{m} \lambda_i = 1$.  Therefore, $\lambda \defeq [\lambda_1, \ldots, \lambda_m]^\top$ belongs to the simplex $\Delta \defeq \set{\lambda \in \mathbb{R}^m: \lambda \succcurlyeq 0 \text{~and~} \lambda^\top \1_m = 1}$. Define by ${P}_{\lambda} := \SumNoLim{i=1}{m} \lambda_i {P}_{i}$  the mixed clients' distribution over $m$ domains $\mathcal{Z} \defeq \set{\mathcal{Z}_1, \ldots, \mathcal{Z}_m}$.   We denote by  $Z \sim  {P}_{\lambda}$ a random data point $Z$ generated by ${P}_{\lambda}$, which means that the  domain of client $i$ is chosen with probability   $\mathbf{P}(Z = Z_i) = \lambda_i$ first, then a data point $z_i \in \gZ_i$ is selected with  probability $\mathbf{P}(Z_i = z_i)$,   $Z_i \sim P_i$.  

While the underlying distributions $P_i$ are unknown, clients have access to  finite observations $z_i \in [n_i]$. We abuse the notation $[n_i]$ to denote the set of  client $i$'s both observable data points and their  indexes. Let $\widehat{P}_{n_i} \defeq \frac{1}{n_i} \SumNoLim{z_i \in [n_i]}{} \delta_{z_{i}}$ be the empirical distribution of $P_i$, where $\delta_{z_{i}}$ is the Dirac point mass at $z_i$. In general, we use the notation $~\widehat{}~$ for quantities that are dependent on training data. Define by $\widehat{P}_{\lambda} := \SumNoLim{i=1}{m} \lambda_i \widehat{P}_{n_i}$  the mixed empirical distribution of $n \defeq \SumNoLim{i=1}{m} n_i$  training data from $m$ clients. The  empirical risk minimization (ERM) problem of \cref{Prob:FL_total_risk} is:
\begin{align}
\label{Prob:FL_ERM}
\min_{\theta \in  \mathbb{R}^d}
\biggC{\mathbf{E}_{Z \sim \widehat{P}_{\lambda}} 
	\bigS{\ell(Z,  h_{\theta})} =
	\SumLim{i=1}{m}
	\lambda_i
	\mathbf{E}_{Z_i \sim \widehat{P}_{n_i}}
	\bigS{\ell(Z_i,  h_{\theta})}
	= \SumLim{i=1}{m}
	\frac{n_i}{n} 
	\biggP{\frac{1}{n_i}
		\SumLim{z_i \in [n_i]}{}
		\ell(z_{i},h_{\theta})}
}, 
\end{align}
where $\lambda_i = n_i / n$ is typically chosen in the ERM of the standard FL \citep{mcmahan_communication-efficient_2017}. A detailed discussion on how to choose more appropriate values for $\lambda_i$ is found later in \Cref{sec:lambda,sec:experiment}.

\subsection{Wasserstein Robust Risk in Federated Learning}
\label{subsec:wasserstein_risk_fl}

Models resulting from \cref{Prob:FL_ERM} have been shown to be vulnerable to adversarial attacks and to lack of robustness to distribution shifts. We consider a robust variant of the ERM framework involving the worst-case risk with respect to the $p$-Wasserstein distance between two probability measures. Given a set $\mathcal{Z}$, define $d : \mathcal{Z} \times \mathcal{Z} \rightarrow [0, \infty)$ to be the cost of ``transportation'' between its two points.\footnote{The function $d$ must satisfy non-negativity, lower semi-continuity and $d(z, z) = 0, \forall z \in \mathcal{Z}$.} Suppose $P$ and $Q$ are two distributions on $\mathcal{Z}$. Let $\Pi(P, Q)$, called their couplings, be the set of joint probability measures $\pi$ on $\mathcal{Z} \times \mathcal{Z}$ whose marginals are $P$ and $Q$. In other words, $\pi(A, \mathcal{Z}) = P(A)$ and $\pi(\mathcal{Z}, A) = Q(A), \forall A \subset \gZ$. The {$p$-Wasserstein distance} between $P$ and $Q$ is defined as
\begin{align}
W_p(P, Q) = \inf_{\pi \in \Pi(P, Q)} \bigP{\mathbf{E}_{(Z, Z') \sim \pi}\bigS{d^p(Z, Z')}}^{1/p}.
\label{E:Wasserstein_distance}
\end{align}
This distance represents the minimum cost of transporting one distribution to another, where the cost of moving a unit point mass is determined by the ground metric on the space of uncertainty realizations. In this work, we mainly work with $p=2$. 

Let $\gB_p(P, \rho) \defeq \set{Q: W_p(P, Q) \leq \rho}$ denote the {Wasserstein ball}  centered at $P$ (i.e., nomimal distribution) and having radius $\rho \geq 0$. We modify \cref{Prob:FL_ERM} into  the following Wasserstein robust risk minimization in FL:
\begin{align}
\text{ \OurAlg:  } \underset{\theta  \in \mathbb{R}^d}{\mbox{min }} \BigC{\sup\nolimits_{Q \in \gB_p(\widehat{P}_{\lambda}, \rho)} \mathbf{E}_{Z' \sim Q} \bigS{\ell(Z',  h_{\theta})}}.
\label{eq:3_2_2}
\end{align}
There are several merits to this framework. First, the \emph{ambiguity set} $\gB_p( \widehat{P}_{\lambda}, \rho)$ contains all (continuous or discrete) distributions $Q$ that can be converted from the (discrete) nominal distribution $\widehat{P}_{\lambda}$ at a bounded transportation cost $\rho$. Second, Wasserstein distances can be approximated  from the samples. Based on  the non-asymptotic convergence results of ~\citet{fournier_rate_2013}, we can specify a suitable value for $\rho$ to probabilistically bound $W_p(P, Q)$ by the distance between their empirical distributions $W_p(\widehat{P},\widehat{Q})$ (e.g., for multi-source domain adaptation). 
\comment{Third, with a reasonable adjustment of ambiguity sets, the solution to WDRO problems promises to put a low risk on out-of-sample or out-of-distribution data. If the value of $\rho$ in \cref{eq:3_2_2} is carefully calibrated, we can estimate the worst-case risk of the unknown true distribution $P$ in the ambiguity set.
}

In any robust optimization problem, the ambiguity set  is a key ingredient to defining the level of robustness. We will compare \OurAlg in \cref{eq:3_2_2} with other approaches in terms of their ambiguity set, showing that the Wasserstein ambiguity set can easily be adjusted to cover other ambiguity sets, making \OurAlg more general and flexible than existing methods.

\begin{wrapfigure}{r}{0.36\textwidth} 
	\vspace{-20pt}
	\begin{center}
		\includegraphics[width=0.28\textwidth]{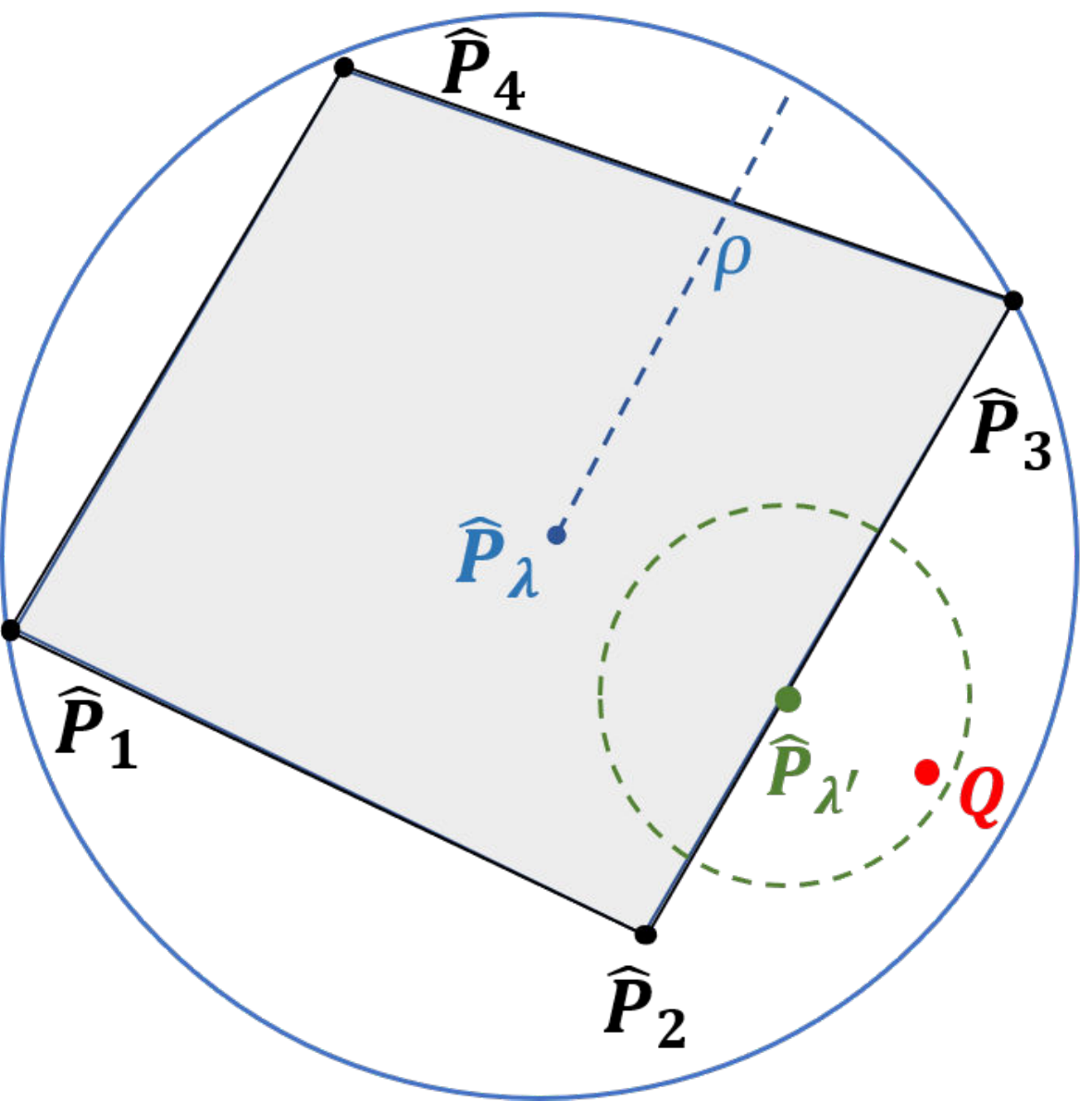}
	\end{center}
	
	\caption{Example of four FL clients with empirical data distributions $\widehat{P}_1, \ldots, \widehat{P}_4$. The shaded area (Agnostic FL's ambiguity set)  is covered by the blue ball with radius $\rho$ and centered at $\widehat{P}_{\lambda}$ (Wasserstein ambiguity set). For domain adaptation with $Q$ as a target domain, the nominal distribution (multi-source domain) is shifted to  $\widehat{P}_{\lambda'}$ such that $W_2(\widehat{P}_{\lambda'}, Q)$ is minimal.} 
	\label{F:wball} 
	\vspace{-25pt}
\end{wrapfigure}


\textbf{Agnostic  FL.} Using this approach, existing techniques \citep{mohri_agnostic_2019,deng_distributionally_2020} minimize the worst-case loss 
\begin{align*}
\underset{\lambda \in \Delta}{\mbox{max }} \mathbf{E}_{Z \sim \widehat{P}_{\lambda}} \bigS{\ell(Z,  h_{\theta})}, 
\end{align*}
hence its distributional ambiguity set is $\gQ_{\Delta} \defeq \bigC{\widehat{P}_{\lambda} :  \lambda \in \Delta}.$  While Agnostic FL's ambiguity set is  the static convex hull of $\bigC{\widehat{P}_{n_i}}_ {i\in [m]}$,  \OurAlg's ambiguity set $\gB( \widehat{P}_{\lambda}, \rho)$  can be adjusted by controlling the robustness level $\rho$ and by positioning the ball center using $\lambda$, which is useful for domain adaptation. Furthermore,  by controlling  $\rho$ and $\lambda$, we can flexibly \emph{enlarge  $\gB( \widehat{P}_{\lambda}, \rho)$ to cover $\gQ_{\Delta}$, or shrink it down to sufficiently include an arbitrary distribution $Q$ that is outside of the convex hull for domain adaptation} (see \Cref{F:wball}).
 
\textbf{Adversarial robust  FL.} \citet{reisizadeh_robust_2020} combined a general affine covariate shift in standard adversarial robust training with FL. Most existing techniques under this approach \citep{goodfellow_explaining_2015,  kurakin_adversarial_2017, carlini_towards_2017, madry_towards_2019, tramer_ensemble_2020} define an adversarial perturbation $u$ at a data point $Z$ and minimize the worst-case loss over all perturbations: $\max\nolimits_{u \in \gU} \mathbf{E}_{Z \sim \widehat{P}_{\lambda}} \big{[}\ell(Z + u,  h_{\theta})\big{]}$, where the ambiguity set is $\gU \defeq \set{u \in \mathbb{R}^{d+1} : \norm{u} \leq \eps}$.  In \Cref{Apx:adversarial}, we show that the Wasserstein ambiguity set can also contain the perturbation points induced by the solution to this adversarial robust training problem. 

\comment{\textbf{Adversarial robust  FL.} Using this approach,~\citet{reisizadeh_robust_2020} combines a general  affine covariate shift  in standard adversarial robust training with FL. Most existing techniques  for learning robust models in this approach~\citep{goodfellow_explaining_2015, papernot_limitations_2015, kurakin_adversarial_2017, carlini_towards_2017, madry_towards_2019, tramer_ensemble_2020} define an adversarial perturbation $u$ at a data point $Z$, and minimize the following worst-case loss over all possible perturbations
	\begin{equation}
	\underset{u \in \gU}{\mbox{max }} \mathbf{E}_{Z \sim \widehat{P}_{\lambda}} \Big{[}\ell(Z + u,  h_{\theta})\Big{]},  \label{E:U}
	\end{equation}
	where the ambiguity set  $\, \gU \defeq \set{u \in \mathbb{R}^{d+1} : \norm{u} \leq \eps}$.  To compare this approach with Wasserstein-robust FL,  we  relate the above problem to its counterpart defined  in the probability space of input as follows
	\begin{equation}
	\underset{\tilde{Q} \in \gQ(\eps)}{\mbox{max }} \mathbf{E}_{\tilde{Z} \sim \tilde{Q}} \Big{[}\ell(\tilde{Z},  h_{\theta})\Big{]},\label{E:Q}
	\end{equation}
	$\text{where } \gQ(\eps) \defeq \set{\tilde{Q} : \mathbb{P} \bigS{\norm{\tilde{Z} - Z} \leq \eps} = 1, Z \sim \widehat{P}_{\lambda}, \tilde{Z} \sim \tilde{Q}}.$
	Considering $u^*$ as a solution to problem \cref{E:U},  we see that the distribution $Q'$ of perturbation points (i.e., $\tilde{Z} \defeq Z+u^* \sim {Q}'$) in  problem \cref{E:U}   belongs to the feasible set  $\gQ(\eps)$ in problem \cref{E:Q} (If not, then $\mathbb{P} \bigS{\norm{u^*} \leq \eps} <1$, a contradiction). Next, consider an arbitrary  distribution $\tilde{Q} \in \gQ(\eps)$ in problem \cref{E:Q},  with any $\tilde{Z} \sim \tilde{Q}$ and $Z \sim \widehat{P}_{\lambda}$, we have
	\begin{align*}
	\norm{\tilde{Z} - Z} \stackrel{\text{w.p.1}}{\leq }\eps  \implies \mathbf{E}_{Z \sim \widehat{P}_{\lambda}, \tilde{Z} \sim  \tilde{Q}} \bigS{\norm{\tilde{Z} - Z}} \leq \eps \implies \inf_{\pi \in \Pi(\widehat{P}_{\lambda}, \tilde{Q})} \mathbf{E}_{(Z, Z') \sim \pi} \bigS{\norm{\tilde{Z} - Z}} \leq \eps,
	\end{align*}
	which implies that $ W_1(\widehat{P}_{\lambda}, \tilde{Q}) \leq \eps, \forall \tilde{Q} \in \gQ(\eps)$, and thus  $\gQ(\eps) \subset \gB_1(\widehat{P}_{\lambda}, \eps)$. We have shown that the Wasserstein ambiguity set contains the perturbation points induced by the solution to the adversarial robust training problem \cref{E:U}. 
}


\subsection{\OurAlg: Algorithm Design and Convergence Analysis} 
\label{sec:convergence}

%
%
%

The original form of \OurAlg in \cref{eq:3_2_2} is not friendly for distributed algorithm design. Fortunately, the \emph{Wasserstein robust risk} (or $\gB(\widehat{P}_{\lambda},\rho)$-worst-case risk) has its dual formulation as follows \citep{gao_distributionally_2016, sinha_certifying_2020}
\begin{align} \label{E:duality_Wass}
	\sup_{Q \in \gB(\widehat{P}_{\lambda},\rho)} \mathbf{E}_{Z' \sim Q} \bigS{\ell(Z',  h_{\theta})}
	= \inf_{\gamma \geq 0} \bigC{ \gamma \rho^2 +  \mathbf{E}_{Z \sim \widehat{P}_{\lambda}}\bigS{\phi_{\gamma}(Z, {\theta})}},
\end{align}
where $\phi_\gamma(z_i,  {\theta}) \defeq  \sup\nolimits_{\zeta \in \mathcal{Z}} \bigS{\ell(\zeta, h_\theta) - \gamma d^2(\zeta, z_i)}$, and $d^{2}(z, z') = \|x - x'\|^2 + \kappa \abs{y - y'}^2, \kappa > 0$. The crux of using the dual is that the inner supremum problem (finding $\phi_\gamma$) is easily solvable when its objective is well-conditioned: if $z \mapsto \ell(z, \cdot)$ is $L_{zz}$-smooth and $z \mapsto d(z,\cdot)$ is $1$-strongly convex, setting $\gamma > L_{zz}$ ensures that $\zeta \mapsto {\ell(\zeta, h_\theta) - \gamma d^2(\zeta, z)}$ is strongly concave, and using gradient ascent for the inner supremum problem (for finding $\phi_\gamma$) enjoys linear convergence. Therefore, instead of finding the optimal $\gamma^*$ to \eqref{E:duality_Wass} that may not satisfy $\gamma^* > L$, we  set $\gamma > L$  as a control hyperparameter to ensure there exists a unique solution $z_i^*$  to $\sup\nolimits_{\zeta \in \mathcal{Z}} \bigS{\ell(\zeta, h_\theta) - \gamma d^2(\zeta, z_i)}$ for each $z_i$, and thus $\nabla_{\theta} \phi_{\gamma}(z_i, \theta) =  \nabla_{\theta} \ell({z}_i^*, h_\theta)$ \citep[Lemma 1]{sinha_certifying_2020}. We will characterize the effect of \emph{sub-optimality of $\gamma$ to the excess risk} in Lemma~\ref{lem:excess_risk}. Then, we obtain the following client-decomposable  problem, which is amenable to distributed algorithm design:
\begin{align} \label{E:surrogate}
	\min_{\theta \in \mathbb{R}^d} \biggC{\mathbf{E}_{Z \sim \widehat{P}_{\lambda}} \bigS{\phi_{\gamma}(Z, {\theta})} 
		= \SumLim{i=1}{m}
		\lambda_i
		\mathbf{E}_{Z_i \sim \widehat{P}_{n_i}}
		\bigS{\phi_{\gamma}(Z_i,  {\theta})}}. 
\end{align}
This motivates the development of \Cref{alg:ouralg} for solving \cref{E:surrogate}. The structure of \OurAlg is similar to FedAvg with $T$ communication rounds and three additional key components. First, \emph{client sampling} (line \ref{alg:client_sampling}) refers to the partial participation of clients in each global round. Second, each client performs $K$ \emph{local steps} (line \ref{alg:local_steps_start}) before sending  its local model to the server. Finally, \emph{stochastic approximation} of a client's gradient using a mini-batch (lines \ref{alg:stochastic_grad_a} and \ref{alg:stochastic_grad_c}) is necessary when the data size is large.  The main difference between \OurAlg and FedAvg is that \OurAlg aims to minimize the risk with respect to the surrogate loss $\phi_{\gamma}$, rather than $\ell$.
We show that the convergence of \OurAlg can be similarly characterized as that of FedAvg, the de facto FL algorithm based on local SGD updates~\citep{mcmahan_communication-efficient_2017}. In FedAvg optimization, we seek to establish the convergence when using the original loss function $\ell$. On the other hand, in \OurAlg the convergence is with respect to the surrogate loss $\phi_{\gamma}$, through which the local and global risks are defined by $F_i(\theta) \defeq \mathbf{E}_{Z_i \sim {P}_i}\bigS{\phi_{\gamma}(Z_i, {\theta})}$ and $F(\theta) \defeq \mathbf{E}_{Z \sim {P}_{\lambda}}\bigS{\phi_{\gamma}(Z, {\theta})}$, respectively.

{\small
\begin{algorithm}[t]
	\DontPrintSemicolon
	\SetAlgoNlRelativeSize{0}
	\SetNlSty{text}{}{:}
	\nl \For(\tcp*[f]{Global rounds}){$t=0,1,...,T-1$}{
		Sample a subset of clients $S_t \subset \bigS{m}$ \label{alg:client_sampling} \;
		
		\For{client $i \in S_t$ in parallel}{
			Set local parameters: $\theta_i^{(t, 0)}= \theta^{t}$  

			\For(\tcp*[f]{Local rounds}){$k=0,1,...,K-1$}{ \label{alg:local_steps_start}
				Sample a mini-batch $\gD_i$ from client $i$'s dataset \label{alg:stochastic_grad_a} \;
				
				$\theta_i^{(t, k+1)} = \theta_i^{(t, k)} - \frac{\eta}{|\gD_i|} \underset{{z_i \in \gD_i}}{\sum} \nabla_{\theta} \phi({z}_i, \theta_i^{(t, k)})$ \label{alg:stochastic_grad_c}
			}
			Send $\theta_i^{(t, K)}$ to server 
		}
		{\textbf{Server:} update $\theta^{(t+1)} = \SumNoLim{i \in S_t}{}\lambda_i \theta_i^{(t, K)} / \SumNoLim{i \in S_t}{} \lambda_i$}
	}
	\caption{Local SGD for \OurAlg}
	\label{alg:ouralg}

\end{algorithm}
}
	\vspace{-5pt}
We first make the following assumptions, common to analyses of Wasserstein-robust optimization~\citep{sinha_certifying_2020}. Unless stated otherwise, all norms are the Euclidean norm. 

\begin{assumption} \label{Assumption:continuous_distance}
	The function $d: \mathcal{Z} \times \mathcal{Z} \rightarrow \mathbb{R}_{+}$ is continuous, and $d(\cdot, z_{0})$ is $1$-strongly convex, $\forall z_{0} \in \mathcal{Z}$.
\end{assumption}

\begin{assumption} \label{Assumption:Lip_cont}
	The loss function $\ell: \mathcal{Z} \times \mathcal{H} \rightarrow \mathbb{R}$ is Lipschitz continuous as follows
 	\begin{align*}
 	(a)\; \norm{ \ell(z, h_{\theta}) - \ell(z', h_{\theta})} \leq L_{z} \norm{z - z'}; \quad \quad \quad
 	(b)\;  \norm{ \ell(z, h_{\theta}) -  \ell(z, h_{\theta'})} \leq L_{\theta} \norm{\theta - \theta'}.
 	\end{align*}
\end{assumption}

\begin{assumption} \label{Assumption:smooth_loss}
	The loss function $\ell: \mathcal{Z} \times \mathcal{H} \mapsto \mathbb{R}$ is Lipschitz  smooth as follows
	\begin{align*}
	&(a)\; \norm{\nabla_{\theta} \ell(z, h_{\theta}) - \nabla_{\theta} \ell(z, h_{\theta'})} \leq  L_{\theta\theta} \norm{\theta - \theta'}; \quad 
	(b)\; \norm{\nabla_{z} \ell(z, h_{\theta}) - \nabla_{z} \ell(z', h_{\theta})}  \leq  L_{zz} \norm{z - z'};\\
	&(c)\; \norm{\nabla_{\theta} \ell(z, h_{\theta}) - \nabla_{\theta} \ell(z', h_{\theta})}  \leq  L_{\theta z} \norm{z - z'}; \quad 
	(d)\;\norm{\nabla_{z} \ell(z, h_{\theta}) - \nabla_{z} \ell(z, h_{\theta'})}  \leq  L_{z\theta} \norm{\theta - \theta'}.
\end{align*}
\end{assumption}
Given \Cref{Assumption:smooth_loss}, it has been shown  that the mapping $\theta \mapsto \phi_{\gamma}(\cdot, {\theta})$ is $L$-smooth with $L = L_{\theta \theta} + \frac{L_{\theta z} L_{z \theta}}{\gamma - L_{zz}}, \gamma > L_{zz}$ (\citet{sinha_certifying_2020}, more detail in \Cref{Lemma:f_smooth} in \Cref{proof:thr1}). In addition, we make the following assumptions common to FL analysis \citep{wang_field_2021}. 


\begin{assumption}
	\label{Assumption:bounded_variance}
	The unbiased stochastic approximation of $\nabla F_i(\theta)$, denoted by $g_{\phi_i}(\theta) \defeq \nabla_{\theta} \phi_{\gamma}(z_i, {\theta}), z_i \sim {P}_i$, has $\sigma^2$-uniformly bounded variance, i.e.,  $	\mathbf{E}\BigS{\norm{g_{\phi_i}(\theta) - \nabla F_i(\theta)}^2} \leq \sigma^2.$
\end{assumption}


\begin{assumption} 
	\label{Assumption:bounded_gradient_surrogate}
	The difference between the local gradient $\nabla F_i(\theta)$ and the global gradient $\nabla F(\theta)$ is $\Omega$-uniformly bounded, i.e., $\max_{i} \sup_{\theta} \norm{\nabla F_i(\theta) - \nabla F(\theta)} \leq \Omega.$
\end{assumption}
Assuming complete participation of clients in every round ($S_t = [m], \forall t$), using standard techniques in \cite{wang_field_2021}, we have:
\begin{theorem}[\OurAlg's convergence for convex loss function]\label{Thrm:Convergence}
	Let \Cref{Assumption:continuous_distance,Assumption:Lip_cont,Assumption:smooth_loss,Assumption:bounded_variance,Assumption:bounded_gradient_surrogate} hold and the mapping $\theta \mapsto \ell(z, h_{\theta})$ be convex. Denote by $\bar{\theta}^{(t, k)}$ the ``shadow'' sequence, defined as $\bar{\theta}^{(t, k)} = \SumNoLim{i=1}{m}\lambda_i \theta_i^{(t, k)}$ and by ${\theta}^*$ the {optimal solution} to $\min_{\theta \in \mathbb{R}^d} F(\theta)$. If the learning rate $\eta$ is at most
    \begin{align*}
    	\min\biggC{\frac{1}{3L}, \frac{ D}{2 \sqrt{ K T\Lambda} \widehat{\sigma}}, \frac{D^{\frac{2}{3}}}{48^{\frac{1}{3}}K^{\frac{2}{3}} T^{\frac{1}{3}} L^{\frac{1}{3}} \bar{\Omega}^{\frac{2}{3}}},\frac{D^{\frac{2}{3}}}{40^{\frac{1}{3}} K T^{\frac{1}{3}} L^{\frac{1}{3}} \bar{\Omega}^{\frac{2}{3}}}},
	\end{align*}
	then we have
	\begin{align} 
	\mathbb{E}\biggS{\frac{1}{KT} \SumLim{t=0}{T-1} \SumLim{k=0}{K-1}  F(\bar{\theta}^{(t, k)}) - F({\theta}^*)}
	\leq  \gO \biggP{{\frac{ L D^2}{KT} + 
		\frac{ \widehat{\sigma} D \Lambda^{\frac{1}{2}}}{\sqrt{ KT}}} + 
	{\frac{ L^{\frac{1}{3}} \bar{\Omega}^{\frac{2}{3}} D^{\frac{4}{3}}}{K^{\frac{1}{3}} T^{\frac{2}{3}}} + 
		\frac{ L^{\frac{1}{3}} \bar{\Omega}^{\frac{2}{3}} D^{\frac{4}{3}}}{T^{\frac{2}{3}}}}}, \nonumber
	\end{align}
	where $\hat{\sigma}^2 \defeq\frac{\sigma^2 }{\abs{\gD_i}} { }$,  $D \defeq \norm{\theta^{(0)} - {\theta}^*}$ and $\Lambda \defeq { \SumNoLim{i=1}{m} \lambda_i^2}$. 
\end{theorem}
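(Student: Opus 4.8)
The plan is to reduce \Cref{Thrm:Convergence} to the standard convergence analysis of local SGD for smooth convex objectives (as in \citet{wang_field_2021}), after verifying that the surrogate objective $F$ inherits the two properties the analysis needs: $L$-smoothness and convexity. Smoothness of $\theta \mapsto \phi_\gamma(\cdot, \theta)$ with constant $L = L_{\theta\theta} + \frac{L_{\theta z} L_{z\theta}}{\gamma - L_{zz}}$ is already available from \citet{sinha_certifying_2020}. For convexity I would note that, for each fixed $z$, $\phi_\gamma(z, \theta) = \sup_{\zeta \in \mathcal{Z}}[\ell(\zeta, h_\theta) - \gamma d^2(\zeta, z)]$ is a pointwise supremum over $\zeta$ of functions that are convex in $\theta$ (the term $-\gamma d^2(\zeta, z)$ does not depend on $\theta$, and $\theta \mapsto \ell(\zeta, h_\theta)$ is convex by hypothesis); hence $\phi_\gamma(z, \cdot)$ is convex, and $F(\theta) = \mathbf{E}_{Z \sim P_\lambda}[\phi_\gamma(Z, \theta)]$, a $\lambda$-weighted average of such functions, is convex with a well-defined minimizer $\theta^*$.

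Next I would introduce the shadow iterate $\bar\theta^{(t,k)} = \sum_{i} \lambda_i \theta_i^{(t,k)}$ and track its one-step recursion. Under full participation the averaged update is $\bar\theta^{(t,k+1)} = \bar\theta^{(t,k)} - \eta \sum_i \lambda_i g_{\phi_i}(\theta_i^{(t,k)})$, where each $g_{\phi_i}$ is the mini-batch stochastic gradient, unbiased for $\nabla F_i$ with variance reduced to $\hat\sigma^2 = \sigma^2/|\mathcal{D}_i|$ by \Cref{Assumption:bounded_variance}. Expanding $\mathbf{E}\|\bar\theta^{(t,k+1)} - \theta^*\|^2$, the inner-product term is controlled via convexity and $L$-smoothness of $F$ to produce a progress term $-2\eta(F(\bar\theta^{(t,k)}) - F^*)$; the stochastic second moment contributes a variance term scaling like $\eta^2 \Lambda \hat\sigma^2$, the factor $\Lambda = \sum_i \lambda_i^2$ arising from the independent per-client noise in the weighted average; and the remaining error is the \emph{client-drift} quantity $\sum_i \lambda_i \mathbf{E}\|\theta_i^{(t,k)} - \bar\theta^{(t,k)}\|^2$.

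The crux of the argument — and the step I expect to be the main obstacle — is bounding this client drift. Because each client takes $K$ local steps between communications, $\theta_i^{(t,k)}$ drifts from $\bar\theta^{(t,k)}$; I would bound $\mathbf{E}\|\theta_i^{(t,k)} - \bar\theta^{(t,k)}\|^2$ by unrolling the local recursion from the synchronization point $k=0$ and invoking the gradient-dissimilarity bound $\Omega$ (\Cref{Assumption:bounded_gradient_surrogate}) together with $L$-smoothness and the bounded variance. The resulting estimate splits into two qualitatively different pieces — one growing linearly in the number of local steps $K$ and one growing quadratically — which, once fed back through the smoothness coupling, appear in the master inequality as separate $\eta^2 L \bar\Omega^2$-type terms of order $K$ and $K^2$; these are exactly what will later produce the two distinct $T^{-2/3}$ error terms (the one carrying $K^{-1/3}$ and the one without).

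Finally, I would substitute the drift bound into the one-step inequality, telescope over $k = 0,\dots,K-1$ and $t = 0,\dots,T-1$, and divide by $2\eta KT$. Using convexity (Jensen) on the left to replace the averaged-iterate gap by the running average of $F(\bar\theta^{(t,k)}) - F^*$, one reaches a master inequality of the form $\frac{D^2}{2\eta KT} + c_1 \eta \Lambda \hat\sigma^2 + c_2 \eta^2 L \bar\Omega^2 (c_3 K + c_4 K^2)$. The stated rate then follows by choosing $\eta$ as the minimum of the four displayed candidates, each of which balances the initial-condition term $\frac{D^2}{\eta KT}$ against one of the variance/drift terms; plugging the active constraint into the master inequality reproduces $\frac{L D^2}{KT}$, $\frac{\hat\sigma D \Lambda^{1/2}}{\sqrt{KT}}$, $\frac{L^{1/3}\bar\Omega^{2/3} D^{4/3}}{K^{1/3} T^{2/3}}$, and $\frac{L^{1/3}\bar\Omega^{2/3} D^{4/3}}{T^{2/3}}$. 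The only genuinely new ingredient relative to the vanilla FedAvg analysis is the verification that the surrogate $\phi_\gamma$ satisfies the smoothness and convexity hypotheses; the remainder is careful but standard bookkeeping of the local-SGD recursion.
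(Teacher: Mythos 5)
Your proposal follows essentially the same route as the paper's proof: verify that the surrogate $\phi_\gamma$ is $L$-smooth (via \citet{sinha_certifying_2020}) and convex, track the shadow sequence $\bar\theta^{(t,k)}$ through a one-step progress inequality with variance term $2\eta\hat\sigma^2\Lambda$, bound the client drift by $\eta^2\bar\Omega^2(c K + c' K^2)$ using \Cref{Assumption:bounded_gradient_surrogate}, telescope to the master inequality $\frac{D^2}{2\eta KT} + 2\eta\hat\sigma^2\Lambda + \eta^2 L(24K^2+20K)\bar\Omega^2$, and tune $\eta$ over the four candidates. The only cosmetic differences are that you bound the drift $\mathbf{E}\|\theta_i^{(t,k)}-\bar\theta^{(t,k)}\|^2$ by unrolling directly from the synchronization point where the paper bounds pairwise client differences and passes to the average by convexity, and that you make explicit (via the pointwise-supremum argument for $\zeta \mapsto \ell(\zeta,h_\theta)-\gamma d^2(\zeta,z)$) the convexity of $\theta \mapsto \phi_\gamma(z,\theta)$, which the paper merely asserts.
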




\section{Robust Generalization Bounds} \label{Generalization}

We show the generalization and robustness properties of \OurAlg's output by bounding its excess risk. Denote the loss class by $\gF \defeq \ell \,\circ \, \gH \defeq \set{ z \mapsto \ell(z,h), h \in \gH }$, where we  use $f  \text{ (resp. $f_{\theta}$)} \in \gF$ to represent a generic loss (resp. a loss function parametrized by $\theta$). 
\begin{definition}
	Denote the expected risk and surrogate of Wasserstein robust risk  of an arbitrary $f$, respectively, as
	\begin{align*}
		\mathscr{L}(P_{\lambda}, f)  \defeq  \mathbf{E}_{Z \sim P_{\lambda}} \bigS{\ell(Z,  h)} \quad \text{  and  } \quad
		\mathscr{L}_{\rho}^{\gamma} (P_{\lambda},f)  \defeq \mathbf{E}_{Z \sim {P_{\lambda}}}\Big{[}\phi_\gamma(Z, f)\Big{]} + \gamma \rho^2.
	\end{align*}
	Then their excess risks are defined respectively as follows
	\begin{align*}
		\mathscr{E}(P_{\lambda}, f) \defeq \mathscr{L}(P_{\lambda}, f) - \inf_{f' \in \mathcal{F}} \mathscr{L}(P_{\lambda}, f') \quad \text{  and  } \quad 
		\mathscr{E}_{\rho}^{\gamma}(P_{\lambda}, f) \defeq \mathscr{L}_{\rho}^{\gamma}(P_{\lambda}, f) - \inf_{f' \in \mathcal{F}} \mathscr{L}_{\rho}^{\gamma}(P_{\lambda}, f'). 
	\end{align*}
\end{definition}
If a distribution $Q$ is in the ambiguity set $\gB(P_{\lambda}, \rho)$, we can bound its excess risk $\mathscr{E}(Q, f)$ as follows.
\begin{lemma} \label{lem:excess_risk}
	Let \Cref{Assumption:Lip_cont} (a) holds and  $\gamma \geq L_z/\rho $. For all $f \in \gF$ and for all  $Q \in \gB(P_{\lambda}, \rho),$
	\begin{align*}
		\mathscr{E}_{\rho}^{\gamma}(P_{\lambda}, f)  - g(\rho, \gamma) &\leq \mathscr{E}(Q, f)  \leq  \mathscr{E}_{\rho}^{\gamma}(P_{\lambda}, f) + g(\rho, \gamma),
	\end{align*}
	where $g(\rho, \gamma) \defeq 2L_z \rho   + \abs{\gamma - \gamma^*} {\rho}^2$, and $ \gamma^* \defeq \argmin_{\gamma' \geq 0} \mathscr{L}_{\rho}^{\gamma'} (P_{\lambda},f)$. 
\end{lemma}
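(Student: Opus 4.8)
The plan is to reduce the two-sided excess-risk bound to a single \emph{pointwise} estimate comparing the true risk $\mathscr{L}(Q,f)$ against the surrogate robust risk $\mathscr{L}_{\rho}^{\gamma}(P_{\lambda},f)$, holding for every $f\in\gF$ at once, and then to propagate this estimate through the two infima that define the excess risks. The engine behind the pointwise estimate combines three ingredients: a Wasserstein--Lipschitz transfer inequality, the strong duality in \cref{E:duality_Wass}, and a \emph{one-sided} control of the suboptimality of the chosen $\gamma$ relative to the optimal dual variable $\gamma^*$.

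First I would establish the transfer inequality: for any $Q\in\gB(P_{\lambda},\rho)$ and any $f$, taking a coupling $\pi$ attaining $W_2(P_{\lambda},Q)$ and invoking \Cref{Assumption:Lip_cont}(a) gives $\abs{\mathscr{L}(Q,f)-\mathscr{L}(P_{\lambda},f)} \le L_z\,\mathbf{E}_{\pi}\bigS{\norm{Z-Z'}} \le L_z W_2(P_{\lambda},Q)\le L_z\rho$, where the last step is Jensen's inequality together with the comparison between $\norm{\cdot}$ and the ground metric $d$. Taking the supremum over $Q\in\gB(P_{\lambda},\rho)$ and recalling that the duality \cref{E:duality_Wass} (with nominal distribution $P_{\lambda}$) identifies $\mathscr{L}_{\rho}^{\gamma^*}(P_{\lambda},f)=\sup_{Q\in\gB(P_{\lambda},\rho)}\mathscr{L}(Q,f)$, this sandwiches the robust risk as $\mathscr{L}(P_{\lambda},f)\le \mathscr{L}_{\rho}^{\gamma^*}(P_{\lambda},f)\le \mathscr{L}(P_{\lambda},f)+L_z\rho$. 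Combining the two displays yields, for every $Q\in\gB(P_{\lambda},\rho)$, the clean sandwich $\mathscr{L}_{\rho}^{\gamma^*}(P_{\lambda},f)-2L_z\rho \le \mathscr{L}(Q,f)\le \mathscr{L}_{\rho}^{\gamma^*}(P_{\lambda},f)$, whose gap $2L_z\rho$ is independent of $f$.

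Next I would convert the optimal $\gamma^*$ into the chosen $\gamma$. Since $\gamma\mapsto\phi_{\gamma}(z,\cdot)$ is nonincreasing, for any $\gamma\ge\gamma^*$ we get $\mathscr{L}_{\rho}^{\gamma}(P_{\lambda},f)-\mathscr{L}_{\rho}^{\gamma^*}(P_{\lambda},f)=\mathbf{E}_{P_{\lambda}}\bigS{\phi_{\gamma}-\phi_{\gamma^*}}+(\gamma-\gamma^*)\rho^2\le(\gamma-\gamma^*)\rho^2=\abs{\gamma-\gamma^*}\rho^2$, while optimality of $\gamma^*$ makes the left-hand side nonnegative. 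The hypothesis $\gamma\ge L_z/\rho$ is precisely what guarantees $\gamma\ge\gamma^*$: at the inner maximizer $\zeta^*$ one has $\gamma d^2(\zeta^*,z)\le \ell(\zeta^*,h)-\ell(z,h)\le L_z\norm{\zeta^*-z}\le L_z\, d(\zeta^*,z)$, hence $d^2(\zeta^*,z)\le (L_z/\gamma)^2$; the first-order condition $\mathbf{E}_{P_{\lambda}}\bigS{d^2(\zeta^*_{\gamma^*},Z)}=\rho^2$ at the interior optimum then forces $\gamma^*\le L_z/\rho\le\gamma$ (the boundary case $\gamma^*=0$ being immediate). Substituting this two-sided control of $\mathscr{L}_{\rho}^{\gamma^*}$ into the previous sandwich produces the key pointwise estimate $\mathscr{L}_{\rho}^{\gamma}(P_{\lambda},f)-g(\rho,\gamma)\le \mathscr{L}(Q,f)\le \mathscr{L}_{\rho}^{\gamma}(P_{\lambda},f)$ with $g(\rho,\gamma)=2L_z\rho+\abs{\gamma-\gamma^*}\rho^2$.

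Finally I would pass to the excess risks. The estimate above says $\mathscr{L}(Q,\cdot)\le\mathscr{L}_{\rho}^{\gamma}(P_{\lambda},\cdot)$ pointwise with a one-signed gap of at most $g(\rho,\gamma)$; taking infima over $f'\in\gF$ therefore gives $\inf_{f'}\mathscr{L}_{\rho}^{\gamma}(P_{\lambda},f')-g(\rho,\gamma)\le\inf_{f'}\mathscr{L}(Q,f')\le\inf_{f'}\mathscr{L}_{\rho}^{\gamma}(P_{\lambda},f')$. Subtracting this from the pointwise estimate for $f$, both differences $\mathscr{L}(Q,f)-\mathscr{L}_{\rho}^{\gamma}(P_{\lambda},f)$ and $\inf_{f'}\mathscr{L}(Q,f')-\inf_{f'}\mathscr{L}_{\rho}^{\gamma}(P_{\lambda},f')$ lie in $[-g(\rho,\gamma),0]$, so their difference---which is exactly $\mathscr{E}(Q,f)-\mathscr{E}_{\rho}^{\gamma}(P_{\lambda},f)$---lies in $[-g(\rho,\gamma),g(\rho,\gamma)]$, giving the claim. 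I expect the main obstacle to be the one-sided suboptimality step: making rigorous that $\gamma\ge L_z/\rho$ implies $\gamma\ge\gamma^*$ (via the maximizer bound $d(\zeta^*,z)\le L_z/\gamma$ and the envelope characterization of $\gamma^*$), and, more delicately, that the factor $\abs{\gamma-\gamma^*}\rho^2$ survives the passage to the infimum even though $\gamma^*$ depends on $f$---which strictly needs uniform control of $\gamma^*$ over $\gF$ (or interpreting $g$ via the supremal gap), a subtlety the statement suppresses.
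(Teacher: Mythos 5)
Your proposal is correct and takes essentially the same route as the paper's proof: the paper's Facts 1 and 2 are exactly your pointwise sandwich (upper side exact via strong duality, lower side costing $g(\rho,\gamma)=2L_z\rho+\abs{\gamma-\gamma^*}\rho^2$) passed through the two infima, with your direct coupling/Jensen argument replacing the paper's Kantorovich--Rubinstein plus triangle-inequality step, and your monotonicity-of-$\phi_\gamma$ argument together with the maximizer bound $d(\zeta^*,z)\le L_z/\gamma$ reproducing the estimate $\gamma^*\le L_z/\rho$ that the paper imports from Lemma~1 of \citet{lee_minimax_2018}. The subtlety you flag at the end---that $\gamma^*$ depends on $f$, so the passage to $\inf_{f'\in\gF}$ strictly requires the uniform control $\gamma^*(f')\le L_z/\rho$ over $f'\in\gF$---is genuine, and the paper's Facts 1(b) and 2(b) silently gloss it in exactly the same way.
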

\begin{remark}
	\Cref{lem:excess_risk} shows that the lower and upper bounds for $\mathscr{E}(Q, f)$ can be analyzed using  $\mathscr{E}_{\rho}^{\gamma}(P_{\lambda}, f)$  and a two-component error term $g(\rho, \gamma)$ capturing the impact of the control parameters $\rho$ and $\gamma$. Particularly, the first component, $2L_z \rho$, says that when $\rho$ is increased -- to allow for a larger  Wasserstein distance between the nominal $P_{\lambda}$ and any worst-case distribution $Q$ -- the difference between the excess risks $\mathscr{E}(Q, f)$  and  $\mathscr{E}_{\rho}^{\gamma}(P_{\lambda}, f)$   increases, and this error is amplified at most by the Lipschitz constant $L_z$ of the mapping $z \mapsto \ell(z, \cdot)$. The second component, $\abs{\gamma - \gamma^*} \rho^2$, addresses the sub-optimality error of a chosen value of $\gamma$, which is amplified when  $\gamma$ is drifted away from the optimal $\gamma^*$. Note that $\mathscr{L}_{\rho}^{\gamma^*} (P_{\lambda},f)$ is the same as $\gB(P_{\lambda},\rho)$-worst-case risk thanks to the strong duality in \cref{E:duality_Wass}, obtained with $\rho > 0$. 
\end{remark}

Denote by $\widehat{\theta}^{\epsilon} \in \Theta$    an  $\varepsilon$-minimizer to  the surrogate ERM, i.e., $\mathbf{E}_{Z \sim \widehat{P}_{\lambda}} \bigS{\phi_\gamma(Z, f_{\widehat{\theta}^{\varepsilon}})}  \leq \inf\nolimits_{\theta \in \Theta}  {\mathbf{E}_{Z \sim \widehat{P}_{\lambda}} \bigS{\phi(Z,  f_{\theta})}   } + \varepsilon$, where $\Theta \subset \mathbb{R}^d$ is a parameter class, we obtain the following. 

\begin{theorem}[Robust generalization bounds] \label{thrm:excess_risk} 
	Let  \Cref{Assumption:Lip_cont,Assumption:smooth_loss} hold, $\gamma \geq \max\bigC{L_{zz}, L_z/\rho}$, and $\abs{\ell(z,h) } \leq M_{\ell}$.  We have the following result for all $Q \in \gB(P_{\lambda}, \rho)$
	\begin{align*}
	    \mathscr{E}(Q, {f}_{\widehat{\theta}^{\varepsilon}}) \leq    \sum_{i=1}^{m} \lambda_i \BiggS{\frac{48 \mathscr{C}(\Theta)}{\sqrt{n_i}} + 2 M_{\ell} \sqrt{\frac{2 \log (2  m / \delta)}{n_i}}}
	     +\varepsilon + g(\rho, \gamma)
	\end{align*}
	with probability at least $1 - \delta$, where
	$\mathscr{C}(\Theta):= L_{\theta} \int_{0}^{\infty} \sqrt{\log \mathcal{N}\left(\Theta,\|\cdot\|_{\Theta}, \epsilon \right)} \mathrm{d} \epsilon \,$ and  $\mathcal{N}\left(\Theta,\|\cdot\|_{\Theta}, \epsilon \right)$ denotes the $\epsilon$-covering number of $\Theta$ w.r.t  a  metric $\|\cdot\|_{\Theta}$ as the norm on $\Theta$.
\end{theorem}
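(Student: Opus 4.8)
The goal is to bound the true excess risk $\mathscr{E}(Q, f_{\widehat{\theta}^{\varepsilon}})$ for any $Q$ in the Wasserstein ball, using quantities measurable on the training data. The plan is to chain together three reductions. First, apply Lemma~\ref{lem:excess_risk} to pass from the target distribution $Q$ to the surrogate excess risk at the nominal distribution, absorbing the cost into $g(\rho, \gamma)$; this gives $\mathscr{E}(Q, f_{\widehat{\theta}^{\varepsilon}}) \leq \mathscr{E}_{\rho}^{\gamma}(P_{\lambda}, f_{\widehat{\theta}^{\varepsilon}}) + g(\rho, \gamma)$. Second, decompose the population surrogate excess risk $\mathscr{E}_{\rho}^{\gamma}(P_{\lambda}, f_{\widehat{\theta}^{\varepsilon}})$ into a uniform generalization gap between population and empirical surrogate risks, plus the empirical sub-optimality controlled by $\varepsilon$. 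The $\varepsilon$-minimizer definition handles the optimization error directly, so the real work is the uniform deviation term.

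**Controlling the generalization gap.**

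For the generalization gap I would bound $\sup_{\theta \in \Theta} \bigl| \mathbf{E}_{Z \sim P_{\lambda}}[\phi_\gamma(Z, f_\theta)] - \mathbf{E}_{Z \sim \widehat{P}_{\lambda}}[\phi_\gamma(Z, f_\theta)] \bigr|$. Because $P_{\lambda} = \sum_i \lambda_i P_i$ and $\widehat{P}_{\lambda} = \sum_i \lambda_i \widehat{P}_{n_i}$, this splits into a $\lambda_i$-weighted sum of per-client deviations, each over an independent sample of size $n_i$. For each client I would invoke a standard uniform-convergence bound: a bounded-differences (McDiarmid) argument yields a $M_\ell \sqrt{2\log(2m/\delta)/n_i}$ concentration term (with a union bound over the $m$ clients contributing the $\log(2m/\delta)$ factor and the constant $2$), and a Rademacher/Dudley chaining argument bounds the expected supremum by the entropy integral $\mathscr{C}(\Theta)$. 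The key structural fact enabling this is that $\theta \mapsto \phi_\gamma(\cdot, \theta)$ is Lipschitz in $\theta$: since $\phi_\gamma(z, \theta) = \sup_\zeta [\ell(\zeta, h_\theta) - \gamma d^2(\zeta, z)]$ is a supremum of functions that are each $L_\theta$-Lipschitz in $\theta$ (by Assumption~\ref{Assumption:Lip_cont}(b)), the supremum inherits the same Lipschitz constant $L_\theta$. This is precisely why $\mathscr{C}(\Theta)$ carries the factor $L_\theta$ and is expressed via the covering number of the parameter space $\Theta$ rather than the loss class directly, and it is what lets Dudley's entropy integral be pulled through the Lipschitz contraction. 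The factor $48$ presumably arises from the constants in the symmetrization plus chaining steps.

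**The main obstacle and assembly.**

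The hard part is justifying that the generalization argument can be applied to the surrogate loss $\phi_\gamma$ rather than the original loss $\ell$: one must verify that $\phi_\gamma$ is itself uniformly bounded (so McDiarmid applies) and $L_\theta$-Lipschitz in $\theta$ uniformly over $z$, which is where the smoothness hypotheses and the condition $\gamma \geq L_{zz}$ enter --- the latter guarantees the inner supremum defining $\phi_\gamma$ is attained at a unique point, so $\phi_\gamma$ is well-defined and well-behaved. The boundedness $|\phi_\gamma(z,\theta)| \leq M_\ell$ should follow from $|\ell| \leq M_\ell$ together with the fact that $d^2 \geq 0$ makes $\phi_\gamma \leq \sup_\zeta \ell(\zeta, h_\theta) \leq M_\ell$, while the lower bound comes from taking $\zeta = z$. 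Once these structural properties are in place, the three reductions compose: combining the per-client bounds with weights $\lambda_i$, adding the optimization slack $\varepsilon$, and finally adding $g(\rho,\gamma)$ from the Wasserstein-ball passage yields exactly the stated inequality, holding with probability $1 - \delta$ after the union bound over clients. The condition $\gamma \geq L_z/\rho$ is inherited from Lemma~\ref{lem:excess_risk} and is needed only for the first reduction.
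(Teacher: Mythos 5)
Your proposal is correct and follows essentially the same route as the paper's proof: reduce via Lemma~\ref{lem:excess_risk} to the surrogate excess risk at $P_{\lambda}$, decompose into a two-sided uniform deviation term plus the $\varepsilon$ optimization slack, split the mixture into $\lambda_i$-weighted per-client suprema handled by symmetrization, McDiarmid with a union bound over the $m$ clients, and Dudley's entropy integral transferred from the class $\Phi = \phi_\gamma \circ \mathcal{F}$ to $\Theta$ via the $L_\theta$-Lipschitzness of $\theta \mapsto \phi_\gamma(z, f_\theta)$. You also correctly identify the two structural verifications the paper makes --- $\abs{\phi_\gamma} \leq M_\ell$ (upper bound from $d^2 \geq 0$, lower bound from $\zeta = z$) and the Lipschitz property surviving the inner supremum --- as well as the origin of the constant $48$ (symmetrization factor $4$ times the Dudley constant $12$).
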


The proof of \Cref{thrm:excess_risk} leverages \Cref{lem:excess_risk} to bound $\mathscr{E}(Q, f), \forall Q \in \gB(P_{\lambda}, \rho),$  based on the bound of  $\mathscr{E}_{\rho}^{\gamma}(P_{\lambda}, {f}_{\widehat{\theta}^{\varepsilon}})$. The result shows using \OurAlg to minimize the surrogate of {Wasserstein robust empirical risk} also controls robustness and generalization. For example, $\mathcal{H} = \bigC{\innProd{\theta, \cdot}, \theta \in \Theta}$, $\Theta = \bigC{\theta \in \mathbb{R}^d:  \norm{\theta}_2 \leq C}$.  The diameter of $\Theta$ is $ \sup_{\theta, \theta' \in \Theta} \norm{\theta - \theta'} = 2C$, thus $\mathcal{N}\left(\Theta,\|\cdot\|_2, \epsilon \right) = \nbigP{1 + {2C}/{\epsilon}}^d$, and  $\mathscr{C}(\Theta) \leq 3 C L_{\theta} \sqrt{d}/2$ \citep{lee_minimax_2018}.



Generally, the radius of Wasserstein ball $\rho$ can be considered a hyperparameter that needs fine-tuning (e.g., through cross-validation). In principle, $\rho$ should not be too large to become over-conservative, which can hurt the empirical average performance, but also not too small to become similar to the ERM, and thus can lack robustness. 
From a statistical  standpoint, we are interested in learning how to scale  $\rho$ w.r.t. the sample size $n_i, i \in [m]$,  such that the  generalization of the \OurAlg solution $\widehat{\theta}^{\varepsilon}$ w.r.t.  the true distribution $P_{\lambda}$ is guaranteed, while still ensuring robustness w.r.t. all distributions inside the Wasserstein ball. Using the result from \citet{fournier_rate_2013} showing that  $\widehat{P}_{n_i}$ converges in Wasserstein distance to the true $P_i$ at a specific rate, we obtain:

\comment{\begin{proposition}[Measure concentration~{\citep[Theorem~2]{fournier_rate_2013}}]\label{pro:WassRate}
	Let $P$ be a probability distribution on a bounded set $\mathcal{Z}$. Let $\widehat{P}_{n}$ denote the empirical distribution of $Z_{1}, \ldots, Z_{n} \stackrel{\text { i.i.d. }}{\sim} P.$ Assuming that there exist constants $a>1$  such that $A \defeq \mathbf{E}_{Z \sim P}\bigS{\exp(\norm{Z}^a)} < \infty$ (i.e., $P$ is a light-tail distribution). Then, for any $\rho > 0$,
	$$
	\mathbf{P}\left[ W_p (\widehat{P}_{n}, P) \geq  \rho \right]  \leq \begin{cases}c_{1} \exp \left(-c_{2} n \rho^{\max \{d/p,2\}}\right) & \text { if } \rho \leq 1 \\ c_{1} \exp \left(-c_{2} n \rho^{a}\right) & \text { if } \rho>1\end{cases}
	$$
	where $c_{1}, c_{2}$ are constants depending on $a, A$ and $d$.
\end{proposition}
As a consequence of this proposition,  for any $\delta > 0$, we have  
\begin{align}\label{E:deltarho}
	\mathbf{P}\left[ W_2 (\widehat{P}_{n}, P) \leq  \widehat{\rho}_{n}^{\delta} \right] \geq 1 - \delta \; \text{  where  } \; \widehat{\rho}_{n}^{\delta} \defeq \begin{cases}\left(\frac{\log \left(c_{1} / \delta \right)}{c_{2} n}\right)^{\min \{2/ d, 1 / 2\}} & \text { if } n \geq \frac{\log \left(c_{1} / \delta\right)}{c_{2}}, \\ \left(\frac{\log \left(c_{1} / \delta\right)}{c_{2} n}\right)^{1 / \alpha} & \text { if } n<\frac{\log \left(c_{1} / \delta\right)}{c_{2}}.\end{cases}
\end{align}

\begin{remark} 
	In \Cref{pro:WassRate}, \citet{fournier_rate_2013} show that the empirical distribution $\widehat{P}_n$ converges in Wasserstein distance to the true $P$ at a specific rate. This implies that  judiciously scaling the radius of Wasserstein balls according to \cref{E:deltarho} provides natural confidence regions for the data-generating distribution $P$.  Exploiting this fact for \OurAlg, we obtain the excess risk of ${f}_{\widehat{\theta}^{\varepsilon}}$ w.r.t. its true data distribution $P_{\lambda}$.
\end{remark}
} 

\begin{corollary} \label{corr1}
	With all assumptions as in \Cref{thrm:excess_risk}, defining $\rho_n \defeq \sqrt{\sum\nolimits_{i=1}^{m} \lambda_i \widehat{\rho}_{n_i}^{\delta/m} }$, we have
	\begin{align*}
	    \mathscr{E}(P_{\lambda}, {f}_{\widehat{\theta}^{\varepsilon}})\leq  \sum_{i=1}^{m} \lambda_i \BiggS{\frac{48 \mathscr{C}(\Theta)}{\sqrt{n_i}} + 2 M_{\ell} \sqrt{\frac{2 \log (4  m / \delta)}{n_i}}   } 
	    + g (\rho_n, \gamma)+\varepsilon
	\end{align*}
	with probability at least $1 - \delta$, where 
	$
	    \widehat{\rho}_{n}^{\delta} \defeq \begin{cases}\left(\frac{\log \left(c_{1} / \delta \right)}{c_{2} n}\right)^{\min \{2/ d, 1 / 2\}} & \text { if } n \geq \frac{\log \left(c_{1} / \delta\right)}{c_{2}}, \\ \left(\frac{\log \left(c_{1} / \delta\right)}{c_{2} n}\right)^{1 / \alpha} & \text { if } n<\frac{\log \left(c_{1} / \delta\right)}{c_{2}}.\end{cases}
	$
\end{corollary}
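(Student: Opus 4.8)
The plan is to obtain \Cref{corr1} as a direct consequence of \Cref{thrm:excess_risk}, the content being the choice of a \emph{data-adaptive} Wasserstein radius $\rho_n$ for which the true mixture $P_{\lambda}$ is provably within reach of the empirical center $\widehat{P}_{\lambda}$, together with a careful merge of the two high-probability events involved. Since $P_{\lambda}\in\gB(P_{\lambda},\rho_n)$, I would specialize the universally-quantified conclusion of \Cref{thrm:excess_risk} to the single distribution $Q=P_{\lambda}$; the generalization term and the additive $\varepsilon$ carry over verbatim, and the robustness overhead is evaluated at the chosen radius as $g(\rho_n,\gamma)$. What must be supplied is (i) a legitimate, sample-driven value of $\rho_n$ via the measure-concentration rate of \citet{fournier_rate_2013}, and (ii) the probabilistic bookkeeping that yields overall confidence $1-\delta$ and the stated $\log(4m/\delta)$ factor.

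First I would split the confidence budget. I would invoke \Cref{thrm:excess_risk} at level $1-\delta/2$, which replaces its $\log(2m/\delta)$ by $\log\!\big(2m/(\delta/2)\big)=\log(4m/\delta)$, exactly the factor appearing in the corollary. The remaining half of the budget is spent on concentration: applying the Fournier--Guillin rate separately to each client $i$ at level proportional to $\delta/m$ gives, on a good event, a bound on $W_2^2(\widehat{P}_{n_i},P_i)$ by $\widehat{\rho}_{n_i}^{\delta/m}$, and a union bound over $i\in[m]$ makes all $m$ bounds hold simultaneously.

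The key technical step is to lift these per-client bounds to a single bound on the mixtures, using the joint convexity of the squared $2$-Wasserstein distance. Taking optimal couplings $\pi_i\in\Pi(\widehat{P}_{n_i},P_i)$ and forming $\pi=\sum_i\lambda_i\pi_i$, which is a valid coupling of $\widehat{P}_{\lambda}=\sum_i\lambda_i\widehat{P}_{n_i}$ and $P_{\lambda}=\sum_i\lambda_i P_i$, yields
\[
W_2^2(\widehat{P}_{\lambda},P_{\lambda}) \le \mathbf{E}_{\pi}\bigS{d^2} = \sum_{i=1}^m \lambda_i\, W_2^2(\widehat{P}_{n_i},P_i) \le \sum_{i=1}^m \lambda_i\, \widehat{\rho}_{n_i}^{\delta/m} = \rho_n^2,
\]
so that $P_{\lambda}\in\gB(\widehat{P}_{\lambda},\rho_n)$ and, by symmetry of $W_2$, the empirical and true mixtures are mutually within radius $\rho_n$ on this event.

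Finally I would intersect the two events: on their intersection (probability at least $1-\delta$ by the union bound) the true distribution is covered by the radius-$\rho_n$ certificate and the robust generalization bound of \Cref{thrm:excess_risk} applies with $\rho=\rho_n$; evaluating its conclusion at $Q=P_{\lambda}$ then gives the claimed inequality, with the admissibility constraint $\gamma\ge\max\{L_{zz},L_z/\rho\}$ inherited at $\rho=\rho_n$. I expect the main obstacle to be precisely the mixture/coupling inequality for $W_2^2$ combined with reconciling the confidence allocation --- the $\delta/2$ split feeding the $\log(4m/\delta)$ term and the per-client $\delta/m$ levels defining $\rho_n$ --- since keeping these exactly consistent with the displayed definition of $\rho_n$ (in particular whether $\widehat{\rho}_{n_i}^{\delta/m}$ bounds $W_2$ or $W_2^2$) is where the bookkeeping is most delicate.
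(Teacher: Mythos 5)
Your proposal follows the paper's proof essentially step for step: the same $\delta/2$--$\delta/2$ confidence split (invoking \Cref{thrm:excess_risk} at level $\delta/2$ to produce the $\log(4m/\delta)$ factor), per-client Fournier--Guillin concentration combined via a union bound over $i \in [m]$, the mixture inequality $W_2^2(\widehat{P}_{\lambda}, P_{\lambda}) \leq \sum_{i=1}^{m} \lambda_i W_2^2(\widehat{P}_{n_i}, P_i)$, and finally setting $\rho = \rho_n$ in the theorem with a union bound over the two events. The only difference is cosmetic: you justify the mixture inequality by the explicit coupling $\pi = \sum_{i} \lambda_i \pi_i$, whereas the paper derives joint convexity of $W_2^2$ from Kantorovich duality --- the two arguments are standard and interchangeable here, and you even correctly identify the one genuinely delicate point in the paper's own bookkeeping, namely whether $\widehat{\rho}_{n_i}^{\delta/m}$ certifies $W_2$ or $W_2^2$.
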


\section{Choosing $\lambda$: Applications} \label{sec:lambda}
We focus on two applications: multi-source domain adaptation and generalization to all client distributions. We provide insights on choosing the  weights $\lambda$ for these applications.

\textbf{Multi-source domain adaptation:} Consider the multi-source domain distribution $P_\lambda$ \citep{mansour_theory_2021}. \citet{lee_minimax_2018} show that solving the minimax risk with the Wasserstein ambiguity set can help transfer data/knowledge from the source domain $P_\lambda$ to a different but related target domain $Q$. They bound the distance $W_p(P_\lambda,Q)$ using the triangle inequality
\begin{align}
\!\!\! W_p(P_\lambda, Q) \leq  W_p(P_\lambda, \widehat{P}_\lambda) + W_p(\widehat{P}_\lambda, \widehat{Q}) + W_p(\widehat{Q}, Q),\label{E:tria}
\end{align}
where $\widehat{P}_\lambda$ and $\widehat{Q}$ are the empirical versions of $P_\lambda$ and $Q$, respectively. While $W_p(P_\lambda, \widehat{P}_\lambda)$ and $W_p(\widehat{Q}, Q)$ can be probabilistically bounded  with a confidence parameter $\delta \in (0,1)$ according to \citet{fournier_rate_2013}, $W_p(\widehat{P}_\lambda, \widehat{Q})$ can be deterministically computed using linear or convex programming \citep{gabriel_2019_cot}.
In the FL context, in order to have a better bound for $W_2(P_{\lambda}, Q)$ similar to \cref{E:tria}, it is straightforward to choose $\lambda = \argmin_{\lambda' \in \Delta} W_2(\widehat{P}_{\lambda'}, \widehat{Q})$. To relax this problem into a form solvable using existing approaches, observe  that $W_2(\widehat{P}_{{\lambda}}, Q) \leq   \sum\nolimits_{i=1}^{m} \lambda_i W_2(\widehat{P}_{n_i}, {Q})$ due to the convexity of the Wasserstein distance. We then consider the following upper bound to $\min_{\lambda \in \Delta} W_2(\widehat{P}_{\lambda}, \widehat{Q})$:
\begin{align}
	\min_{\lambda \in \Delta} \SumNoLim{i=1}{m} \lambda_i W_2^2(\widehat{P}_{n_i}, \widehat{Q}) \eqdef {\rho^{\star}}^2,  \label{E:star}
\end{align}
which is a linear program, considering each $W_2^2(\widehat{P}_{n_i}, \widehat{Q})$ can be found by  efficiently solving  convex programs  especially with entropic regularization and the Sinkhorn algorithm \citep{cuturi_2013_sinkhorn}. 
\begin{corollary} Denote  the solution to \cref{E:star} by $\lambda^{\star}$, and assume that domain $Q$ generates $n_Q$ i.i.d. data points. With probability at least  $1-\delta$, we have
	\begin{align*}
		W_2(P_{\lambda^{\star}}, Q) \leq W_2(P_{\lambda^{\star}}, \widehat{P}_{\lambda^{\star}}) + W_2(\widehat{P}_{\lambda^{\star}}, \widehat{Q}) + W_2( Q, \widehat{Q})
		\leq \sqrt{\sum\nolimits_{i=1}^{m} \lambda_i^{\star} \widehat{\rho}_{n_i}^{\delta/m}} + \rho^{\star} + \widehat{\rho}_{n_Q}^{\delta/2}.  
	\end{align*}
\end{corollary}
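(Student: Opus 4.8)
The first inequality is nothing more than the triangle inequality for the $2$-Wasserstein distance, which is a genuine metric on measures with finite second moment: inserting the two empirical measures $\widehat{P}_{\lambda^{\star}}$ and $\widehat{Q}$ as waypoints gives $W_2(P_{\lambda^{\star}}, Q) \leq W_2(P_{\lambda^{\star}}, \widehat{P}_{\lambda^{\star}}) + W_2(\widehat{P}_{\lambda^{\star}}, \widehat{Q}) + W_2(\widehat{Q}, Q)$, and $W_2(\widehat{Q}, Q) = W_2(Q, \widehat{Q})$ by symmetry. This is exactly \cref{E:tria} evaluated at $\lambda = \lambda^{\star}$, so the whole task reduces to bounding the three summands: the middle one deterministically, the two outer ones probabilistically.

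For the middle term I would invoke the convexity of the Wasserstein distance in its first argument (the inequality stated just above \cref{E:star}) and then Cauchy--Schwarz with weights $\lambda_i^{\star}$, using $\SumNoLim{i=1}{m}\lambda_i^{\star} = 1$:
\begin{align*}
W_2(\widehat{P}_{\lambda^{\star}}, \widehat{Q}) \leq \SumNoLim{i=1}{m} \lambda_i^{\star}\, W_2(\widehat{P}_{n_i}, \widehat{Q}) \leq \sqrt{\SumNoLim{i=1}{m} \lambda_i^{\star}\, W_2^2(\widehat{P}_{n_i}, \widehat{Q})} = \rho^{\star},
\end{align*}
the last equality being the definition of $\rho^{\star}$ in \cref{E:star} at its minimizer $\lambda^{\star}$.

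The two outer terms are empirical-to-population deviations, which I would control by the measure-concentration rate of \citet{fournier_rate_2013}: $W_2(\widehat{P}_n, P) \leq \widehat{\rho}_n^{\delta}$ with probability at least $1 - \delta$. For the target term I apply this directly to the $n_Q$ i.i.d. draws from $Q$ at level $\delta/2$, giving $W_2(Q, \widehat{Q}) \leq \widehat{\rho}_{n_Q}^{\delta/2}$. For the source term $W_2(P_{\lambda^{\star}}, \widehat{P}_{\lambda^{\star}})$, since both arguments are $\lambda^{\star}$-mixtures I would use the (standard) joint convexity of the squared transport cost, $W_2^2(\SumNoLim{i=1}{m}\lambda_i^{\star} P_i, \SumNoLim{i=1}{m}\lambda_i^{\star}\widehat{P}_{n_i}) \leq \SumNoLim{i=1}{m}\lambda_i^{\star} W_2^2(P_i, \widehat{P}_{n_i})$, to reduce to the per-client deviations $W_2(P_i,\widehat{P}_{n_i})$, each bounded by $\widehat{\rho}_{n_i}^{\delta/m}$ at level $\delta/m$; taking a square root then produces the stated $\sqrt{\SumNoLim{i=1}{m}\lambda_i^{\star}\widehat{\rho}_{n_i}^{\delta/m}}$-type expression. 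A union bound over the $m$ source events and the single target event makes all of these hold simultaneously, yielding the overall $1 - \delta$ confidence.

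The triangle and convexity steps are routine; the part that needs genuine care is the probabilistic bookkeeping. I would have to fix the confidence budget so that the $m$ source concentration events and the target event combine under a union bound to exactly $1 - \delta$ (the radii $\widehat{\rho}_{n_i}^{\delta/m}$ and $\widehat{\rho}_{n_Q}^{\delta/2}$ pin down how $\delta$ is split), and to line up squared versus unsquared radii when converting $W_2^2(P_i,\widehat{P}_{n_i}) \leq (\widehat{\rho}_{n_i}^{\delta/m})^2$ into the form appearing under the square root. This is the only place where more than a mechanical chaining of known inequalities is involved.
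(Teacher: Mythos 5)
Your proposal is correct and follows essentially the same route as the paper's own argument (the paper proves this corollary by mirroring the proof of \Cref{coro_bary}): triangle inequality, convexity of the Wasserstein distance plus Cauchy--Schwarz to bound the middle term by $\rho^{\star}$, joint convexity of $W_2^2$ together with per-client Fournier--Guillin concentration at level $\delta/m$ for $W_2(P_{\lambda^{\star}}, \widehat{P}_{\lambda^{\star}})$ (this is exactly \cref{E:barry}), the target-domain bound $W_2(Q,\widehat{Q}) \leq \widehat{\rho}_{n_Q}^{\delta/2}$ at level $\delta/2$, and a final union bound over the $m+1$ events. The one point you flagged as needing genuine care --- reconciling $W_2^2(P_i,\widehat{P}_{n_i}) \leq \bigl(\widehat{\rho}_{n_i}^{\delta/m}\bigr)^2$ with the \emph{unsquared} radii appearing under the square root --- is precisely where the paper itself is loose: in \cref{E:barry} it treats $\widehat{\rho}_{n_i}^{\delta/m}$ as a bound on the squared distance via the step $\mathbf{P}\bigl[W_2^2(\widehat{P}_{n_i},P_i) \geq \widehat{\rho}_{n_i}^{\delta/m}\bigr] = \mathbf{P}\bigl[W_2(\widehat{P}_{n_i},P_i) \geq \widehat{\rho}_{n_i}^{\delta/2m}\bigr]$, which does not follow from the definition of $\widehat{\rho}_n^{\delta}$, so your bookkeeping concern is well placed rather than a gap relative to the paper.
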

The proof of this corollary is similar to that of \Cref{coro_bary} in \Cref{sec:lambda:generalization}.

\textbf{Covering all client distributions in the Wasserstein ball:}  Suppose we want to cover all client distributions inside a Wassertein ball so that the generalization and robustness result by \OurAlg in \Cref{thrm:excess_risk} is applicable to all clients' distributions. We show in \Cref{sec:lambda:generalization} that this is a problem of finding $\lambda$ such that the Wasserstein distance between $P_{\lambda}$ and $P_j, \forall j$, is as small as possible.
\comment{
\textbf{Covering all client distributions in the Wasserstein ball:}  Suppose we want to cover all client distributions inside a Wassertein ball so that the generalization and robustness result by \OurAlg  in \Cref{thrm:excess_risk} is applicable to all clients' distributions. This is the problem of finding $\lambda$ such that 
the Wasserstein distance between $P_{\lambda}$ and $P_j, \forall j$, is as small as possible. Instead of directly finding the minimum  Wasserstein radius  that cover all client distributions, we will leverage the popular Wasserstein barycenter problem \citep{gabriel_2019_cot}. Specifically, consider the problem
\begin{align} \label{E:barycenter}
	\min_{\lambda \in \Delta} W_2(\widehat{P}_{{\lambda}}, P^{\varheart})  \quad \text{s.t. } \quad P^{\varheart} = \argmin_{ Q \in \gP} \SumNoLim{i=1}{m}\lambda_i W_2(\widehat{P}_i, {Q}), 
\end{align}
where $P^{\varheart}$ is the Wasserstein bary center w.r.t the solution $\lambda^*$ to this problem. Even though the solution is not straightforward, we propose to solve its tractable upper-bound:
\begin{align} \label{E:barycenter_relaxed}
	\min_{\lambda \in \Delta, Q \in \gP} \SumNoLim{i=1}{m} \lambda_i W_2(\widehat{P}_i, {Q}). 
\end{align}
This is a bi-convex problem, which is convex w.r.t to $\lambda$ (resp. $Q$) when fixing $Q$ (resp. $\lambda$). Thus, we can use alternative minimization \citep{gorski_biconvex_2007} to find a local solution to this problem. Denoting $\tilde{\lambda}$ as the solution to \Cref{E:barycenter} and $({\lambda}^*, {P}^*)$ as a local solution to \Cref{E:barycenter_relaxed}, we obtain
\begin{align}
	W_2(\widehat{P}_{\tilde{\lambda}}, P^{\varheart}) \leq   W_2(\widehat{P}_{{\lambda}^*}, {P}^*) \leq \sum\nolimits_{i=1}^{m} {\lambda}^*_i W_2(\widehat{P}_i, {P}^*) \eqdef \rho^*.
\end{align}
\begin{corollary} \label{coro_bary}
	For all client $j \in [m]$, with probability at least $1-\delta$, we have
	\begin{align*}
		W_2(P_{\lambda^*}, P_j) &\leq W_2(P_{\lambda^*}, \widehat{P}_{\lambda^*}) + W_2(\widehat{P}_{\lambda^*}, P^*) + W_2( P^*, \widehat{P}_j) + W_2(  \widehat{P}_j, P_j) \nonumber\\
		& \leq \sqrt{\sum\nolimits_{i=1}^{m} \lambda_i^{*} \widehat{\rho}_{n_i}^{\delta/m}} + \rho^* + \frac{\rho^*}{\lambda_j} + \widehat{\rho}_{n_j}^{\delta/2} \quad
	\end{align*}
\end{corollary}
\begin{proof}
	The first line is by triangle inequality. The second line is by  following facts: (i) $\mathbf{P}\BigS{W_2(P_{\lambda^*}, \widehat{P}_{\lambda^*}) \geq \sqrt{\sum\nolimits_{i=1}^{m} \lambda_i^{*} \widehat{\rho}_{n_i}^{\delta/m}} } \leq \delta/2$ according to \cref{E:barry}, (ii) $W_2( P^*, \widehat{P}_j) = \frac{\lambda_j W_2( P^*, \widehat{P}_j)}{\lambda_j}\leq \frac{\rho^*}{\lambda_j}$, and (iii)  $\mathbf{P}\bigS{W_2(  \widehat{P}_j, P_j) \geq \widehat{\rho}_{n_j}^{\delta/2} } \leq \delta/2$ according to \cref{E:deltarho}, and (iv) using union bound. 
\end{proof}
}

\section{Experiments}
\label{sec:experiment}
\begin{wrapfigure}{r}{0.33\textwidth}
	\vspace{-20pt}
	\centering
	\begin{center}
		\includegraphics[width=0.31\textwidth]{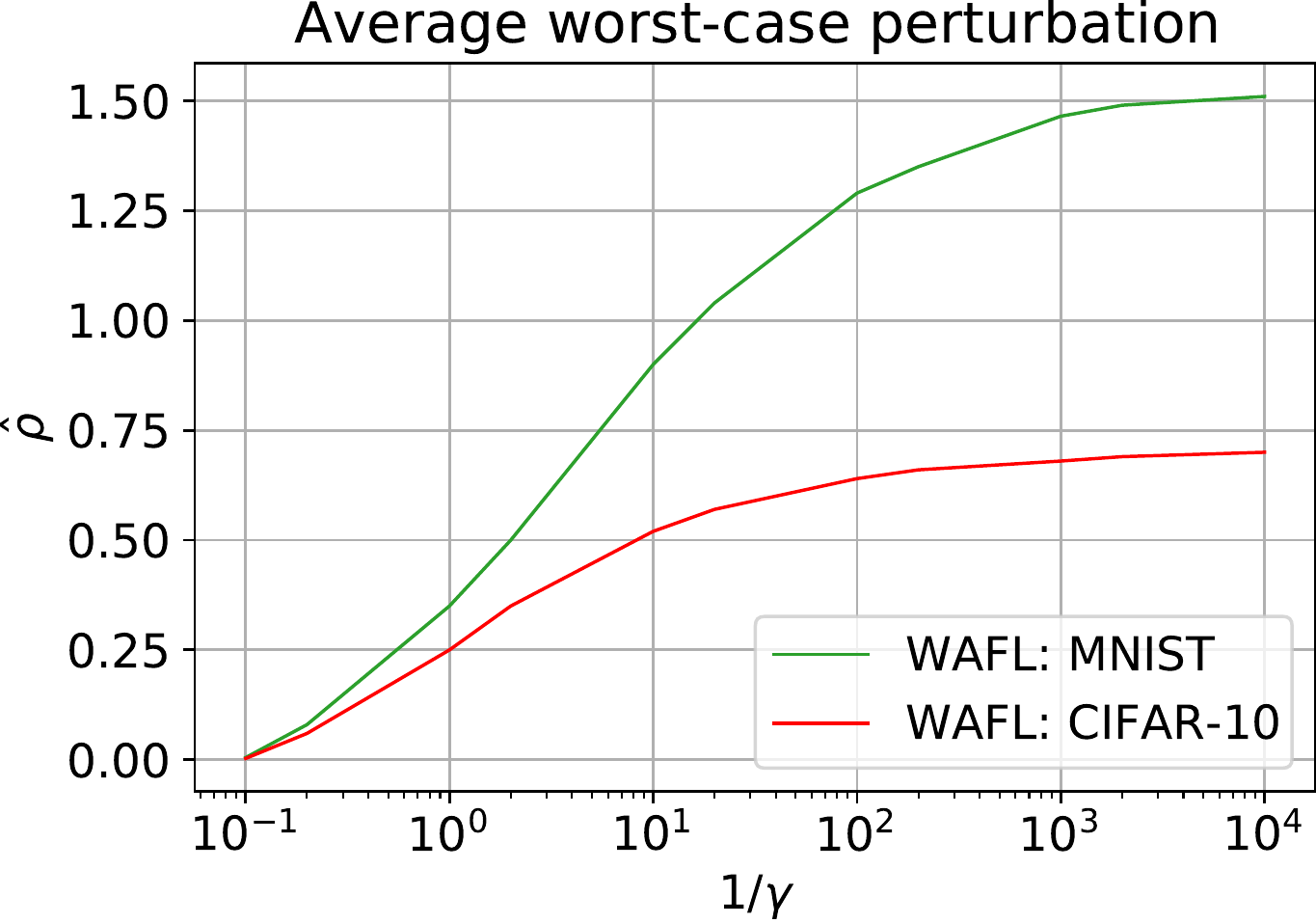}
	\end{center}
	\caption{\OurAlg's hyperparameter $\gamma$ plays an opposite role as the average worst-case perturbation $\widehat \rho$: the smaller $\gamma$, the higher the level of perturbation $\widehat \rho$} 
	\label{F:p_gamma} 
	\vspace{-15pt}
\end{wrapfigure}

We aim to show four key results through numerical experiments. First, we show the relationship between the hyperparameter $\gamma$ and the traditional worst-case perturbation $\rho$ used in distributionally robust learning \comment{(\Cref{sec:experiments:gamma_rho})}. Second, we investigate the effect of $\gamma$ on \OurAlg's performance in two data settings \comment{(\Cref{sec:experiments:gamma_general_robust})}. Third, we provide an extensive comparison of \OurAlg with other robust baselines and with FedAvg in scenarios with varying degrees of attack in an FL network \comment{(\Cref{sec:experiments:comparison_robust})}. Finally, \comment{in \Cref{sec:experiments:domain_adaptation},} we perform several experiments in multi-source domain adaptation to illustrate the findings in \Cref{sec:lambda}.

\textbf{Experimental settings.} We design two non-i.i.d. FL settings. First, we use the MNIST dataset \cite{lecun_gradient-based_1998} to distribute to $100$ clients and employ a multinomial logistic regression model in a convex setting. We then use CIFAR-10 \cite{krizhevskyLearningMultipleLayers} to distribute to $20$ clients and employ a CNN model in \citet{mcmahan_communication-efficient_2017} in a non-convex setting. In the following experiments, we randomly sample $|S_t| = 10$ clients to participate in training at each communication round. When the stochastic gradient is calculated, we use a batch size of $|\gD_i| = 64$. For a fair comparison, we use the same number of global and local optimization rounds for each algorithm ($T = 200, K = 2$). More detail can be found in \Cref{sec:experiment:setting}. 

\textbf{Effect of $\gamma$ on the worst-case risk perturbations.} Define the (squared) average worst-case perturbation as $\widehat{\rho}^2 = \mathbf{E}_{Z \sim \widehat P_{\lambda}} \bigS{d^2(\widehat{Z}, Z)}$, where $\widehat{Z}$ is the adversarial example of $Z$ as a solution to $\phi_\gamma(Z, \cdot)$. \Cref{F:p_gamma} depicts the relationship between $\widehat\rho$ and the predetermined $\gamma$ in the two data settings, and shows that smaller $\gamma$ corresponds to larger $\widehat \rho$. This allows us to indirectly control the amount of worst-case perturbation through the change of the hyperparameter $\gamma$ in the opposite direction. In other words, $\gamma$ is a hyperparameter that needs fine-tuning in order to obtain the best performance, and setting a sufficiently large $\gamma$ provides a \textit{moderate} level of robustness (smaller $\rho$ by duality) while ensuring $\phi_\gamma$ can be solved \textit{fast} using gradient methods (\Cref{sec:convergence}).


\textbf{Effect of $\gamma$ on the generalizability and robustness of \OurAlg.}
Consider $\widehat{P}_\lambda$ and $\widehat{Q}$ as the empirical distributions of training and test samples, respectively. By controlling the hyperparameter $\gamma$, we aim to train a global model robust to any test distribution $\widehat{Q}$. To do so, we design two scenarios. In the \emph{clean data} scenario, the global model is trained with different values of $\gamma$ and evaluated on clients' hold-out test data. In the \emph{distribution shift} scenario, the training process is the same, but the hold-out test data go through distribution shifts. To obtain these shifts, \comment{based on \citet{madry_towards_2019},} we employ the common PGD  attack \citep{madry_towards_2019} under the $l_\infty$-norm to generate an $\epsilon$-level perturbation of clients' test data. We fix the number of gradient steps to generate adversarial examples, and use $t_{avd} = 40, \epsilon = 0.3, \alpha = 0.01$ for MNIST, and $t_{avd} = 10, \epsilon = 8/255, \alpha = 2/255$ for CIFAR-10. We note that this setting is similar to that involving adversarial poisoning attacks, whose main purpose is to increase the Wasserstein distance between $\widehat P_\lambda$ and $\widehat Q$, thereby helping to verify the robustness of \OurAlg.

\Cref{F:Generalization} shows the performance of \OurAlg and FedAvg in the two scenarios. Under \emph{clean data}, the distance between $\widehat P_\gamma$ and $\widehat Q$ is relatively small, therefore requiring a lower amount of robustness (large $\gamma$). By carefully fine-tuning $\gamma$ in the ranges $[0.5, 1]$ for MNIST and $[10, 20]$ for CIFAR-10, \OurAlg enjoys the same or even better performance as FedAvg. The benefit of $\gamma$ emerges most clearly under \emph{distribution shift}. In this scenario, $\widehat P_\gamma$ and $\widehat Q$ grow further apart, requiring a larger ambiguity set $\gB(\widehat P_{\lambda}, \widehat \rho)$ (or, equivalently, a smaller $\gamma$) to ensure robustness. Meanwhile, too small $\gamma$ may violate the assumption that $\gamma > L_{zz}$ and can hurt \OurAlg's performance as $\widehat \rho$ becomes too large, as demonstrated in \Cref{Generalization}. In later experiments, we choose $\gamma = 0.5$ for MNIST and $\gamma = 10$ for CIFAR-10.



\begin{figure*}[t]
    \vspace{-10pt}
	\centering
	\includegraphics[width=1\linewidth]{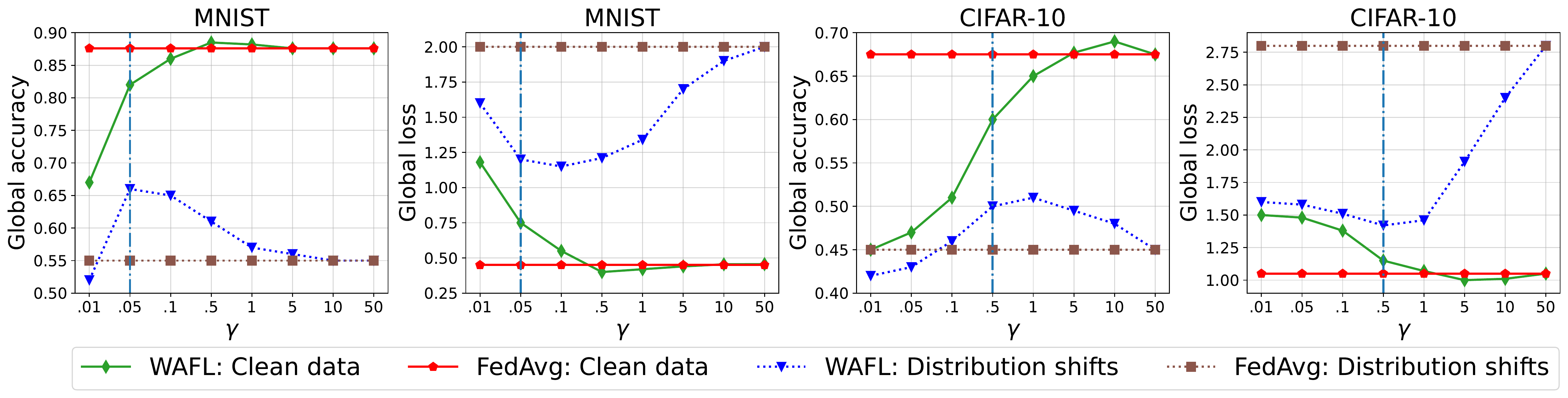}
	\caption{Global accuracy and loss of with different values of $\gamma$ on MNIST and CIFAR-10 under clean data and distribution shifts (40\% of clients are affected by PGD attack). The blue vertical line indicates the value of $\gamma$ giving the same level $\epsilon$ of PGD attack ($\gamma = 0.05$ for MNIST and   $\gamma = 0.5$ for CIFAR-10).}
	\label{F:Generalization}
	\vspace{-5pt}
\end{figure*}

\textbf{Comparison with other robust methods.}
We compare \OurAlg with FedAvg and four robust baselines in FL: FedPGM, FedFGSM, distributionally robust FedAvg \citep[DRFA]{deng_distributionally_2020} and agnostic FL \citep[AFL]{mohri_agnostic_2019}. FedPGM and FedFGSM are FedAvg with adversarial training using the PGD method \citep{madry_towards_2019} and the FGSM method \citep{goodfellow_explaining_2015} on local clients, respectively. In each local update of FedPGD and FedFGSM, all clients solve $\delta^* = \argmax\nolimits_{\norm {\delta}_{\infty} \leq \epsilon}\big{\{} \ell(  h_{\theta}(z+ \delta),y)  \big{\}}$ using projection onto an $l_\infty$-norm to find the worst-case perturbation $\delta$. While FedPGD uses $t_{avd}$ gradient steps to find $\delta^*$, FedFGSM uses only one gradient step. We use the same value of $t_{avd}$ when training using \OurAlg and FedPGM. On the other hand, DRFA and AFL both aim to achieve robustness by changing the clients' weights $\lambda_i$ based on local gradients and losses. AFL is considered a special case of DRFA by performing only one local gradient update.

To compare \OurAlg with these baselines, we consider a scenario in which a subset of clients suffers from distribution shifts (we call them \emph{attacked clients}). We generate the shifts using the same values of $\epsilon$ and $\alpha$ \comment{as in \Cref{sec:experiments:gamma_general_robust}}. We additionally train \OurAlg with the value of $\gamma$ generating the same level of perturbation $\epsilon$ in FedPGM and FedFGSM. The randomly-chosen attacked clients are between 20\% and 80\% of all clients. The global accuracy and loss for each dataset are presented in \cref{F:robust}. As expected, with all algorithms, the global accuracy decreases monotonically with the percentage of attacked clients. While FedAvg, by definition a non-robust method, unsurprisingly suffers the largest performance drop, \OurAlg ourperforms all baselines, retaining over 50\% accuracy on MNIST and nearly 45\% on CIFAR-10 even when 80\% of clients experience distribution shifts. We \comment{also} observe that the performance of FedPGD and FedFGSM is much better than DRFA and AFL, and is the closest to \OurAlg. This suggests that adjusting the clients' weights $\lambda_i$ may not notably help with achieving robustness.

\begin{figure*}[t]
	\centering
	\includegraphics[width=1\linewidth]{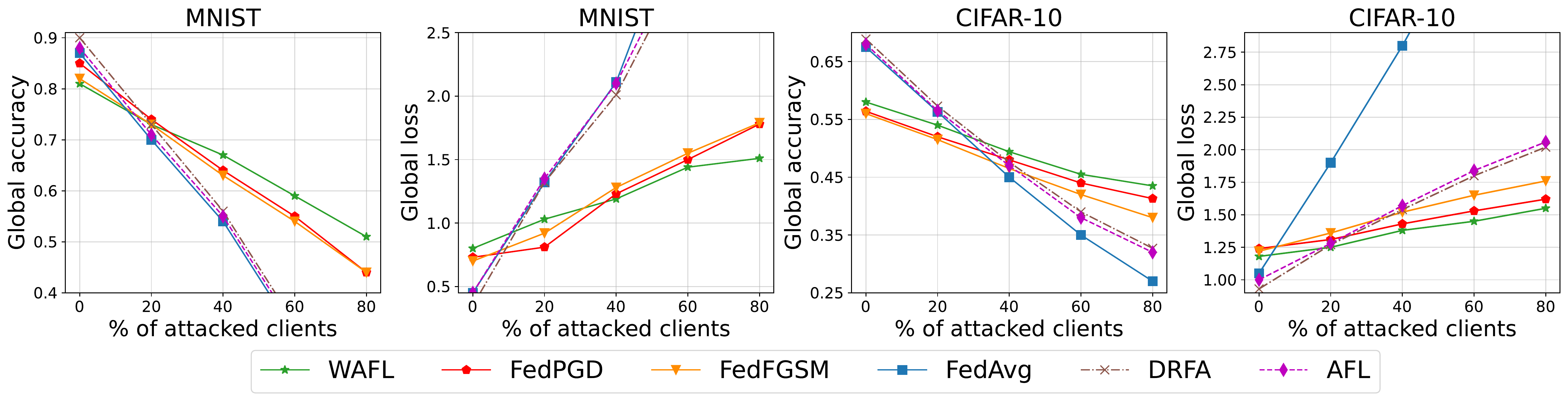}
	\caption{Comparison with other robust methods on MNIST and CIFAR-10 with different proportions of clients suffering from distribution shifts (attacked clients).}
	\label{F:robust}
	\vspace{-10pt}
\end{figure*}

Furthermore, we provide a comparison between the performance of WAFL with different $p$ values and other baselines in \Cref{Apx:WAFL_p} to show that the duality result in \Cref{E:duality_Wass} suffices with any $\ell_p$ norm.

\textbf{Domain adaptation.}
\Cref{sec:lambda} describes \OurAlg's capability in multi-source domain adaptation by solving a linear program in $\lambda$. \comment{Here}We empirically demonstrate that capability using three digit recognition datasets including MNIST (\mt) \cite{lecun_gradient-based_1998}\comment{, containing 70,000 instances of $28 \times 28$ grayscale images)}, USPS (\up) \cite{hullDatabaseHandwrittenText1994}\comment{, containing 9,298 instances of $16 \times 16$ grayscale images)}  and SVHN (\sv) \cite{37648} \comment{, containing 89,289 instances of $32 \times 32$ RGB images)}. We convert all images to have the size of $28 \times 28 \times 3$. More information about these datasets \comment{, including how they are split to the clients,} can be found \comment{in \Cref{T:Data_Samples_2}} in \Cref{sec:experiment:setting}. We then train a global multinomial logistic regression model on two source domain datasets, and evaluate it using the remaining dataset as the target domain. To solve the linear program in \Cref{E:star}, we estimate the Wasserstein distance by leveraging the computational methods introduced in \cite{cuturi_2013_sinkhorn,otdd}, and solve the linear program using SciPy\comment{'s optimization module}\footnote{\url{https://docs.scipy.org/doc/scipy/reference/optimize.html}}. For comparison, we use FedAvg in two scenarios: $\lambda_i = n_i/n$ and $\lambda_i = 1 / m$. We also employ AFL and DRFA, both of which can\comment{are able to} vary the $\lambda_i$ to achieve robustness. All algorithms are fine-tuned to obtain their best performance on the target datasets.

\begin{table}[!htpb]
    \vspace{-10pt}
	\small
	\setlength\tabcolsep{4pt} 
	\caption{Accuracies on target domains. \comment{The models are each trained using two datasets (source domains), and evaluated on the other dataset (target domain).}}
	\centering
	\begin{tabular}{ccccc}
		\toprule
		\textbf{$\lambda$} & \textbf{\small \mt, \sv $\rightarrow$ \up} & \textbf{\small \mt, \up $\rightarrow$ \sv} & \textbf{\small \up, \sv $\rightarrow$ \mt} & Avg \\ 
		\midrule
		$n_i/n$      &        59.0    &             14.1       &              16.1         &  29.7 \\
		$1/m$        &        58.7    &             14.9       &              52.1         &  41.6 \\ 
		AFL          &        60.1    &             15.0       &              52.4         &  42.5 \\
		DRFA         &        61.6    &             15.1       &              53.0         &  43.2 \\
		\OurAlg      &\textbf{65.6}   &       \textbf{16.6}    &          \textbf{58.1}    &  \textbf{46.7} \\
		\bottomrule
	\end{tabular}
	\label{F:domain_2}
\end{table}

The accuracies on the target domains are presented in \Cref{F:domain_2}. In all three scenarios, \OurAlg outperforms all other methods, especially in the settings \mt, \sv $\rightarrow$ \up and \sv, \up $\rightarrow$ \mt, where \OurAlg's accuracy exceeds the second-best accuracy (achieved by DRFA) by five percentage points. We note that the \sv dataset is the most different from the other two, measured by the Wasserstein distance, which is why generalization to \sv's domain is the most difficult. 

\comment{

\textbf{Experimental settings.} 
We use two datasets in two non-i.i.d. settings. We distribute the first dataset -- MNIST \citep{lecun_gradient-based_1998} -- to 100 clients and use a multinomial logistic regression model to model a convex setting. For the second dataset -- CIFAR-10 \citep{krizhevskyLearningMultipleLayers} -- we use 20 clients and a CNN model employed in \citet{mcmahan_communication-efficient_2017} to model a non-convex setting. We set $\lambda_i = {n_i}/{n}$ as the client weights. 
In optimization, we randomly sample $S_t = 10$ clients to participate in training at each communication round.
More detail can be found in \Cref{sec:experiment:setting}.



\textbf{Effect of $\gamma$ on the worst-case risk perturbations.} We first examine the relationship between the robust parameter $\gamma$ and the average worst-case perturbations $\widehat{\rho}$, defined as $\widehat{\rho}^2 = \mathbf{E}_{Z \sim \widehat P_{\lambda}} \bigS{d^2(\widehat{Z}, Z)}$, where $\widehat{Z}$ is the adversarial perturbation of $Z$. As shown in \Cref{F:p_gamma}, smaller values of $\gamma$ correspond to larger worst-case perturbations $\widehat{\rho}$ on both the MNIST and CIFAR-10 datasets. For the rest of the experiment, 
rather than control $\widehat \rho$ directly, we set $\gamma$ on the opposite direction to control the level of robustness.


\textbf{Generalization and robustness of \OurAlg.}
We consider $\widehat P_{\lambda}$ and $\widehat Q$ as the empirical distribution of the training samples and test samples of all clients, respectively. By varying $\gamma$, we aim to train a global model robust to any  empirical test distribution $\widehat Q$. To show the generalization and robustness of \OurAlg, we train and evaluate it in two different scenarios. First, in the \emph{clean data} scenario, the global robust model is trained with different values of $\gamma$ and then evaluated on given clients' test data (similar to traditional FL). Second, in the
\emph{distribution shift} setting, the training process is similar; however, the global robust model is evaluated when there are distribution shifts at the clients' test data. To obtain these shifts, we use the common PGD attack method in \cite{madry_towards_2019} under $l_{\infty}$-norm to generate an $\epsilon$-level perturbation on clients' test data. We choose the $l_{\infty}$-norm as it shows benefits in adversaries and gives large perturbations. Following \cite{madry_towards_2019}, we fix the number of gradient steps to generate adversarial examples with $t_{avd} = 40$, $\epsilon = 0.3$, $\alpha = 0.01$ for MNIST and $t_{avd} = 10$, $\epsilon = 8/255, \alpha = 2/255$ for CIFAR-10 with a batch size of 64.  
We note that this setting is similar to the adversarial poisoning attacks and the main purpose of this setting is to increase the Wasserstein distance between $\widehat P_{\lambda}$ and  $\widehat Q$, thereby verifying the robustness of \OurAlg. 

The performance of \OurAlg in both scenarios on MNIST and CIFAR-10 is shown in \Cref{F:Generalization}. When the clients' data is clean, the Wasserstein distance between $\widehat P_{\lambda}$ and  $\widehat Q$ is relatively small, and training \OurAlg with small $\gamma$ (large $\widehat \rho$) gives the worse performance on the test set. With a sufficiently large $\gamma$, \OurAlg is less robust and has the same generalization with FedAvg. By carefully fine-tuning $\gamma$ in the range $[0.5,1]$ for MNIST and $[10,20]$ for CIFAR-10, \OurAlg shows an improvement over FedAvg. $\gamma$ in this scenario plays the same role as a regularization parameter to handle non-i.i.d. data.

 
By adding distribution shifts, we increase the Wasserstein distance between $\widehat P_{\lambda}$ and $\widehat{Q}$. \textcolor{red}{Thanks to} varying $\gamma$, we train a global robust model with different levels of worst-case perturbation to handle the distribution shifts. Small values of $\gamma$ generate larger ambiguity sets $\gB(\widehat P_{\lambda}, \widehat \rho)$ and increase the robustness of \OurAlg, thus increase the chance $\widehat{Q}$ lies inside $\gB(\widehat P_{\lambda}, \widehat \rho)$. In \Cref{F:Generalization}, \OurAlg shows better performance with smaller $\gamma$ values. However, as in mentioned in \Cref{Generalization}, when $\widehat \rho$ is much larger or smaller than the level of distribution shifts ($\gamma \leq 0.01$  or $\gamma \geq 1 $ for MNIST and $\gamma \leq 0.1$  or $\gamma > 10$ for CIFAR-10), \OurAlg performs inefficiently: too small $\gamma$ hurts the empirical performance, while too large $\gamma$ makes \OurAlg lack robustness. 
For every scenario, $\gamma$ needs to be tuned correspondingly. By carefully choosing $\gamma$, \OurAlg not only handles distribution shifts or common data poisoning attacks but also provides better performance than FedAvg in non-i.i.d. data and heterogeneous settings. Specifically, we set $\gamma = 0.5$ for MNIST  and $\gamma = 10$  for CIFAR-10.

\textbf{Comparison with other robust methods.} 

\textcolor{red}{We compare \OurAlg with five baselines: four robust methods in FL called FedPGM, FedFGSM, Distributionally Robust Federated Averaging DRFA \citep{deng2021distributionally} and Agnostic Federated Learning AFL \cite{mohri2019agnostic}, and one non-robust method FedAvg}. FedPGM and FedFGSM are FedAvg with adversarial training using the projected gradient method PGD \citep{madry_towards_2019} and the fast-gradient method FGSM \citep{goodfellow_explaining_2015}, respectively. In FedPGM and FedFGSM, in each local update, all clients solve  $\delta^* = \argmax\nolimits_{\norm {\delta}_{\infty} \leq \epsilon}\big{\{} \ell(  h_{\theta}(z+ \delta),y)  \big{\}}$ using projection onto an $l_\infty$-norm to find the worst-case perturbation $\delta$.
While FedPGD uses $t_{avd}$ gradient steps to find $\delta^*$, FedFGSM uses only one gradient step. We use the same values of $\epsilon$ and $\alpha$ from the distribution shifts setting for projection. For a fair comparison, both \OurAlg and FedPGM have the same value $t_{avd}$ and all algorithms have the same number of local updates $K$. 
We also train \OurAlg with the value of $\gamma$ generating the same level perturbation of $\epsilon$ in FedPGM and FedFGSM. 
\textcolor{red}{The DRFA and AFL ultilize gradients and losses from clients to gradually find $\lambda$, whereas other approaches use fixed values of $\lambda$ resulted from proportions of data in each client. The AFL algorithm is considered as a special case of DRFA by performing one local gradient update at client. While accuracies of DRFA and AFL are averagely $2 \%$ and $3 \%$ higher than FedAvg respectively, both algorithms have to solve a projection problem on $l_1$-norm ball which puts more computational burden on server side. Although DRFA and AFL need less number of communications to converge, they add more payload to exchanged data between server and clients. From Fig.4, it is plausible that results of FedAvg with choosing lambdas values as proportion of obtained data are not much different from DRFA and AFL in terms of accuracy. In \citep{deng2021distributionally}, experimental results illustrate that FedAvg can achieve the same accuracies of DRFA and AFL with large enough communications.}
We study different proportions of clients having distribution shifts in \Cref{F:robust}. The robust accuracy of all methods decreases when the percentage of clients having distribution shifts increases. Especially, the FedAvg's performance drops dramatically when the percentage of attacked clients reaches $80\%$. For all scenarios, \OurAlg outperforms all the baselines.  Specifically, in the case of $80\%$ attacked clients, \textcolor{red}{the improvements in accuracy of \OurAlg over FedPGM, FedFGSM, DRFA, AFL and FedAvg are $7\%$, $8\%$, $32\%$, $32.6\%$ and $33\%$ for MNIST  and $2.5\%$, $4.5\%$, $12.3\%$, $13\%$ and $18\%$ for CIFAR-10, respectively.}

\textbf{Domain adaptation applications.}
\textcolor{red}{In order to evaluate the potential of estimating $\lambda$ in \OurAlg, we conduct multi-source domain adataption experiments using benchmark digit recognition datasets (MNIST - \emph{mt}, USPS - \emph{up}, SVHN - \emph{sv}).  Particularly, we choose two of the aforementioned datasets as source domains for training, and leave the rest for target domain testing. Leveraging computational Wasserstein distance methods between data distributions \citep{cuturi_2013_sinkhorn,otdd}, $\lambda$ in \OurAlg is obtained by solving the linear programming problem proposed in \Cref{E:star} in respect of domain adaptation applications. We then observe \OurAlg's performance, and compare with other $\lambda$ estimation methods including FedAvg, AFL and DRFA. The experiment results in Table \Cref{F:domain_2} depict that \OurAlg outperforms the remaining approaches in all scenarios, especially the accuracy on target domains in \textit{mt,sv$\rightarrow$up} and \textit{sv,up$\rightarrow$mt} are substantially improved in comparison to the previous best results with 4\% and 5\% respectively. With $\lambda = n_i/n$ in \textit{sv,up$\rightarrow$mt}, trained model on source distributions has poor performance on target domain. Since $sv$ dataset acquires the largest Wasserstein distance from target, $sv$ is the most different dataset compared to target. Due to obtaining a large number of samples, the impact of $\lambda = n_i/n$ of $sv$ overwhelm the remaining datasets. This has detrimental effect on accuracy of global model on target dataset.}


\begin{table}[ht!]
	\small
	\setlength\tabcolsep{4pt} 
	\caption{Effect of $\lambda$ on accuracy of target dataset}
	\centering
	\begin{tabular}{ccccc}
		\toprule
		\textbf{$\lambda$} & \textbf{ \small \textit{mt,sv$\rightarrow$up} }&\textbf{ \small \textit{mt,up$\rightarrow$sv} }& \textbf{ \small\textit{up,sv$\rightarrow$mt}} & Average\\ \midrule
		$n_i/n$      &        59.0\%    &             14.1\%       &              16.1\%         &  29.7\% \\
		$1/m$        &        58.7\%    &             14.9\%       &              52.1\%         &  41.6\% \\ 
		AFL          &        60.1\%    &             15.0\%       &              52.4\%         &  42.5\%\\
		DRFA         &        61.6\%    &             15.1\%       &              53.0\%         &  43.2\%\\
		\OurAlg      &\textbf{65.6}\%   &       \textbf{16.6}\%    &          \textbf{58.1}\%    &  46.7\%\\
		\bottomrule
	\end{tabular}
	\label{F:domain_2}
\end{table}

\comment{\textbf{Domain Adaptation.} \label{sec:experiment:domain_adaptation}
\begin{table}[ht!]
	\small
	\setlength\tabcolsep{4pt} 
	\caption{Performance of \OurAlg on domain adaptation application.}
	\centering
	\begin{tabular}{cclllll}
		\toprule
		Algorithm & \blue{ \small \textit{mt$\rightarrow$mm} }&\blue{\small \textit{mt$\rightarrow$up }}& \blue{\small\textit{mm$\rightarrow$up}} & \blue{\small \textit{mm$\rightarrow$mt }}& Average \\ \midrule
		\OurAlg    &      \textbf{40.23}          &       \textbf{66.94}        &          \textbf{62.04}&            \textbf{79.27}        &   \textbf{62.12}  \\
		FedPGM    &       37.36       &      62.74   &    58.23        &      71.84    &    57.54  \\
		FedAvg    &        37.71    &             66.08        &              58.60    &           75.80                   &   59.55  \\ \bottomrule
	\end{tabular}
	\label{F:domain}
\end{table}
In order to show the ability of \OurAlg in transferring the knowledge from multi-source domain $\widehat P_{\lambda}$ to a different but related target domain $\widehat Q$. We consider an unsupervised setting in which the target client only has data without labels. So the target client is not able to train the model, thus it needs the global model contributed from all source clients to make predictions without training. In recent works on federated domain adaptation \citep{pengFederatedAdversarialDomain2019, liuFedDGFederatedDomain2021}, source domains need to access to the private data of target domain during training to find the common representation between source and target domains, which does not guarantee the data privacy of FL. By contrast, \OurAlg aims to build a robust global model which can be directly applied to the related target client without accessing its private data. To simulate this setting, we use three-digit classification datasets including MNIST (\emph{mt}), MNIST-M (\emph{mm}), and USPS (\emph{up}) which are different but related and widely used in domain adaptation. We distribute one dataset to 100 source clients, and leverage another dataset for the target client. For example, if we use MNIST for source clients and MNIST-M for the target client, then we have the \blue{\emph{mt}$\rightarrow$\emph{mm}} case. We do similar to other cases in \Cref{F:domain}. We compare \OurAlg with FedPGM and FedAvg as they have the same setting regarding preserving data privacy. The results in \Cref{F:domain} show that \OurAlg outperforms the baselines, which shows its benefits in transferring knowledge from multi-source clients to a related target client.
}

}
\section{Conclusion}

In this paper, we apply the Wasserstein distributionally robust training method to federated learning to handle statistical heterogeneity. We first remodel the duality of the worst-case risk to an empirical surrogate risk minimization problem, and then solve it using a local SGD-based algorithm with convergence guarantees. We show that \OurAlg is more general in terms of robustness compared to related approaches, and obtains an explicit robust generalization bound with respect to all unknown distributions in the Wasserstein ambiguity set. Through numerical experiments, we demonstrate that \OurAlg generalizes better than the standard FedAvg baseline in non-i.i.d. settings, and outperforms other robust FL methods in scenarios with distribution shifts and in applications of multi-source domain adaptation.

\newpage
\bibliography{neurips_2022.bib}

\begin{thebibliography}{50}
\providecommand{\natexlab}[1]{#1}
\providecommand{\url}[1]{\texttt{#1}}
\expandafter\ifx\csname urlstyle\endcsname\relax
  \providecommand{\doi}[1]{doi: #1}\else
  \providecommand{\doi}{doi: \begingroup \urlstyle{rm}\Url}\fi

\bibitem[Konečný et~al.(2016)Konečný, McMahan, Ramage, and
  Richtárik]{konecny_federated_2016}
Jakub Konečný, H.~Brendan McMahan, Daniel Ramage, and Peter Richtárik.
\newblock Federated {Optimization}: {Distributed} {Machine} {Learning} for
  {On}-{Device} {Intelligence}.
\newblock \emph{arXiv:1610.02527 [cs]}, October 2016.
\newblock URL \url{http://arxiv.org/abs/1610.02527}.
\newblock arXiv: 1610.02527.

\bibitem[McMahan et~al.(2017)McMahan, Moore, Ramage, Hampson, and
  Arcas]{mcmahan_communication-efficient_2017}
H.~Brendan McMahan, Eider Moore, Daniel Ramage, Seth Hampson, and Blaise
  Agüera~y Arcas.
\newblock Communication-{Efficient} {Learning} of {Deep} {Networks} from
  {Decentralized} {Data}.
\newblock \emph{arXiv:1602.05629 [cs]}, February 2017.
\newblock URL \url{http://arxiv.org/abs/1602.05629}.
\newblock arXiv: 1602.05629.

\bibitem[Mansour et~al.(2020)Mansour, Mohri, Ro, and
  Suresh]{mansour_three_2020}
Yishay Mansour, Mehryar Mohri, Jae Ro, and Ananda~Theertha Suresh.
\newblock Three {Approaches} for {Personalization} with {Applications} to
  {Federated} {Learning}.
\newblock \emph{arXiv:2002.10619 [cs, stat]}, July 2020.
\newblock URL \url{http://arxiv.org/abs/2002.10619}.
\newblock arXiv: 2002.10619.

\bibitem[Fallah et~al.(2020)Fallah, Mokhtari, and
  Ozdaglar]{fallah_personalized_2020}
Alireza Fallah, Aryan Mokhtari, and Asuman Ozdaglar.
\newblock Personalized {Federated} {Learning} with {Theoretical} {Guarantees}:
  {A} {Model}-{Agnostic} {Meta}-{Learning} {Approach}.
\newblock In \emph{Advances in {Neural} {Information} {Processing} {Systems}},
  volume~33, pages 3557--3568. Curran Associates, Inc., 2020.
\newblock URL
  \url{https://proceedings.neurips.cc/paper/2020/hash/24389bfe4fe2eba8bf9aa9203a44cdad-Abstract.html}.

\bibitem[Deng et~al.(2020{\natexlab{a}})Deng, Kamani, and
  Mahdavi]{deng_adaptive_2020}
Yuyang Deng, Mohammad~Mahdi Kamani, and Mehrdad Mahdavi.
\newblock Adaptive {Personalized} {Federated} {Learning}.
\newblock \emph{arXiv:2003.13461 [cs, stat]}, November 2020{\natexlab{a}}.
\newblock URL \url{http://arxiv.org/abs/2003.13461}.
\newblock arXiv: 2003.13461.

\bibitem[Dinh et~al.(2020)Dinh, Tran, and Nguyen]{dinh_personalized_2021}
Canh~T. Dinh, Nguyen~H. Tran, and Tuan~Dung Nguyen.
\newblock Personalized {Federated} {Learning} with {Moreau} {Envelopes}.
\newblock In H.~Larochelle, M.~Ranzato, R.~Hadsell, M.~F. Balcan, and H.~Lin,
  editors, \emph{Advances in Neural Information Processing Systems}, volume~33,
  pages 21394--21405. Curran Associates, Inc., 2020.
\newblock URL
  \url{https://proceedings.neurips.cc/paper/2020/file/f4f1f13c8289ac1b1ee0ff176b56fc60-Paper.pdf}.

\bibitem[Li et~al.(2021)Li, Hu, Beirami, and Smith]{li_ditto_2021}
Tian Li, Shengyuan Hu, Ahmad Beirami, and Virginia Smith.
\newblock Ditto: {Fair} and {Robust} {Federated} {Learning} {Through}
  {Personalization}.
\newblock \emph{arXiv:2012.04221 [cs, stat]}, June 2021.
\newblock URL \url{http://arxiv.org/abs/2012.04221}.
\newblock arXiv: 2012.04221.

\bibitem[Collins et~al.(2021)Collins, Hassani, Mokhtari, and
  Shakkottai]{Collins2021}
Liam Collins, Hamed Hassani, Aryan Mokhtari, and Sanjay Shakkottai.
\newblock Exploiting shared representations for personalized federated
  learning, 2021.
\newblock URL \url{https://arxiv.org/abs/2102.07078}.

\bibitem[Smith et~al.(2018)Smith, Chiang, Sanjabi, and
  Talwalkar]{smith_federated_2018}
Virginia Smith, Chao-Kai Chiang, Maziar Sanjabi, and Ameet Talwalkar.
\newblock Federated {Multi}-{Task} {Learning}.
\newblock \emph{arXiv:1705.10467 [cs, stat]}, February 2018.
\newblock URL \url{http://arxiv.org/abs/1705.10467}.
\newblock arXiv: 1705.10467.

\bibitem[Marfoq et~al.(2021)Marfoq, Neglia, Bellet, Kameni, and
  Vidal]{Marfoq2021}
Othmane Marfoq, Giovanni Neglia, Aurélien Bellet, Laetitia Kameni, and Richard
  Vidal.
\newblock Federated multi-task learning under a mixture of distributions, 2021.
\newblock URL \url{https://arxiv.org/abs/2108.10252}.

\bibitem[Mohri et~al.(2019)Mohri, Sivek, and Suresh]{mohri_agnostic_2019}
Mehryar Mohri, Gary Sivek, and Ananda~Theertha Suresh.
\newblock Agnostic {Federated} {Learning}.
\newblock \emph{arXiv:1902.00146 [cs, stat]}, January 2019.
\newblock URL \url{http://arxiv.org/abs/1902.00146}.
\newblock arXiv: 1902.00146.

\bibitem[Du et~al.(2020)Du, Xu, Wu, and Tong]{du_fairness-aware_2020}
Wei Du, Depeng Xu, Xintao Wu, and Hanghang Tong.
\newblock Fairness-aware {Agnostic} {Federated} {Learning}.
\newblock \emph{arXiv:2010.05057 [cs]}, October 2020.
\newblock URL \url{http://arxiv.org/abs/2010.05057}.
\newblock arXiv: 2010.05057.

\bibitem[Reisizadeh et~al.(2020{\natexlab{a}})Reisizadeh, Farnia, Pedarsani,
  and Jadbabaie]{reisizadeh_robust_2020}
Amirhossein Reisizadeh, Farzan Farnia, Ramtin Pedarsani, and Ali Jadbabaie.
\newblock Robust {Federated} {Learning}: {The} {Case} of {Affine}
  {Distribution} {Shifts}.
\newblock \emph{arXiv:2006.08907 [cs, math, stat]}, June 2020{\natexlab{a}}.
\newblock URL \url{http://arxiv.org/abs/2006.08907}.
\newblock arXiv: 2006.08907.

\bibitem[Deng et~al.(2020{\natexlab{b}})Deng, Kamani, and
  Mahdavi]{deng_distributionally_2020}
Yuyang Deng, Mohammad~Mahdi Kamani, and Mehrdad Mahdavi.
\newblock Distributionally {Robust} {Federated} {Averaging}.
\newblock In \emph{Advances in {Neural} {Information} {Processing} {Systems}},
  volume~33, pages 15111--15122. Curran Associates, Inc., 2020{\natexlab{b}}.
\newblock URL
  \url{https://proceedings.neurips.cc/paper/2020/hash/ac450d10e166657ec8f93a1b65ca1b14-Abstract.html}.

\bibitem[Bartlett et~al.(2017)Bartlett, Foster, and
  Telgarsky]{bartlett_spectrally-normalized_2017}
Peter~L Bartlett, Dylan~J Foster, and Matus~J Telgarsky.
\newblock Spectrally-normalized margin bounds for neural networks.
\newblock In I.~Guyon, U.~V. Luxburg, S.~Bengio, H.~Wallach, R.~Fergus,
  S.~Vishwanathan, and R.~Garnett, editors, \emph{Advances in Neural
  Information Processing Systems}, volume~30. Curran Associates, Inc., 2017.
\newblock URL
  \url{https://proceedings.neurips.cc/paper/2017/file/b22b257ad0519d4500539da3c8bcf4dd-Paper.pdf}.

\bibitem[Konečný et~al.(2017)Konečný, McMahan, Yu, Richtárik, Suresh, and
  Bacon]{konecny_federated_2017}
Jakub Konečný, H.~Brendan McMahan, Felix~X. Yu, Peter Richtárik,
  Ananda~Theertha Suresh, and Dave Bacon.
\newblock Federated {Learning}: {Strategies} for {Improving} {Communication}
  {Efficiency}.
\newblock \emph{arXiv:1610.05492 [cs]}, October 2017.
\newblock URL \url{http://arxiv.org/abs/1610.05492}.
\newblock arXiv: 1610.05492.

\bibitem[Suresh et~al.(2017)Suresh, Yu, Kumar, and
  McMahan]{suresh_distributed_2017}
Ananda~Theertha Suresh, Felix~X. Yu, Sanjiv Kumar, and H.~Brendan McMahan.
\newblock Distributed {Mean} {Estimation} with {Limited} {Communication}.
\newblock \emph{arXiv:1611.00429 [cs]}, September 2017.
\newblock URL \url{http://arxiv.org/abs/1611.00429}.
\newblock arXiv: 1611.00429.

\bibitem[Reisizadeh et~al.(2020{\natexlab{b}})Reisizadeh, Mokhtari, Hassani,
  Jadbabaie, and Pedarsani]{pmlr-v108-reisizadeh20a}
Amirhossein Reisizadeh, Aryan Mokhtari, Hamed Hassani, Ali Jadbabaie, and
  Ramtin Pedarsani.
\newblock Fedpaq: A communication-efficient federated learning method with
  periodic averaging and quantization.
\newblock In Silvia Chiappa and Roberto Calandra, editors, \emph{Proceedings of
  the Twenty Third International Conference on Artificial Intelligence and
  Statistics}, volume 108 of \emph{Proceedings of Machine Learning Research},
  pages 2021--2031. PMLR, 26--28 Aug 2020{\natexlab{b}}.
\newblock URL \url{https://proceedings.mlr.press/v108/reisizadeh20a.html}.

\bibitem[Li et~al.(2020)Li, Huang, Yang, Wang, and Zhang]{li_convergence_2020}
Xiang Li, Kaixuan Huang, Wenhao Yang, Shusen Wang, and Zhihua Zhang.
\newblock On the {Convergence} of {FedAvg} on {Non}-{IID} {Data}.
\newblock \emph{arXiv:1907.02189 [cs, math, stat]}, June 2020.
\newblock URL \url{http://arxiv.org/abs/1907.02189}.
\newblock arXiv: 1907.02189.

\bibitem[Zhao et~al.(2018)Zhao, Li, Lai, Suda, Civin, and
  Chandra]{zhao_federated_2018}
Yue Zhao, Meng Li, Liangzhen Lai, Naveen Suda, Damon Civin, and Vikas Chandra.
\newblock Federated {Learning} with {Non}-{IID} {Data}.
\newblock \emph{arXiv:1806.00582 [cs, stat]}, June 2018.
\newblock URL \url{http://arxiv.org/abs/1806.00582}.
\newblock arXiv: 1806.00582.

\bibitem[Kuhn et~al.(2019)Kuhn, Esfahani, Nguyen, and
  Shafieezadeh-Abadeh]{kuhn_wasserstein_2019}
Daniel Kuhn, Peyman~Mohajerin Esfahani, Viet~Anh Nguyen, and Soroosh
  Shafieezadeh-Abadeh.
\newblock Wasserstein {Distributionally} {Robust} {Optimization}: {Theory} and
  {Applications} in {Machine} {Learning}.
\newblock \emph{arXiv:1908.08729 [cs, math, stat]}, August 2019.
\newblock URL \url{http://arxiv.org/abs/1908.08729}.
\newblock arXiv: 1908.08729.

\bibitem[Shafieezadeh~Abadeh et~al.(2015)Shafieezadeh~Abadeh,
  Mohajerin~Esfahani, and Kuhn]{shafieezadeh_abadeh_distributionally_2015}
Soroosh Shafieezadeh~Abadeh, Peyman~Mohajerin Mohajerin~Esfahani, and Daniel
  Kuhn.
\newblock Distributionally {Robust} {Logistic} {Regression}.
\newblock In \emph{Advances in {Neural} {Information} {Processing} {Systems}},
  volume~28. Curran Associates, Inc., 2015.
\newblock URL
  \url{https://papers.nips.cc/paper/2015/hash/cc1aa436277138f61cda703991069eaf-Abstract.html}.

\bibitem[Chen and Paschalidis(2018)]{chen_robust_2018}
Ruidi Chen and Ioannis~Ch Paschalidis.
\newblock A {Robust} {Learning} {Approach} for {Regression} {Models} {Based} on
  {Distributionally} {Robust} {Optimization}.
\newblock \emph{Journal of Machine Learning Research}, 19\penalty0
  (13):\penalty0 1--48, 2018.
\newblock ISSN 1533-7928.
\newblock URL \url{http://jmlr.org/papers/v19/17-295.html}.

\bibitem[Blanchet et~al.(2019)Blanchet, Kang, and Murthy]{blanchet_robust_2019}
Jose Blanchet, Yang Kang, and Karthyek Murthy.
\newblock Robust {Wasserstein} {Profile} {Inference} and {Applications} to
  {Machine} {Learning}.
\newblock \emph{Journal of Applied Probability}, 56\penalty0 (3):\penalty0
  830--857, September 2019.
\newblock ISSN 0021-9002, 1475-6072.
\newblock \doi{10.1017/jpr.2019.49}.
\newblock URL \url{http://arxiv.org/abs/1610.05627}.
\newblock arXiv: 1610.05627.

\bibitem[Gao et~al.(2020)Gao, Chen, and Kleywegt]{gao_wasserstein_2020}
Rui Gao, Xi~Chen, and Anton~J. Kleywegt.
\newblock Wasserstein {Distributionally} {Robust} {Optimization} and
  {Variation} {Regularization}.
\newblock \emph{arXiv:1712.06050 [cs, math, stat]}, October 2020.
\newblock URL \url{http://arxiv.org/abs/1712.06050}.
\newblock arXiv: 1712.06050.

\bibitem[Gao and Kleywegt(2016)]{gao_distributionally_2016}
Rui Gao and Anton~J. Kleywegt.
\newblock Distributionally {Robust} {Stochastic} {Optimization} with
  {Wasserstein} {Distance}.
\newblock \emph{arXiv:1604.02199 [math]}, July 2016.
\newblock URL \url{http://arxiv.org/abs/1604.02199}.
\newblock arXiv: 1604.02199.

\bibitem[Esfahani and Kuhn(2017)]{esfahani_data-driven_2017}
Peyman~Mohajerin Esfahani and Daniel Kuhn.
\newblock Data-driven {Distributionally} {Robust} {Optimization} {Using} the
  {Wasserstein} {Metric}: {Performance} {Guarantees} and {Tractable}
  {Reformulations}.
\newblock \emph{arXiv:1505.05116 [math, stat]}, June 2017.
\newblock URL \url{http://arxiv.org/abs/1505.05116}.
\newblock arXiv: 1505.05116.

\bibitem[Sinha et~al.(2020)Sinha, Namkoong, Volpi, and
  Duchi]{sinha_certifying_2020}
Aman Sinha, Hongseok Namkoong, Riccardo Volpi, and John Duchi.
\newblock Certifying {Some} {Distributional} {Robustness} with {Principled}
  {Adversarial} {Training}.
\newblock \emph{arXiv:1710.10571 [cs, stat]}, May 2020.
\newblock URL \url{http://arxiv.org/abs/1710.10571}.
\newblock arXiv: 1710.10571.

\bibitem[Lau and Liu(2022)]{Lau2022}
Tim Tsz-Kit Lau and Han Liu.
\newblock Wasserstein distributionally robust optimization via wasserstein
  barycenters, 2022.
\newblock URL \url{https://arxiv.org/abs/2203.12136}.

\bibitem[Fournier and Guillin(2015)]{fournier_rate_2013}
Nicolas Fournier and Arnaud Guillin.
\newblock On the rate of convergence in wasserstein distance of the empirical
  measure.
\newblock \emph{Probability Theory and Related Fields}, 162:\penalty0 707,
  2015.
\newblock \doi{10.1007/s00440-014-0583-7}.

\bibitem[Goodfellow et~al.(2015)Goodfellow, Shlens, and
  Szegedy]{goodfellow_explaining_2015}
Ian~J. Goodfellow, Jonathon Shlens, and Christian Szegedy.
\newblock Explaining and {Harnessing} {Adversarial} {Examples}.
\newblock \emph{arXiv:1412.6572 [cs, stat]}, March 2015.
\newblock URL \url{http://arxiv.org/abs/1412.6572}.
\newblock arXiv: 1412.6572.

\bibitem[Kurakin et~al.(2017)Kurakin, Goodfellow, and
  Bengio]{kurakin_adversarial_2017}
Alexey Kurakin, Ian Goodfellow, and Samy Bengio.
\newblock Adversarial {Machine} {Learning} at {Scale}.
\newblock \emph{arXiv:1611.01236 [cs, stat]}, February 2017.
\newblock URL \url{http://arxiv.org/abs/1611.01236}.
\newblock arXiv: 1611.01236.

\bibitem[Carlini and Wagner(2017)]{carlini_towards_2017}
Nicholas Carlini and David Wagner.
\newblock Towards {Evaluating} the {Robustness} of {Neural} {Networks}.
\newblock \emph{arXiv:1608.04644 [cs]}, March 2017.
\newblock URL \url{http://arxiv.org/abs/1608.04644}.
\newblock arXiv: 1608.04644.

\bibitem[Madry et~al.(2019)Madry, Makelov, Schmidt, Tsipras, and
  Vladu]{madry_towards_2019}
Aleksander Madry, Aleksandar Makelov, Ludwig Schmidt, Dimitris Tsipras, and
  Adrian Vladu.
\newblock Towards {Deep} {Learning} {Models} {Resistant} to {Adversarial}
  {Attacks}.
\newblock \emph{arXiv:1706.06083 [cs, stat]}, September 2019.
\newblock URL \url{http://arxiv.org/abs/1706.06083}.
\newblock arXiv: 1706.06083.

\bibitem[Tramèr et~al.(2020)Tramèr, Kurakin, Papernot, Goodfellow, Boneh, and
  McDaniel]{tramer_ensemble_2020}
Florian Tramèr, Alexey Kurakin, Nicolas Papernot, Ian Goodfellow, Dan Boneh,
  and Patrick McDaniel.
\newblock Ensemble {Adversarial} {Training}: {Attacks} and {Defenses}.
\newblock \emph{arXiv:1705.07204 [cs, stat]}, April 2020.
\newblock URL \url{http://arxiv.org/abs/1705.07204}.
\newblock arXiv: 1705.07204.

\bibitem[Wang et~al.(2021)Wang, Charles, Xu, Joshi, McMahan, Arcas,
  Al-Shedivat, Andrew, Avestimehr, Daly, Data, et~al.]{wang_field_2021}
Jianyu Wang, Zachary Charles, Zheng Xu, Gauri Joshi, H.~Brendan McMahan, Blaise
  Aguera~y Arcas, Maruan Al-Shedivat, Galen Andrew, Salman Avestimehr,
  Katharine Daly, Deepesh Data, et~al.
\newblock A {Field} {Guide} to {Federated} {Optimization}.
\newblock \emph{arXiv:2107.06917 [cs]}, July 2021.
\newblock URL \url{http://arxiv.org/abs/2107.06917}.
\newblock arXiv: 2107.06917.

\bibitem[Lee and Raginsky(2018)]{lee_minimax_2018}
Jaeho Lee and Maxim Raginsky.
\newblock Minimax {Statistical} {Learning} with {Wasserstein} distances.
\newblock In \emph{Advances in {Neural} {Information} {Processing} {Systems}},
  volume~31. Curran Associates, Inc., 2018.
\newblock URL
  \url{https://papers.nips.cc/paper/2018/hash/ea8fcd92d59581717e06eb187f10666d-Abstract.html}.

\bibitem[Mansour et~al.(2021)Mansour, Mohri, Ro, Suresh, and
  Wu]{mansour_theory_2021}
Yishay Mansour, Mehryar Mohri, Jae Ro, Ananda~Theertha Suresh, and Ke~Wu.
\newblock A {Theory} of {Multiple}-{Source} {Adaptation} with {Limited}
  {Target} {Labeled} {Data}.
\newblock In \emph{Proceedings of {The} 24th {International} {Conference} on
  {Artificial} {Intelligence} and {Statistics}}, pages 2332--2340. PMLR, March
  2021.
\newblock URL \url{https://proceedings.mlr.press/v130/mansour21a.html}.
\newblock ISSN: 2640-3498.

\bibitem[Peyré and Cuturi(2019)]{gabriel_2019_cot}
Gabriel Peyré and Marco Cuturi.
\newblock Computational optimal transport: With applications to data science.
\newblock \emph{Foundations and Trends® in Machine Learning}, 11\penalty0
  (5-6):\penalty0 355--607, 2019.
\newblock ISSN 1935-8237.
\newblock \doi{10.1561/2200000073}.
\newblock URL \url{http://dx.doi.org/10.1561/2200000073}.

\bibitem[Cuturi(2013)]{cuturi_2013_sinkhorn}
Marco Cuturi.
\newblock {Sinkhorn Distances}: {Lightspeed Computation} of {Optimal
  Transport}.
\newblock In \emph{Proceedings of the 26th International Conference on Neural
  Information Processing Systems - Volume 2}, NIPS'13, page 2292–2300, Red
  Hook, NY, USA, 2013. Curran Associates Inc.
\newblock URL
  \url{https://proceedings.neurips.cc/paper/2013/file/af21d0c97db2e27e13572cbf59eb343d-Paper.pdf}.

\bibitem[Lecun et~al.(1998)Lecun, Bottou, Bengio, and
  Haffner]{lecun_gradient-based_1998}
Y.~Lecun, L.~Bottou, Y.~Bengio, and P.~Haffner.
\newblock Gradient-based learning applied to document recognition.
\newblock \emph{Proceedings of the IEEE}, 86\penalty0 (11):\penalty0
  2278--2324, November 1998.
\newblock ISSN 1558-2256.
\newblock \doi{10.1109/5.726791}.
\newblock Conference Name: Proceedings of the IEEE.

\bibitem[Krizhevsky(2009)]{krizhevskyLearningMultipleLayers}
Alex Krizhevsky.
\newblock Learning {{Multiple Layers}} of {{Features}} from {{Tiny Images}}.
\newblock page~60, 2009.

\bibitem[Hull(1994)]{hullDatabaseHandwrittenText1994}
J.J. Hull.
\newblock A database for handwritten text recognition research.
\newblock \emph{IEEE Transactions on Pattern Analysis and Machine
  Intelligence}, 16\penalty0 (5):\penalty0 550--554, May 1994.
\newblock ISSN 1939-3539.
\newblock \doi{10.1109/34.291440}.

\bibitem[Netzer et~al.(2011)Netzer, Wang, Coates, Bissacco, Wu, and Ng]{37648}
Yuval Netzer, Tao Wang, Adam Coates, Alessandro Bissacco, Bo~Wu, and Andrew~Y.
  Ng.
\newblock Reading digits in natural images with unsupervised feature learning.
\newblock In \emph{NIPS Workshop on Deep Learning and Unsupervised Feature
  Learning 2011}, 2011.
\newblock URL
  \url{http://ufldl.stanford.edu/housenumbers/nips2011_housenumbers.pdf}.

\bibitem[Alvarez-Melis and Fusi(2020)]{otdd}
David Alvarez-Melis and Nicolo Fusi.
\newblock {Geometric Dataset Distances} via {Optimal Transport}.
\newblock In H.~Larochelle, M.~Ranzato, R.~Hadsell, M.~F. Balcan, and H.~Lin,
  editors, \emph{Advances in Neural Information Processing Systems}, volume~33,
  pages 21428--21439. Curran Associates, Inc., 2020.
\newblock URL
  \url{https://proceedings.neurips.cc/paper/2020/file/f52a7b2610fb4d3f74b4106fb80b233d-Paper.pdf}.

\bibitem[Papernot et~al.(2015)Papernot, McDaniel, Jha, Fredrikson, Celik, and
  Swami]{papernot_limitations_2015}
Nicolas Papernot, Patrick McDaniel, Somesh Jha, Matt Fredrikson, Z.~Berkay
  Celik, and Ananthram Swami.
\newblock The {Limitations} of {Deep} {Learning} in {Adversarial} {Settings}.
\newblock \emph{arXiv:1511.07528 [cs, stat]}, November 2015.
\newblock URL \url{http://arxiv.org/abs/1511.07528}.
\newblock arXiv: 1511.07528.

\bibitem[Gorski et~al.(2007)Gorski, Pfeuffer, and
  Klamroth]{gorski_biconvex_2007}
Jochen Gorski, Frank Pfeuffer, and Kathrin Klamroth.
\newblock Biconvex sets and optimization with biconvex functions: a survey and
  extensions.
\newblock \emph{Mathematical Methods of Operations Research}, 66\penalty0
  (3):\penalty0 373--407, December 2007.
\newblock ISSN 1432-5217.
\newblock \doi{10.1007/s00186-007-0161-1}.
\newblock URL \url{https://doi.org/10.1007/s00186-007-0161-1}.

\bibitem[Shalev-Shwartz and Ben-David(2014)]{shalev_shwartz_understanding_2014}
Shai Shalev-Shwartz and Shai Ben-David.
\newblock \emph{Understanding {Machine} {Learning}: {From} {Theory} to
  {Algorithms}}.
\newblock Cambridge University Press, USA, 2014.
\newblock ISBN 978-1-107-05713-5.

\bibitem[Santambrogio(2015)]{santambrogio_optimal_2015}
Filippo Santambrogio.
\newblock \emph{Optimal {Transport} for {Applied} {Mathematicians}: {Calculus}
  of {Variations}, {PDEs}, and {Modeling}}.
\newblock Progress in {Nonlinear} {Differential} {Equations} and {Their}
  {Applications}. Birkhäuser Basel, 2015.
\newblock ISBN 978-3-319-20827-5.
\newblock \doi{10.1007/978-3-319-20828-2}.
\newblock URL \url{https://www.springer.com/gp/book/9783319208275}.

\bibitem[Paszke et~al.(2019)Paszke, Gross, Massa, Lerer, Bradbury, Chanan,
  Killeen, Lin, Gimelshein, Antiga, Desmaison, Kopf, Yang, DeVito, Raison,
  Tejani, Chilamkurthy, Steiner, Fang, Bai, and Chintala]{PyTorch2019}
Adam Paszke, Sam Gross, Francisco Massa, Adam Lerer, James Bradbury, Gregory
  Chanan, Trevor Killeen, Zeming Lin, Natalia Gimelshein, Luca Antiga, Alban
  Desmaison, Andreas Kopf, Edward Yang, Zachary DeVito, Martin Raison, Alykhan
  Tejani, Sasank Chilamkurthy, Benoit Steiner, Lu~Fang, Junjie Bai, and Soumith
  Chintala.
\newblock {PyTorch}: {An} {Imperative} {Style}, {High}-{Performance} {Deep}
  {Learning} {Library}.
\newblock In \emph{Advances in {Neural} {Information} {Processing} {Systems}
  32}, Vancouver, BC, Canada, 2019.

\end{thebibliography}
\bibliographystyle{unsrtnat}

\newpage
\appendix

\section*{Appendix}


\section{Adversarial Robust FL's Ambiguity Set vs  Wassertein Ball} \label{Apx:adversarial}

{We show that using the Wasserstein ambiguity set contains the perturbation points induced by the solution to the \textit{Adversarial Robust FL} approach.  As we present in \Cref{subsec:wasserstein_risk_fl},  existing techniques for adversarial training robust models ~\citep{goodfellow_explaining_2015, papernot_limitations_2015, kurakin_adversarial_2017, carlini_towards_2017, madry_towards_2019, tramer_ensemble_2020} define an adversarial perturbation $u$ at a data point $Z$, and minimize the following worst-case loss over all possible perturbations
	\begin{equation}
		\underset{u \in \gU}{\mbox{max }} \mathbf{E}_{Z \sim \widehat{P}_{\lambda}} \Big{[}\ell(Z + u,  h_{\theta})\Big{]},  \label{E:U}
	\end{equation}
	where the ambiguity set  $\, \gU \defeq \set{u \in \mathbb{R}^{d+1} : \norm{u} \leq \eps}$.  To compare this approach with Wasserstein-robust FL,  we  relate the above problem to its counterpart defined  in the probability space of input as follows
	\begin{equation}
		\underset{\tilde{Q} \in \gQ(\eps)}{\mbox{max }} \mathbf{E}_{\tilde{Z} \sim \tilde{Q}} \Big{[}\ell(\tilde{Z},  h_{\theta})\Big{]},\label{E:Q}
	\end{equation}
	$\text{where } \gQ(\eps) \defeq \set{\tilde{Q} : \mathbb{P} \bigS{\norm{\tilde{Z} - Z} \leq \eps} = 1, Z \sim \widehat{P}_{\lambda}, \tilde{Z} \sim \tilde{Q}}.$
	Considering $u^*$ as a solution to problem \cref{E:U},  we see that the distribution $Q'$ of perturbation points (i.e., $\tilde{Z} \defeq (Z+u^*) \sim {Q}'$) in  problem \cref{E:U}   belongs to the feasible set  $\gQ(\eps)$ in problem \cref{E:Q} (If not, then $\mathbb{P} \bigS{\norm{u^*} \leq \eps} <1$, a contradiction). Next, consider an arbitrary  distribution $\tilde{Q} \in \gQ(\eps)$ in problem \cref{E:Q},  with any $\tilde{Z} \sim \tilde{Q}$ and $Z \sim \widehat{P}_{\lambda}$, we have
	\begin{align*}
		\norm{\tilde{Z} - Z} \stackrel{\text{w.p.1}}{\leq }\eps  \implies \mathbf{E}_{Z \sim \widehat{P}_{\lambda}, \tilde{Z} \sim  \tilde{Q}} \bigS{\norm{\tilde{Z} - Z}} \leq \eps \implies \inf_{\pi \in \Pi(\widehat{P}_{\lambda}, \tilde{Q})} \mathbf{E}_{(Z, Z') \sim \pi} \bigS{\norm{\tilde{Z} - Z}} \leq \eps,
	\end{align*}
	which implies that $ W_1(\widehat{P}_{\lambda}, \tilde{Q}) \leq \eps, \forall \tilde{Q} \in \gQ(\eps)$, and thus  $\gQ(\eps) \subset \gB_1(\widehat{P}_{\lambda}, \eps)$. We have shown that the Wasserstein ambiguity set contains the perturbation points induced by the solution to the adversarial robust training problem \cref{E:U}. }

\comment{
\subsection{Robust Generalization Bounds} 
We now present how we adapt the result from \citet{fournier_rate_2013} to obtain \cref{corr1} in Section 4.
	\begin{proposition}[Measure concentration~{\citep[Theorem~2]{fournier_rate_2013}}]\label{pro:WassRate}
		Let $P$ be a probability distribution on a bounded set $\mathcal{Z}$. Let $\widehat{P}_{n}$ denote the empirical distribution of $Z_{1}, \ldots, Z_{n} \stackrel{\text { i.i.d. }}{\sim} P.$ Assuming that there exist constants $a>1$  such that $A \defeq \mathbf{E}_{Z \sim P}\bigS{\exp(\norm{Z}^a)} < \infty$ (i.e., $P$ is a light-tail distribution). Then, for any $\rho > 0$,
		$$
		\mathbf{P}\left[ W_p (\widehat{P}_{n}, P) \geq  \rho \right]  \leq \begin{cases}c_{1} \exp \left(-c_{2} n \rho^{\max \{d/p,2\}}\right) & \text { if } \rho \leq 1 \\ c_{1} \exp \left(-c_{2} n \rho^{a}\right) & \text { if } \rho>1\end{cases}
		$$
		where $c_{1}, c_{2}$ are constants depending on $a, A$ and $d$.
	\end{proposition}
	As a consequence of this proposition,  for any $\delta > 0$, we have  
	\begin{align}\label{E:deltarho}
		\mathbf{P}\left[ W_2 (\widehat{P}_{n}, P) \leq  \widehat{\rho}_{n}^{\delta} \right] \geq 1 - \delta \; \text{  where  } \; \widehat{\rho}_{n}^{\delta} \defeq \begin{cases}\left(\frac{\log \left(c_{1} / \delta \right)}{c_{2} n}\right)^{\min \{2/ d, 1 / 2\}} & \text { if } n \geq \frac{\log \left(c_{1} / \delta\right)}{c_{2}}, \\ \left(\frac{\log \left(c_{1} / \delta\right)}{c_{2} n}\right)^{1 / \alpha} & \text { if } n<\frac{\log \left(c_{1} / \delta\right)}{c_{2}}.\end{cases}
	\end{align}
	
	\begin{remark} 
		In \Cref{pro:WassRate}, \citet{fournier_rate_2013} show that the empirical distribution $\widehat{P}_n$ converges in Wasserstein distance to the true $P$ at a specific rate. This implies that  judiciously scaling the radius of Wasserstein balls according to \cref{E:deltarho} provides natural confidence regions for the data-generating distribution $P$.  Exploiting this fact for \OurAlg, we obtain the excess risk of ${f}_{\widehat{\theta}^{\varepsilon}}$ w.r.t. its true data distribution $P_{\lambda}$.
	\end{remark}
}

\section{Choosing $\lambda$: generalizing to all client distributions}
\label{sec:lambda:generalization}
{
We show that by calibrating appropriate $\lambda$ value, our proposed algorithm will be capable of generalizing to all client distributions. Suppose we want to cover all client distributions inside a Wassertein ball so that the generalization and robustness result by \OurAlg  in Theorem~\ref{thrm:excess_risk} is applicable to all clients' distributions. This is the problem of finding $\lambda$ such that the Wasserstein distance between $P_{\lambda}$ and $P_j, \forall j$, is as small as possible. Instead of directly finding the minimum  Wasserstein radius  that cover all client distributions, we will leverage the popular Wasserstein barycenter problem \citep{gabriel_2019_cot}. Specifically, consider the problem
	\begin{align} \label{E:barycenter}
		\min_{\lambda \in \Delta} W_2^2(\widehat{P}_{{\lambda}}, P^{\varheart})  \quad \text{s.t. } \quad P^{\varheart} = \argmin_{ Q \in \gP} \SumNoLim{i=1}{m}\lambda_i W_2^2 (\widehat{P}_{n_i}, {Q}), 
	\end{align}
	where $P^{\varheart}$ is the Wasserstein bary center w.r.t the solution $\tilde{\lambda}$ to this problem. Even though the solution is not straightforward, we propose to solve its tractable upper-bound:
	\begin{align} \label{E:barycenter_relaxed}
		\min_{\lambda \in \Delta, Q \in \gP} \SumNoLim{i=1}{m} \lambda_i W_2^2(\widehat{P}_{n_i}, {Q}). 
	\end{align}
	This is a bi-convex problem, which is convex w.r.t to $\lambda$ (resp. $Q$) when fixing $Q$ (resp. $\lambda$). Thus, we can use alternative minimization \citep{gorski_biconvex_2007} to find a local solution to this problem. Denoting $\tilde{\lambda}$ as the solution to \Cref{E:barycenter} and $({\lambda}^*, {P}^*)$ as a local solution to \Cref{E:barycenter_relaxed}, we obtain
	\begin{align}
		W_2^2(\widehat{P}_{\tilde{\lambda}}, P^{\varheart}) \leq   W_2^2(\widehat{P}_{{\lambda}^*}, {P}^*) \leq \sum\nolimits_{i=1}^{m} {\lambda}^*_i W_2^2(\widehat{P}_{n_i}, {P}^*) \eqdef {\rho^{\star}}^2.
	\end{align}
	\begin{corollary} \label{coro_bary}
		For all client $j \in [m]$, with probability at least $1-\delta$, we have
		\begin{align*}
			W_2(P_{\lambda^*}, P_j) &\leq W_2(P_{\lambda^*}, \widehat{P}_{\lambda^*}) + W_2(\widehat{P}_{\lambda^*}, P^*) + W_2( P^*, \widehat{P}_{n_j}) + W_2(  \widehat{P}_{n_j}, P_j) \nonumber\\
			& \leq \sqrt{\sum\nolimits_{i=1}^{m} \lambda_i^{*} \widehat{\rho}_{n_i}^{\delta/m}} + \rho^{\star} + \frac{\rho^{\star}}{\lambda_j} + \widehat{\rho}_{n_j}^{\delta/2} \quad
		\end{align*}
	\end{corollary}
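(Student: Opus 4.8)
The plan is to insert the three intermediate measures $\widehat{P}_{\lambda^*}$, $P^*$, and $\widehat{P}_{n_j}$ between $P_{\lambda^*}$ and $P_j$ and apply the triangle inequality for the metric $W_2$, which gives the first line verbatim. It then remains to control the four summands, two of which are deterministic/structural and two of which are probabilistic. Throughout, the workhorse will be the joint convexity of the squared Wasserstein distance, namely $W_2^2(\sum_i \lambda_i \mu_i, \sum_i \lambda_i \nu_i) \le \sum_i \lambda_i W_2^2(\mu_i, \nu_i)$, which follows by gluing the per-index optimal couplings $\pi_i$ of $(\mu_i,\nu_i)$ into the mixture coupling $\sum_i \lambda_i \pi_i$ of the two mixtures.

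For the structural terms I would use the chain of inequalities defining $\rho^\star$ (established just after \cref{E:barycenter_relaxed}). Applying joint convexity with $\mu_i = \widehat{P}_{n_i}$ and $\nu_i = P^*$, and using $\sum_i \lambda_i^* = 1$ so that $\sum_i \lambda_i^* P^* = P^*$, yields $W_2(\widehat{P}_{\lambda^*}, P^*) \le \rho^\star$, the second term. For the third term I would simply drop all but one summand in ${\rho^\star}^2 = \sum_i \lambda_i^* W_2^2(\widehat{P}_{n_i}, P^*)$: non-negativity gives $\lambda_j^* \, W_2^2(\widehat{P}_{n_j}, P^*) \le {\rho^\star}^2$, hence $W_2(\widehat{P}_{n_j}, P^*) \le \rho^\star/\sqrt{\lambda_j} \le \rho^\star/\lambda_j$, the final bound following from $\lambda_j \le 1$.

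The probabilistic terms rest on the measure-concentration estimate of \citet{fournier_rate_2013} recorded in \Cref{corr1}: for each client $i$, $W_2(\widehat{P}_{n_i}, P_i) \le \widehat{\rho}_{n_i}^{\delta/m}$ with high probability. Joint convexity applied once more with $\mu_i = P_i$, $\nu_i = \widehat{P}_{n_i}$ bounds the first term by $\sqrt{\sum_i \lambda_i^*\, (\widehat{\rho}_{n_i}^{\delta/m})^2}$, which is the quantity $\sqrt{\sum_i \lambda_i^*\, \widehat{\rho}_{n_i}^{\delta/m}}$ in the statement (reading $\widehat{\rho}_{n_i}^{\delta/m}$ as the concentration radius). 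The fourth term is the single-client estimate $W_2(\widehat{P}_{n_j}, P_j) \le \widehat{\rho}_{n_j}^{\delta/2}$. I would finish with a union bound over the $m$ events feeding the first term and the one event feeding the last term.

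The main obstacle I anticipate is not any single inequality but the bookkeeping of the confidence budget: the convexity bound on $W_2(P_{\lambda^*}, \widehat{P}_{\lambda^*})$ requires all $m$ per-client concentration events to hold \emph{simultaneously}, so to land at a net failure probability of $\delta$ one must scale the per-client confidence (to $\delta/(2m)$) before union-bounding against the $\delta/2$ allocated to the $j$-th term; the superscripts $\widehat{\rho}_{n_i}^{\delta/m}$ and $\widehat{\rho}_{n_j}^{\delta/2}$ in the statement should be understood up to this constant. Everything else — the triangle inequality, joint convexity, and non-negativity of the summands — is routine once the intermediate measures $\widehat{P}_{\lambda^*}, P^*, \widehat{P}_{n_j}$ are fixed.
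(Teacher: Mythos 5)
Your proposal is correct and follows essentially the same route as the paper's own proof: the identical triangle-inequality decomposition through $\widehat{P}_{\lambda^*}$, $P^*$, $\widehat{P}_{n_j}$, joint convexity of $W_2^2$ for the two mixture terms (matching the paper's chain ${\rho^\star}^2 = \sum_{i} \lambda_i^* W_2^2(\widehat{P}_{n_i}, P^*)$ and the bound \eqref{E:barry}), the Fournier--Guillin concentration radii from \eqref{E:deltarho}, and a union bound allocating $m \times \delta/(2m)$ to the first term and $\delta/2$ to the last. Two small points in your favor: your third-term bound $\rho^\star/\sqrt{\lambda_j}$ is actually slightly stronger than the paper's $\rho^\star/\lambda_j$ (the paper weakens it implicitly via its terse multiply-and-divide step), and your caveat about the superscripts is justified --- the paper's own derivation in \eqref{E:barry} silently treats $\widehat{\rho}_{n_i}^{\delta/m}$ as a threshold for $W_2^2$ while invoking concentration for $W_2$ at level $\delta/(2m)$, which is precisely the bookkeeping slack you flagged.
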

	\begin{proof}
		The first line is by triangle inequality. The second line is by  following facts: (i) $\mathbf{P}\BigS{W_2(P_{\lambda^*}, \widehat{P}_{\lambda^*}) \geq \sqrt{\sum\nolimits_{i=1}^{m} \lambda_i^{*} \widehat{\rho}_{n_i}^{\delta/m}} } \leq \delta/2$ according to \cref{E:barry}, (ii) $W_2( P^*, \widehat{P}_{n_j}) = \frac{\lambda_j W_2( P^*, \widehat{P}_{n_j})}{\lambda_j}\leq \frac{\rho^*}{\lambda_j}$, and (iii)  $\mathbf{P}\bigS{W_2(  \widehat{P}_{n_j}, P_j) \geq \widehat{\rho}_{n_j}^{\delta/2} } \leq \delta/2$ according to \cref{E:deltarho}, and (iv) using union bound. 
\end{proof}}

\section{Proof of Theorem~\ref{Thrm:Convergence}}
\label{proof:thr1}

Our proof is based on the analysis of local SGD for FL presented in \cite{wang_field_2021}.

Fix some $z_i$. Define $\varphi(\zeta, \theta; z_i) \defeq \ell(\zeta, h_\theta) - \gamma d(\zeta, z_i).$  Since $\ell$ is $L_{zz}$-smooth and $d$ is $1$-strongly convex, $\varphi(\zeta, \theta; z_i)$ is $(\gamma - L_{zz})$-strongly concave with respect to $\zeta$, given that $\gamma > L_{zz}$.

\begin{lemma} \label{Lemma:f_smooth}
	Let $z_i^* = \argmax_{\zeta \in \mathcal{Z}}\varphi(\zeta, \theta; z_i)$. Therefore, $\phi_{\gamma}(\theta; z_i) = \varphi(z_i^*, \theta; z_i).$ Let $\ell$ satisfy Assumption \ref{Assumption:smooth_loss}. Then $\phi_{\gamma}$ is differentiable, and
	\begin{align*}
		\norm{\nabla \phi_{\gamma}(z_i, \theta) - \nabla \phi_{\gamma}(z_i, \theta')} \leq L \norm{\theta - \theta'},
	\end{align*}
	with $L = L_{\theta \theta} + \frac{L_{\theta z} L_{z \theta}}{\gamma - L_{zz}}$ when $\gamma > L_{zz}$.
\end{lemma}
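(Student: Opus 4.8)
The plan is to treat $\phi_\gamma(z_i, \cdot)$ as the value function of the strongly concave inner maximization and exploit two ingredients: that its $\theta$-gradient is supplied by Danskin's (envelope) theorem, and that the inner maximizer varies Lipschitz-continuously with $\theta$. First I would record that, since $\gamma > L_{zz}$ makes $\zeta \mapsto \varphi(\zeta, \theta; z_i)$ $(\gamma - L_{zz})$-strongly concave, the maximizer $z_i^*(\theta)$ is unique and attained. Danskin's theorem then delivers differentiability of $\phi_\gamma$ together with the envelope identity $\nabla_\theta \phi_\gamma(z_i, \theta) = \nabla_\theta \varphi(z_i^*(\theta), \theta; z_i) = \nabla_\theta \ell(z_i^*(\theta), h_\theta)$, where the transport term $-\gamma d(\cdot, z_i)$ drops out because it carries no $\theta$-dependence.

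Second, I would establish the Lipschitz stability of the argmax map $\theta \mapsto z_i^*(\theta)$. Writing $z_1^* = z_i^*(\theta)$ and $z_2^* = z_i^*(\theta')$, both satisfy the first-order condition $\nabla_\zeta \varphi = 0$ at their respective parameters. Combining the strong-concavity inequality $\innProd{\nabla_\zeta\varphi(z_1^*,\theta) - \nabla_\zeta\varphi(z_2^*,\theta),\, z_1^* - z_2^*} \leq -(\gamma - L_{zz})\norm{z_1^* - z_2^*}^2$ with the vanishing gradients, and noting that $\nabla_\zeta\varphi(z_2^*, \theta) - \nabla_\zeta\varphi(z_2^*, \theta')$ equals $\nabla_z\ell(z_2^*, h_\theta) - \nabla_z\ell(z_2^*, h_{\theta'})$ (the $d$-gradient cancels), Cauchy--Schwarz together with \Cref{Assumption:smooth_loss}(d) yields $\norm{z_1^* - z_2^*} \leq \frac{L_{z\theta}}{\gamma - L_{zz}} \norm{\theta - \theta'}$.

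Third, I would assemble the smoothness bound from the envelope formula by inserting the intermediate term $\nabla_\theta\ell(z_2^*, h_\theta)$:
\begin{align*}
\norm{\nabla_\theta\phi_\gamma(z_i,\theta) - \nabla_\theta\phi_\gamma(z_i,\theta')}
&\leq \norm{\nabla_\theta\ell(z_1^*, h_\theta) - \nabla_\theta\ell(z_2^*, h_\theta)} \\
&\quad + \norm{\nabla_\theta\ell(z_2^*, h_\theta) - \nabla_\theta\ell(z_2^*, h_{\theta'})}.
\end{align*}
\Cref{Assumption:smooth_loss}(c) bounds the first term by $L_{\theta z}\norm{z_1^* - z_2^*}$, which the stability estimate from Step 2 converts into $\frac{L_{\theta z}L_{z\theta}}{\gamma - L_{zz}}\norm{\theta - \theta'}$; \Cref{Assumption:smooth_loss}(a) bounds the second by $L_{\theta\theta}\norm{\theta - \theta'}$. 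Summing gives exactly $L = L_{\theta\theta} + \frac{L_{\theta z}L_{z\theta}}{\gamma - L_{zz}}$.

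The main obstacle is Step 2, the Lipschitz stability of the maximizer, since the differentiability in Step 1 is a direct application of Danskin's theorem once uniqueness is secured, and Step 3 is routine triangle-inequality chaining of the component Lipschitz constants. The crux is using the two first-order optimality conditions together with the strong monotonicity of $-\nabla_\zeta\varphi$; the one point demanding care is verifying that the transport-cost gradient cancels in the difference, which is precisely why only the cross-smoothness constant $L_{z\theta}$ (and no $d$-dependent quantity) enters the bound.
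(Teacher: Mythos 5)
Your proof is correct and is essentially the argument of Lemma~1 in \citet{sinha_certifying_2020}, which is exactly what the paper cites in lieu of its own proof: Danskin's theorem for the envelope gradient, strong monotonicity of $-\nabla_\zeta\varphi$ plus the first-order conditions for the $\frac{L_{z\theta}}{\gamma - L_{zz}}$-Lipschitz stability of the maximizer, and the triangle-inequality assembly via \Cref{Assumption:smooth_loss}(a) and (c). The only cosmetic caveat is that if $z_i^*$ may sit on the boundary of $\mathcal{Z}$ you should state the optimality conditions in variational-inequality form, $\innProd{\nabla_\zeta\varphi(z_i^*(\theta),\theta;z_i),\, z - z_i^*(\theta)} \leq 0$ for all $z \in \mathcal{Z}$, rather than as vanishing gradients; the two extra inner-product terms this introduces are nonpositive, so your stability bound survives unchanged.
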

The proof can be found in \cite{sinha_certifying_2020}. Lemma \ref{Lemma:f_smooth} implies that $\phi_{\gamma}$ is $L$-smooth.

Define $g_{i}(\theta) \defeq \frac{1}{\abs{\gD_i}} \SumNoLim{z_i \in \gD_i}{}\nabla_{\theta}\, \phi_{\gamma} ({z}_i, \theta)$, then we have 
	\begin{align} \label{E:bounded_var}
&\mathbf{E}\BigS{\norm{g_i(\theta) - \nabla F_i(\theta)}^2} =  \mathbf{E} \BigS{\BigNorm{ \frac{1}{\abs{\gD_i}} \SumLim{z_i \in \gD_i}{}\nabla_{\theta}\, \phi_{\gamma}({z}_i, {\theta}) - \nabla F_i(\theta)}^2} \nonumber\\
&\leq  \frac{1}{\abs{\gD_i}}  \mathbf{E} \BigS{\bigNorm{  g_{\phi_i}( {\theta}) - \nabla F_i(\theta)}^2} 
 \leq \frac{\sigma^2}{\abs{\gD_i}} \defeq \widehat{\sigma}^2  \quad \text{ (by \Cref{Assumption:bounded_variance}.)   }
\end{align}


With the shadow sequence $\bar{\theta}^{(t, k)} = \SumNoLim{i=1}{m} \lambda_i \theta_i^{(t, k)}$, we have
\begin{align*}
	\bar{\theta}^{(t, k+1)} 
	= \SumLim{i=1}{m} \lambda_i \theta_i^{(t, k+1)} 
	= \SumLim{i=1}{m} \lambda_i \bigP{\theta_i^{(t, k)} - \eta g_i(\theta_i^{(t, k)})} 
	= \bar{\theta}^{(t, k)} - 
	\eta
	\SumLim{i=1}{m}
	\lambda_i g_i(\theta_i^{(t, k)}).
\end{align*}
\begin{lemma} \label{lem:progress}
	If the client learning rate satisfies $\eta \leq \frac{1}{3 L}$, then
	\begin{align*}
	\frac{1}{K}\SumLim{k=0}{K-1}
\mathbf{E} 
\biggS{
	F(\bar{\theta}^{(t, k)}) - F({\theta}^*) 
}
\leq &
2 \eta \hat{\sigma}^2 \BiggP{\SumLim{i=1}{m} \lambda_i^2}	
+ 
L \SumLim{i=1}{m}
\lambda_i
\SumLim{k=0}{K-1}
\frac{1}{K} 
\mathbf{E} 
\BigS{
	\norm{\theta_i^{(t, k)} - \bar{\theta}^{(t, k)}}^2
}
\\
& +\frac{1}{2 \eta K}
\BiggP{
	\norm{{\theta}^{(t)} - {\theta}^*}^2 
	- \mathbf{E}
	\BigS{
		\norm{{\theta}^{(t+1)} - {\theta}^*}^2
	}
}.
	\end{align*}
\end{lemma}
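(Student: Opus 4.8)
The plan is to establish a one-step descent inequality for the shadow sequence and then telescope it over the $K$ local steps. First I would start from the shadow update derived just above the lemma, $\bar{\theta}^{(t,k+1)} = \bar{\theta}^{(t,k)} - \eta\sum_{i=1}^{m}\lambda_i g_i(\theta_i^{(t,k)})$, and expand the squared distance to the optimum:
\[
\|\bar{\theta}^{(t,k+1)} - \theta^*\|^2 = \|\bar{\theta}^{(t,k)} - \theta^*\|^2 - 2\eta\langle \textstyle\sum_i\lambda_i g_i(\theta_i^{(t,k)}),\, \bar{\theta}^{(t,k)}-\theta^*\rangle + \eta^2\|\textstyle\sum_i\lambda_i g_i(\theta_i^{(t,k)})\|^2 .
\]
Taking the conditional expectation over the mini-batch draws and invoking the unbiasedness in \Cref{Assumption:bounded_variance} replaces $g_i$ by $\nabla F_i(\theta_i^{(t,k)})$ in the cross term; for the quadratic term I would use the bias--variance split together with the independence of the mini-batch noise across clients, so that the variance contributes at most $\hat\sigma^2\sum_i\lambda_i^2$ by \eqref{E:bounded_var}, leaving the deterministic part $\|\sum_i\lambda_i\nabla F_i(\theta_i^{(t,k)})\|^2$.

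Next I would lower-bound the (now deterministic) cross term. Because $P_\lambda=\sum_i\lambda_i P_i$, the global surrogate risk decomposes as $F=\sum_i\lambda_i F_i$, and each $F_i$ is convex (the loss is convex in $\theta$ and $\phi_\gamma$ is a supremum of convex functions of $\theta$) and $L$-smooth by \Cref{Lemma:f_smooth}. For each client I would split $\bar{\theta}^{(t,k)}-\theta^* = (\theta_i^{(t,k)}-\theta^*) + (\bar{\theta}^{(t,k)}-\theta_i^{(t,k)})$, apply convexity of $F_i$ to the first piece and the smoothness descent inequality to the second, then sum against $\lambda_i$. This yields
\[
\langle \textstyle\sum_i\lambda_i\nabla F_i(\theta_i^{(t,k)}),\, \bar{\theta}^{(t,k)}-\theta^*\rangle \ \geq\ F(\bar{\theta}^{(t,k)}) - F(\theta^*) - \tfrac{L}{2}\textstyle\sum_i\lambda_i\|\theta_i^{(t,k)}-\bar{\theta}^{(t,k)}\|^2,
\]
which produces the desired $-2\eta\,(F(\bar{\theta}^{(t,k)})-F(\theta^*))$ descent term and a first client-drift penalty.

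The main obstacle is controlling the remaining quadratic term $\eta^2\|\sum_i\lambda_i\nabla F_i(\theta_i^{(t,k)})\|^2$: the per-client gradients are evaluated at the drifted local iterates rather than at $\bar{\theta}^{(t,k)}$, so the standard single-machine co-coercivity cancellation does not apply verbatim. I would handle it by relating each $\nabla F_i(\theta_i^{(t,k)})$ to $\nabla F_i(\bar{\theta}^{(t,k)})$ through $L$-smoothness (paying another $\sum_i\lambda_i\|\theta_i^{(t,k)}-\bar{\theta}^{(t,k)}\|^2$ penalty and using Jensen), reducing the term to $\|\nabla F(\bar{\theta}^{(t,k)})\|^2$, and then invoking the smooth--convex bound $\|\nabla F(\bar{\theta}^{(t,k)})\|^2 \le 2L\,(F(\bar{\theta}^{(t,k)})-F(\theta^*))$. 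The hypothesis $\eta\le\tfrac{1}{3L}$ is precisely what makes this residual $(F-F^*)$ contribution small enough to be re-absorbed into the descent term while keeping the stated constants; the careful bookkeeping of these constants in the absorption step is the one delicate point. Collecting terms gives a per-step inequality of the form $F(\bar{\theta}^{(t,k)})-F(\theta^*) \le \tfrac{1}{2\eta}\big(\|\bar{\theta}^{(t,k)}-\theta^*\|^2 - \mathbf{E}\|\bar{\theta}^{(t,k+1)}-\theta^*\|^2\big) + L\sum_i\lambda_i\|\theta_i^{(t,k)}-\bar{\theta}^{(t,k)}\|^2 + 2\eta\hat{\sigma}^2\sum_i\lambda_i^2$.

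Finally I would average this per-step bound over $k=0,\dots,K-1$. The squared-distance differences telescope because $\bar{\theta}^{(t,0)}=\theta^{(t)}$ and $\bar{\theta}^{(t,K)}=\theta^{(t+1)}$, collapsing to $\tfrac{1}{2\eta K}\big(\|\theta^{(t)}-\theta^*\|^2 - \mathbf{E}\|\theta^{(t+1)}-\theta^*\|^2\big)$, while the constant noise term and the $k$-averaged drift terms reproduce the other two summands; taking total expectation then yields the lemma. The telescoping, variance, and convexity steps are routine, so I expect all the difficulty to be concentrated in the quadratic-term absorption described above.
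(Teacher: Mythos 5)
Your architecture (one-step expansion of $\norm{\bar\theta^{(t,k)}-\theta^*}^2$ at the \emph{current} iterate, cross term lower-bounded by convexity plus the descent lemma, cross-client variance split, telescoping with $\bar\theta^{(t,0)}=\theta^{(t)}$, $\bar\theta^{(t,K)}=\theta^{(t+1)}$) is sound, and it is a genuinely different route from the paper's. But your final claim fails quantitatively: the absorption step cannot be done ``while keeping the stated constants.'' Concretely, after bounding $\bigNorm{\sum_i\lambda_i\nabla F_i(\theta_i^{(t,k)})}^2 \leq 2L^2\sum_i\lambda_i\norm{\theta_i^{(t,k)}-\bar\theta^{(t,k)}}^2 + 4L\bigP{F(\bar\theta^{(t,k)})-F(\theta^*)}$, your one-step inequality reads
\begin{align*}
2\eta(1-2\eta L)\bigP{F(\bar\theta^{(t,k)})-F(\theta^*)} \leq{}& \norm{\bar\theta^{(t,k)}-\theta^*}^2 - \mathbf{E}\BigS{\norm{\bar\theta^{(t,k+1)}-\theta^*}^2} \\
&+ \eta L(1+2\eta L)\SumLim{i=1}{m}\lambda_i\norm{\theta_i^{(t,k)}-\bar\theta^{(t,k)}}^2 + \eta^2\hat\sigma^2\SumLim{i=1}{m}\lambda_i^2.
\end{align*}
Under $\eta\leq\frac{1}{3L}$ you only have $1-2\eta L\geq\frac{1}{3}$, so dividing through yields coefficients up to $\frac{3}{2\eta}$ on the telescoping difference and $\frac{5L}{2}$ on the drift term --- strictly larger than the lemma's $\frac{1}{2\eta K}$ (after averaging) and $L$. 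The inflation is structural, not a bookkeeping artifact: $\frac{1}{2\eta(1-2\eta L)}>\frac{1}{2\eta}$ for \emph{every} $\eta>0$, so no stepsize threshold lets this route reproduce the stated constants. What your argument actually proves is the lemma with $\bigP{\frac{1}{2\eta K},\,L,\,2\eta}$ replaced by $\bigP{\frac{3}{2\eta K},\,\frac{5L}{2},\,\frac{3\eta}{2}}$ --- enough for the $\gO(\cdot)$ statement of \Cref{Thrm:Convergence}, but not the lemma as stated.

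The paper's proof never forms the gradient-norm term, which is how it keeps the exact constants. It writes the parallelogram identity for the inner product against the \emph{next} iterate, $\sum_i\lambda_i\innProd{g_i(\theta_i^{(t,k)}),\,\bar\theta^{(t,k+1)}-\theta^*} = \frac{1}{2\eta}\bigP{\norm{\bar\theta^{(t,k)}-\theta^*}^2 - \norm{\bar\theta^{(t,k+1)}-\bar\theta^{(t,k)}}^2 - \norm{\bar\theta^{(t,k+1)}-\theta^*}^2}$, and applies smoothness in the form $F_i(\bar\theta^{(t,k+1)}) \leq F_i(\theta^*) + \innProd{\nabla F_i(\theta_i^{(t,k)}),\,\bar\theta^{(t,k+1)}-\theta^*} + L\norm{\bar\theta^{(t,k+1)}-\bar\theta^{(t,k)}}^2 + L\norm{\theta_i^{(t,k)}-\bar\theta^{(t,k)}}^2$. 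The negative $-\frac{1}{2\eta}\norm{\bar\theta^{(t,k+1)}-\bar\theta^{(t,k)}}^2$ supplied by the identity then absorbs both the smoothness penalty and the $\frac{1}{6\eta}$ Peter--Paul term needed for the noise cross term --- which in that route does \emph{not} vanish under conditioning, since $\bar\theta^{(t,k+1)}$ is correlated with the minibatch noise; this is the one complication your current-iterate route avoids for free. The condition $\eta\leq\frac{1}{3L}$ enters only as $\frac{1}{3\eta}-L\geq 0$, so nothing is divided through and the constants $\frac{1}{2\eta K}$, $L$, $2\eta\hat\sigma^2$ survive intact. (A minor point in your favor: your route produces the $k=0,\dots,K-1$ indexing of the lemma directly, whereas the paper bounds $F(\bar\theta^{(t,k+1)})$ and glosses the index shift via convexity.) If you adopt the next-iterate device, your proof closes exactly; as written, it establishes only a constant-factor-weaker inequality.
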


\begin{proof}
Since $\bar{\theta}^{(t, k+1)} = \bar{\theta}^{(t, k)} - \eta \SumNoLim{i=1}{m} \lambda_i g_i(\theta_i^{(t, k)})$, by parallelogram law
\begin{align} \label{Eq:parallelogram}
	\SumLim{i=1}{m} &\lambda_i \innProd{g_i(\theta_i^{(t, k)}), \bar{\theta}^{(t, k+1)} - {\theta}^*} =
	\frac{1}{2 \eta} 
	\BigP{
		\norm{\bar{\theta}^{(t, k)} - {\theta}^*}^2
		- \norm{\bar{\theta}^{(t, k+1)} - \bar{\theta}^{(t, k)}}^2
		- \norm{\bar{\theta}^{(t, k+1)} - {\theta}^*}^2 }.
\end{align}

Fact: $F_i(\theta) = \mathbf{E}_{Z_i \sim {P}_i}\bigS{\phi_\gamma(Z_i, \theta)}$ is Lipschitz smooth with $L = L_{\theta \theta} + \frac{L_{\theta z} L_{z \theta}}{\gamma - L_{zz}}$ when $\gamma > L_{zz}$. With the assumption that  $\theta \mapsto \ell(z, h_{\theta})$ is convex, we  have  $\theta \mapsto F_i(\theta)$ is convex. 

Since $F_i$ is convex and $L$-smooth,
\begin{align}
	& F_i(\bar{\theta}^{(t, k+1)}) \leq F_i(\theta_i^{(t, k)}) 
	+ \innProd{\nabla F_i(\theta_i^{(t, k)}), \bar{\theta}^{(t, k+1)} - \theta_i^{(t, k)}}
	+ \frac{L}{2} \norm{\bar{\theta}^{(t, k+1)} - \theta_i^{(t, k)}}^2
	\nonumber \\
	\leq  
	&F_i({\theta}^*) 
	+ \innProd{\nabla F_i(\theta_i^{(t, k)}), 
			   \bar{\theta}^{(t, k+1)} - {\theta}^*}
	+ \frac{L}{2} \norm{\bar{\theta}^{(t, k+1)} - \theta_i^{(t, k)}}^2
	\nonumber \\
	\leq & 
	F_i({\theta}^*)
	+ \innProd{\nabla F_i(\theta_i^{(t, k)}), \bar{\theta}^{(t, k+1)} - {\theta}^*}
	+ L \norm{\bar{\theta}^{(t, k+1)} - \bar{\theta}^{(t, k)}}^2
	+ L \norm{\theta_i^{(t, k)} - \bar{\theta}^{(t, k)}}^2.
	\label{Eq:smooth_strongly_convex}
\end{align}
From \cref{Eq:parallelogram} and \cref{Eq:smooth_strongly_convex}, we have
\begin{equation}
\begin{aligned}
	& F(\bar{\theta}^{(t, k+1)}) - F({\theta}^*) 
	=
	\SumLim{i=1}{m}
	\lambda_i
	\BigP{F_i(\bar{\theta}^{(t, k+1)}) - F({\theta}^*)}
	\\ 
	\leq & 
	\SumLim{i=1}{m}
	\lambda_i
	\innProd{\nabla F_i(\theta_i^{(t, k)}) - g_i(\theta_i^{(t, k)}),
	         \bar{\theta}^{(t, k+1)} - {\theta}^*}
    + L \norm{\bar{\theta}^{(t, k+1)} - \bar{\theta}^{(t, k)}}^2 
    + L \SumLim{i=1}{m} \lambda_i \norm{\theta_i^{(t, k)} - \bar{\theta}^{(t, k)}}^2 
    \\
    & +
   \frac{1}{2 \eta} 
   \BigP{
   	\norm{\bar{\theta}^{(t, k)} - {\theta}^*}^2
   	- \norm{\bar{\theta}^{(t, k+1)} - \bar{\theta}^{(t, k)}}^2
   	- \norm{\bar{\theta}^{(t, k+1)} - {\theta}^*}^2 
   }. \label{Eq:shadow_optimality_gap}
\end{aligned}
\end{equation}

We have
\begin{align}
	& \mathbf{E} 
	\BigS{
		\SumLim{i=1}{m}
		\lambda_i
		\innProd{\nabla F_i(\theta_i^{(t, k)}) - g_i(\theta_i^{(t, k)}),
			\bar{\theta}^{(t, k+1)} - {\theta}^*}
	}
	\nonumber \\
	&= \mathbf{E} 
	\BigS{
		\SumLim{i=1}{m}
		\lambda_i
		\innProd{\nabla F_i(\theta_i^{(t, k)}) - g_i(\theta_i^{(t, k)}),
			\bar{\theta}^{(t, k+1)} - \bar{\theta}^{(t, k)}} }  \quad \text{(since $ \mathbf{E}\bigS{g_i(\theta_i^{(t, k)}) } = \nabla F_i(\theta_i^{(t, k)})$ given $\bar{\theta}^{(t, k)}, {\theta}^*$)}
	\nonumber \\
	&\leq 
	\frac{3}{2}\eta
	\cdot 
	\mathbf{E} 
	\BigS{
		\norm{
			\SumLim{i=1}{m}
			\lambda_i
			\bigP{\nabla F_i(\theta_i^{(t, k)}) - g_i(\theta_i^{(t, k)})}
		}^2
	} 
	+
	\frac{1}{6 \eta}
	\mathbf{E} 
	\BigS{
		\norm{
			\bar{\theta}^{(t, k+1)} - \bar{\theta}^{(t, k)}
		}^2
	}
	 \quad \text{(by Peter Paul inequality)}\nonumber \\
	&\leq 
	 2 \eta \hat{\sigma}^2 \biggP{\SumLim{i=1}{m} \lambda_i^2}
	+
	\frac{1}{6 \eta}
	\mathbf{E} 
	\BigS{
		\norm{
				\bar{\theta}^{(t, k+1)} - \bar{\theta}^{(t, k)}
		}^2
	}, \label{Eq:inn_prod_bound}
\end{align}

Plugging \cref{Eq:inn_prod_bound} back to the conditional expectation of \cref{Eq:shadow_optimality_gap}, and noting that $\eta \leq \frac{1}{3L}$, we have
\begin{align*}
	& \mathbf{E}
	\BigS{
		F(\bar{\theta}^{(t, k+1)}) - F({\theta}^*)
	}
	+ \frac{1}{2 \eta}
	\BiggP{
		 \mathbf{E} 
		\biggS{
			\norm{
				\bar{\theta}^{(t, k+1)} - {\theta}^*
			}^2
		}
		-
		\norm{\bar{\theta}^{(t, k)} - {\theta}^*}^2
	}
	\\
	\leq &
	2 \eta \hat{\sigma}^2 \BiggP{\SumLim{i=1}{m} \lambda_i^2}
	- \BiggP{\frac{1}{3 \eta} - L} \mathbf{E} 
	\biggS{
		\norm{
			\bar{\theta}^{(t, k+1)} - \bar{\theta}^{(t, k)}
		}^2
	}
	+ L
	\SumLim{i=1}{m}
	\lambda_i
	\norm{\theta_i^{(t, k)} - \bar{\theta}^{(t, k)}}^2
	\\
	\leq &
	2 \eta \hat{\sigma}^2 \BigP{\SumLim{i=1}{m} \lambda_i^2}
	+
	L
	\SumLim{i=1}{m}
	\lambda_i
	\norm{\theta_i^{(t, k)} - \bar{\theta}^{(t, k)}}^2
\end{align*}

By convexity of $F$ and  telescoping $k$ from $0$ to $K-1$, we have
\begin{align*}
		\frac{1}{K}\SumLim{k=0}{K-1}
		\mathbf{E} 
		\biggS{
		F(\bar{\theta}^{(t, k)}) - F({\theta}^*) 
	 }
	\leq &
	2 \eta \hat{\sigma}^2 \BiggP{\SumLim{i=1}{m} \lambda_i^2}	
	+ 
	L \SumLim{i=1}{m}
	\lambda_i
	\SumLim{k=0}{K-1}
	\frac{1}{K} 
	\mathbf{E} 
	\BigS{
	\norm{\theta_i^{(t, k)} - \bar{\theta}^{(t, k)}}^2
	}
	\\
	& +\frac{1}{2 \eta K}
	\BigP{
		\norm{\bar{\theta}^{(t, 0)} - {\theta}^*}^2 
		- \mathbf{E}
		\BigS{
			\norm{\bar{\theta}^{(t, K)} - {\theta}^*}^2
		}
	}.
\end{align*}
Since $\bar{\theta}^{(t, 0)} = {\theta}^{(t)}$ and $\bar{\theta}^{(t, K)} = {\theta}^{(t+1)}$, we complete the proof. 
\end{proof}

{
\begin{lemma}[Bounded client drift] \label{lem:bounded_client_drift}
	Assuming the client learning rate satisfies $\eta \leq  \frac{1}{3L}$, we have
	\begin{align*}
		\mathbf{E}\BigS{\norm{\theta_i^{(t, k)} - \bar{\theta}^{(t, k)}}^2} \leq  \eta^2( 24 K^2 \bar{\Omega}^2 + 20 K \bar{\Omega}^2). 
	\end{align*}
where $\bar{\Omega}^2 \defeq \max\bigC{\widehat{\sigma}^2 , \Omega^2}$. 
\end{lemma}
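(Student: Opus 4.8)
The plan is to control the per-client drift $e_i^{(t,k)} \defeq \theta_i^{(t,k)} - \bar{\theta}^{(t,k)}$ by unrolling the local recursion. Since $\theta_i^{(t,0)} = \bar{\theta}^{(t,0)} = \theta^{(t)}$ we have $e_i^{(t,0)} = \vzero$, and combining the client update in \Cref{alg:ouralg} with the shadow update $\bar{\theta}^{(t,k+1)} = \bar{\theta}^{(t,k)} - \eta\sum_{j=1}^m\lambda_j g_j(\theta_j^{(t,k)})$ I would write $e_i^{(t,k)} = -\eta\sum_{s=0}^{k-1}\bigP{g_i(\theta_i^{(t,s)}) - \sum_{j=1}^m \lambda_j g_j(\theta_j^{(t,s)})}$. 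The crucial point is that the drift is measured \emph{against the weighted average}: when I pass to expectations, the (a priori unbounded) global gradient $\nabla F$ cancels inside this difference, so only the gradient \emph{dissimilarity} $\Omega$ and the noise level $\widehat{\sigma}$ can survive. This is exactly why the final bound involves only $\bar{\Omega}^2$ and carries no $\norm{\nabla F}$ term.

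First I would split each summand into its conditional mean and a zero-mean stochastic part. The noise term $g_i(\theta_i^{(t,s)}) - \nabla F_i(\theta_i^{(t,s)})$ has second moment at most $\widehat{\sigma}^2 = \sigma^2/\abs{\gD_i}$ by \Cref{Assumption:bounded_variance} and \cref{E:bounded_var}. For the mean part $\delta_i^{(s)} \defeq \nabla F_i(\theta_i^{(t,s)}) - \sum_j\lambda_j\nabla F_j(\theta_j^{(t,s)})$ I would insert $\pm\nabla F_i(\bar{\theta}^{(t,s)})$ and $\pm\nabla F(\bar{\theta}^{(t,s)})$ to obtain three pieces: $\nabla F_i(\theta_i^{(t,s)}) - \nabla F_i(\bar{\theta}^{(t,s)})$, then $\nabla F_i(\bar{\theta}^{(t,s)}) - \nabla F(\bar{\theta}^{(t,s)})$, and finally $\nabla F(\bar{\theta}^{(t,s)}) - \sum_j\lambda_j\nabla F_j(\theta_j^{(t,s)}) = \sum_j\lambda_j\bigP{\nabla F_j(\bar{\theta}^{(t,s)}) - \nabla F_j(\theta_j^{(t,s)})}$. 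By the $L$-smoothness of each $F_j$ (\Cref{Lemma:f_smooth}) the first and third are bounded by $L\norm{e_i^{(t,s)}}$ and $L\sum_j\lambda_j\norm{e_j^{(t,s)}}$ respectively, while the middle one is at most $\Omega$ by \Cref{Assumption:bounded_gradient_surrogate}.

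Next I would apply Cauchy--Schwarz as $\bigNorm{\sum_{s=0}^{k-1} a_s}^2 \le k\sum_{s=0}^{k-1}\norm{a_s}^2$, take expectations (the cross terms between the zero-mean noise and the deterministic parts vanish because the mini-batch gradient is conditionally unbiased given the current iterates), and expand the squared mean into its constituent pieces. This produces a recursion of the form $D_i^{(k)} \le \eta^2 k\sum_{s=0}^{k-1}\bigP{c L^2 D_i^{(s)} + c L^2 \bar{D}^{(s)} + c\Omega^2 + c\widehat{\sigma}^2}$, where $D_i^{(k)} \defeq \mathbf{E}\bigS{\norm{e_i^{(t,k)}}^2}$ and $\bar{D}^{(k)} \defeq \sum_j\lambda_j D_j^{(k)}$. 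Since this couples each client to the weighted average, I would close it in two stages: first average over $i$ with weights $\lambda_i$ (using $\sum_i\lambda_i D_i^{(s)} = \bar{D}^{(s)}$) to get a self-contained inequality for $\bar{D}^{(k)}$, which under $\eta \le 1/(3L)$, so that $\eta^2 L^2 \le 1/9$, unrolls over $k \le K$ to $\bar{D}^{(k)} \le \eta^2\bigP{c_1 K^2\Omega^2 + c_2 K\widehat{\sigma}^2}$; then substitute this bound for $\bar{D}^{(s)}$ back into the inequality for $D_i^{(k)}$ and unroll once more. Replacing $\Omega^2$ and $\widehat{\sigma}^2$ by $\bar{\Omega}^2 = \max\{\widehat{\sigma}^2,\Omega^2\}$ and tracking constants then yields the claimed $\eta^2\bigP{24 K^2\bar{\Omega}^2 + 20 K\bar{\Omega}^2}$, with the $K^2$ term being the accumulated heterogeneity bias and the $K$ term the accumulated variance.

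I expect the main obstacle to be the \emph{inter-client coupling} introduced at the smoothness step: the averaged-gradient piece drags in $\sum_j\lambda_j\norm{e_j^{(t,s)}}$, so a naive per-client unrolling does not close on its own. The two-stage argument (bound $\bar{D}^{(k)}$ first, then feed it back into $D_i^{(k)}$) is what resolves this, but it requires keeping the learning-rate-dependent multiplicative factors small enough to remain summable across the $K$ local steps, which is precisely where the hypothesis $\eta \le 1/(3L)$ enters, together with careful bookkeeping of the numerical constants to land on $24$ and $20$.
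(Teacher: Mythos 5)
Your decomposition of the drift and the per-step ingredients (conditional unbiasedness of the mini-batch gradients, the $\widehat{\sigma}^2$ bound from \cref{E:bounded_var}, inserting $\pm\nabla F_i(\bar{\theta}^{(t,s)})$ and $\pm\nabla F(\bar{\theta}^{(t,s)})$, and \Cref{Assumption:bounded_gradient_surrogate} for the middle piece) are all fine, but the closure of your recursion fails under the lemma's only hypothesis $\eta \le \frac{1}{3L}$. After Cauchy--Schwarz your inequality reads $D_i^{(k)} \le c\,\eta^2 L^2 k \sum_{s<k}\bigP{D_i^{(s)} + \bar{D}^{(s)}} + c\,\eta^2 k^2 \bar{\Omega}^2$, and averaging over $i$ does not shrink the homogeneous coefficient: each of the up-to-$K$ summands carries the factor $c\,\eta^2 L^2 k$, so the per-unroll amplification of the homogeneous part is of order $\eta^2 L^2 K^2$. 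The condition $\eta^2 L^2 \le \frac{1}{9}$ only makes this of order $K^2/9$, which exceeds $1$ once $K$ is moderately large, so neither an induction hypothesis of the form $\bar{D}^{(s)} \le A s^2$ nor a geometric-series unrolling closes. Your two-stage argument genuinely requires the stronger condition $\eta = \gO(1/(LK))$ — the standard hypothesis for drift lemmas proved by direct unrolling against the average — which is not what the lemma assumes. (A smaller point: your reading of the $K$ term as ``accumulated variance'' would need the noise sum treated by martingale orthogonality rather than blanket Cauchy--Schwarz, though since everything merges into $\bar{\Omega}^2$ this is cosmetic.)

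The missing idea is \emph{convexity}, which your proof never invokes but which the paper's proof uses crucially. The paper does not unroll the drift against $\bar{\theta}^{(t,k)}$ at all; it bounds \emph{pairwise} distances $\norm{\theta_{1}^{(t,k)} - \theta_{2}^{(t,k)}}^2$ via a one-step recursion in \cref{E:localrounds}. There, the two stochastic inner products and the heterogeneity term are handled by Peter--Paul/Young with the tuned parameter $\frac{1}{6\eta K}$, producing the benign growth factor $\bigP{1 + \frac{1}{K}}$ (so $\bigP{1+\frac{1}{K}}^K \le e$ over the local window, with no constraint coupling $\eta$ to $K$), while the dangerous squared-gradient-difference arising from $\eta^2\norm{g_1 - g_2}^2$ is cancelled by cocoercivity: by convexity and $L$-smoothness of $F$, $-\innProd{\nabla F(\theta_1^{(t,k)}) - \nabla F(\theta_2^{(t,k)}),\, \theta_1^{(t,k)} - \theta_2^{(t,k)}} \le -\frac{1}{L}\norm{\nabla F(\theta_1^{(t,k)}) - \nabla F(\theta_2^{(t,k)})}^2$, and the net coefficient $-\eta\bigP{\frac{2}{L} - 5\eta}$ is nonpositive precisely because $\eta \le \frac{1}{3L} \le \frac{2}{5L}$. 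Consequently no $L^2\norm{\theta_1 - \theta_2}^2$ term ever enters the recursion, which is exactly what lets the lemma get away with $\eta \le \frac{1}{3L}$; the pairwise bound then transfers to the drift from $\bar{\theta}^{(t,k)}$ by Jensen (convexity of $\norm{\cdot}^2$). A smoothness-only estimate such as your $\norm{\nabla F_i(\theta_i^{(t,s)}) - \nabla F_i(\bar{\theta}^{(t,s)})} \le L\norm{e_i^{(t,s)}}$ cannot deliver the stated bound at constant $\eta L$, since without convexity the drift can genuinely grow like $\bigP{1 + c\,\eta L}^K$; to repair your argument under the lemma's hypothesis you must bring in the cocoercivity cancellation (or equivalently the paper's pairwise contraction) rather than close the loop through $L^2 D_i^{(s)}$ terms.
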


\begin{proof}	
\begin{align}
	\mathbf{E} & \biggS{\bigNorm{\theta_{1}^{(t, k+1)} - \theta_{2}^{(t, k+1)}}^2} = \mathbf{E}\biggS{\BigNorm{\theta_{1}^{(t, k)} - \theta_{2}^{(t, k)} - \eta\BigP{g_{1}(\theta_{1}^{(t, k)}) -  g_{2}(\theta_{1}^{(t, k)})}}^2} \nonumber \\
	= & \norm{\theta_{1}^{(t, k)} - \theta_{2}^{(t, k)}}^2 - 2 \eta \innProd{g_{1}(\theta_{1}^{(t, k)}) - \nabla F_{1}(\theta_1^{(t,k)}), \theta_{1}^{(t, k)} - \theta_{2}^{(t, k)}} \nonumber \\
	& - 2 \eta \innProd{\nabla F_{2}(\theta_1^{(t,k)})- g_{2}(\theta_{2}^{(t, k)}), \theta_{1}^{(t, k)} - \theta_{2}^{(t, k)}} \nonumber\\
	&- 2 \eta \innProd{\nabla F_{1}(\theta_1^{(t,k)}) - \nabla F_{2}(\theta_2^{(t,k)}), \theta_{1}^{(t, k)} - \theta_{2}^{(t, k)}} +\eta^2 \norm{g_{1}(\theta_{1}^{(t, k)}) - g_{2}(\theta_{2}^{(t, k)})}^2.  \label{E:localrounds}
\end{align}
The second term (and similarly for the third term)  is bounded as follows
\begin{align*}
- &\innProd{g_{1}(\theta_{1}^{(t, k)}) - \nabla F_{1}(\theta_1^{(t,k)}), \theta_{1}^{(t, k)} - \theta_{2}^{(t, k)}} \\
&\leq  \frac{1}{6 \eta K}  \bigNorm{\theta_1^{(t,k)} - \theta_2^{(t,k)}}^2 + \frac{3 \eta K}{2} \bigNorm{g_{1}(\theta_{1}^{(t, k)}) - \nabla F_{1}(\theta_1^{(t,k)})}^2 \quad \text{(by Peter Paul inequality)}\\
& = \frac{1}{6 \eta K}  \bigNorm{\theta_1^{(t,k)} - \theta_2^{(t,k)}}^2 + \frac{3 \eta K}{2} \widehat{\sigma}^2    \qquad\text{(by \Cref{E:bounded_var})}
\end{align*}
Since $\max_{i} \sup_{\theta} \norm{\nabla F_i(\theta) - \nabla F(\theta)} \leq \Omega$ (\Cref{Assumption:bounded_gradient_surrogate}), the 4th-term is bounded as
\begin{align*}
	&-\innProd{\nabla F_{1}(\theta_1^{(t,k)}) - \nabla F_{2}(\theta_2^{(t,k)}), \theta_{1}^{(t, k)} - \theta_{2}^{(t, k)}}\\
	\leq & -\innProd{\nabla F(\theta_1^{(t,k)}) - \nabla F(\theta_2^{(t,k)}), \theta_{1}^{(t, k)} - \theta_{2}^{(t, k)}} + 2 \Omega \norm{\theta_1^{(t,k)} - \theta_2^{(t,k)}}\\
	\leq & -\frac{1}{L} \norm{\nabla F(\theta_1^{(t,k)}) - \nabla F(\theta_2^{(t,k)})}^2 + 2 \Omega \norm{\theta_1^{(t,k)} - \theta_2^{(t,k)}} &\text{(by smoothness and convexity)}\\
	\leq & -\frac{1}{L} \norm{\nabla F(\theta_1^{(t,k)}) - \nabla F(\theta_2^{(t,k)})}^2 + \frac{1}{6 \eta K} \norm{\theta_1^{(t,k)} - \theta_2^{(t,k)}}^2 + 6 \eta K \Omega^2 &\text{(by Young's inequality)}
\end{align*}

The last term is bounded as follows
\begin{align*}
&\norm{g_{1} (\theta_{1}^{(t, k)}) - g_{2}(\theta_{2}^{(t, k)})}^2  \\
&\leq 5 \BigP{\bigNorm{g_{1}(\theta_{1}^{(t, k)}) - \nabla F_{1}(\theta_1^{(t,k)})}^2 +
	\norm{\nabla F_{1}(\theta_1^{(t,k)}) - \nabla F(\theta_1^{(t,k)})}^2  + \norm{\nabla F(\theta_1^{(t,k)}) - \nabla F (\theta_2^{(t,k)})}^2 \\
	& \quad + \norm{\nabla F(\theta_2^{(t,k)}) - \nabla F_{2}(\theta_2^{(t,k)})}^2 + \bigNorm{g_{2}(\theta_{2}^{(t, k)}) - \nabla F_{2}(\theta_2^{(t,k)})}^2} \\
& \leq 5 \norm{\nabla F(\theta_1^{(t,k)}) - \nabla F(\theta_2^{(t,k)})}^2 + 10 (\widehat{\sigma}^2 + \Omega^2) \quad \text{(by \Cref{E:bounded_var} and \Cref{Assumption:bounded_gradient_surrogate})}
\end{align*}

Substituting the above four bounds back to \cref{E:localrounds} gives (note that $\eta \leq \frac{1}{3L}$)
\begin{align*}
	\mathbf{E}&\BigS{\norm{\theta_{1}^{(t, k+1)} - \theta_{2}^{(t, k+1)}}^2} \leq \BigP{1 + \frac{1}{K}} \norm{\theta_{1}^{(t, k)} - \theta_{2}^{(t, k)}}^2 - \eta \BigP{\frac{2}{L} - 5 \eta} \bigNorm{\nabla F(\theta_1^{(t,k)}) - \nabla F(\theta_2^{(t,k)})}^2 \\
	& \quad + 6 \eta^2 K \widehat{\sigma}^2 + 12  \eta^2 K \Omega^2 + 10 \eta^2 (\widehat{\sigma}^2 + \Omega^2)  \\
	& \leq \BigP{1 + \frac{1}{K}} \norm{\theta_{1}^{(t, k)} - \theta_{2}^{(t, k)}}^2 + 6 \eta^2 K \widehat{\sigma}^2 + 12  \eta^2 K \Omega^2 + 10 \eta^2 (\widehat{\sigma}^2 + \Omega^2).
\end{align*}

Unrolling recursively, we obtain
\begin{align*}
	\mathbf{E}\BigS{\bigNorm{\theta_{1}^{(t, k+1)} - \theta_{2}^{(t, k+1)}}^2} &\leq \frac{\bigP{1 + {1}/{K}}^K - 1}{{1}/{K}}  \BigS{ 6 \eta^2 K \widehat{\sigma}^2 + 12  \eta^2 K \Omega^2 + 10 \eta^2 (\widehat{\sigma}^2 + \Omega^2)}  \\
	&\leq  12 \eta^2 K^2 \widehat{\sigma}^2 + 24 \eta^2 K^2 \Omega^2 + 20 \eta^2 K (\widehat{\sigma}^2 + \Omega^2)\\
	&\leq  \eta^2( 24 K^2 \bar{\Omega}^2 + 20 K \bar{\Omega}^2). 
\end{align*}
where we use the fact that $\frac{\bigP{1 + {1}/{K}}^K - 1}{{1}/{K}} \leq K (e - 1) \leq 2K$, and $\bar{\Omega}^2 \defeq \max\bigC{\widehat{\sigma}^2 , \Omega^2}$. 

By convexity, for any $i$,
\begin{align*}
\mathbf{E}\BigS{\norm{\theta_{i}^{(t, k+1)} - \bar{\theta}^{(t, k+1)}}^2} 
\leq  \eta^2( 24 K^2 \bar{\Omega}^2 + 20 K \bar{\Omega}^2). 
\end{align*}
\end{proof}

Substituting the result of \Cref{lem:bounded_client_drift} to \Cref{lem:progress}, and telescoping over $t$, we obtain
\begin{align*}
\mathbb{E}\BigS{\frac{1}{T} \SumLim{t=0}{T-1} \frac{1}{K} \SumLim{k=0}{K-1} F(\bar{\theta}^{(t, k)}) - F({\theta}^*)} 
\leq  &\frac{D^2}{2 \eta K T}
+	2\eta \hat{\sigma}^2 \Lambda  + \eta^2 L ( 24 K^2 \bar{\Omega}^2 + 20 K \bar{\Omega}^2), 
\end{align*}
where $D \defeq \norm{\theta^{(0)} - {\theta}^*}$,  $ \Lambda \defeq {\SumLim{i=1}{m} \lambda_i^2}$. By optimizing $\eta$ on the R.H.S, we obtain
\begin{align*}
\mathbb{E}\BigS{\frac{1}{KT} \SumLim{t=0}{T-1} \SumLim{k=0}{K-1}  F(\bar{\theta}^{(t, k)}) - F({\theta}^*)} 
\leq  \gO \biggP{{\frac{ L D^2}{KT} + 
	\frac{ \widehat{\sigma} D \Lambda^{\frac{1}{2}}}{\sqrt{ KT}}} + 
{\frac{ L^{\frac{1}{3}} \bar{\Omega}^{\frac{2}{3}} D^{\frac{4}{3}}}{K^{\frac{1}{3}} T^{\frac{2}{3}}} + 
	\frac{ L^{\frac{1}{3}} \bar{\Omega}^{\frac{2}{3}} D^{\frac{4}{3}}}{T^{\frac{2}{3}}}}},
\end{align*}
when 
\begin{align*}
\eta = \min{\set{\frac{1}{3L}, \frac{ D}{2 \sqrt{ K T\Lambda} \widehat{\sigma}}, \frac{D^{\frac{2}{3}}}{48^{\frac{1}{3}}K^{\frac{2}{3}} T^{\frac{1}{3}} L^{\frac{1}{3}} \bar{\Omega}^{\frac{2}{3}}}, \frac{D^{\frac{2}{3}}}{40^{\frac{1}{3}} K T^{\frac{1}{3}} L^{\frac{1}{3}} \bar{\Omega}^{\frac{2}{3}}}}}. 
\end{align*}
}


\section{Proof of Lemma~\ref{lem:excess_risk}}
\label{proof:excess_risk}
	We first prove the following fact:
	
	$\textbf{Fact 1:}$
	\begin{align*}
	\vspace{-1mm}
	\quad (a) \quad &\mathscr{L}(Q, f) \leq  \mathscr{L}_{\rho}^{\gamma}(P_{\lambda}, f), \qquad \forall f \in \mathcal{F},  Q \in \gB(P_{\lambda}, \rho).   \\
	 (b) \quad  &\inf_{f' \in \mathcal{F}} \mathscr{L}(Q, f') \leq \inf_{f' \in \mathcal{F}} \mathscr{L}_{\rho}^{\gamma}(P_{\lambda}, f') , \quad \forall Q \in \gB(P_{\lambda}, \rho).  
	\end{align*}
	
	For (a), we have
	\begin{align*}
	\mathscr{L}(Q,f) \leq  \underset{P' \in \gB({P}_{\lambda}, \rho)}{\mbox{sup }} \mathscr{L} (P',f) =  \, &\underset{\gamma' \geq 0}{{ \inf }} \Big{\{} \gamma' \rho^2 + \mathbf{E}_{Z \sim {P}_{\lambda}}\Big{[}\phi_\gamma (Z,  f)\Big{]}  \Big{\}} \\
	&  \leq   \gamma \rho^2 + \mathbf{E}_{Z \sim {P}_{\lambda}}\Big{[}\phi_\gamma (Z,  f)\Big{]}  \eqdef \mathscr{L}_{\rho}^{\gamma}(P_{\lambda},f),
	\end{align*}
	where the equality is due to strong  duality result by~\citet{gao_distributionally_2016}. 

	For (b), defining  $f_{P_{\lambda}} \defeq \argmin_{f' \in \mathcal{F}} \mathscr{L}_{\rho}^{\gamma}(P_{\lambda}, f')$, we have 
	\begin{align} 
	\inf_{f' \in \mathcal{F}} \mathscr{L}(Q, f') \leq \mathscr{L}(Q,f_{P_{\lambda}}) &\leq  \underset{P' \in \gB({P}_{\lambda}, \rho)}{\mbox{sup }} \mathscr{L} (P',f_{P_{\lambda}})  \\ 
	&= \underset{\gamma' \geq 0}{{ \inf }} \Big{\{} \gamma' \rho^2 + \mathbf{E}_{Z \sim {P}_{\lambda}}\Big{[}\phi_\gamma (Z,  f_{P_{\lambda}})\Big{]}  \Big{\}} \\ 
	&\leq \gamma \rho^2 + \mathbf{E}_{Z \sim {P}_{\lambda}}\Big{[}\phi_\gamma (Z,  f_{P_{\lambda}})\Big{]} \\
	& = \inf_{f' \in \mathcal{F}} \mathscr{L}_{\rho}^{\gamma}(P_{\lambda}, f'). 
	\end{align}
	
	We next prove the second fact:
	
	$\textbf{Fact 2:}$
	\begin{align*}
	\vspace{-1mm}
	\quad (a) \quad &\mathscr{L}_{\rho}^{\gamma}(P_{\lambda},f) \leq  \mathscr{L}(Q,f) + 2L_z \rho   + \abs{\gamma - \gamma^*} {\rho}^2, \quad \forall f \in \gF, Q \in \gB(P_{\lambda}, \rho)   \\
	(b) \quad  &\inf_{f' \in \mathcal{F}} \mathscr{L}_{\rho}^{\gamma}(P_{\lambda}, f') \leq   \inf_{f' \in \mathcal{F}} \mathscr{L}(Q, f') + 2L_z \rho   + \abs{\gamma - \gamma^*} {\rho}^2.
	\end{align*}
	
	For (a), we have:
	\begin{align*} 
	\mathscr{L}_{\rho}^{\gamma}(P_{\lambda},f)   & =  \biggC{ \underset{P' \in \gB({P}_{\lambda}, \rho)}{\mbox{sup }} \mathscr{L} (P',f)} + \BigC{\mathscr{L}_{\rho}^{\gamma}(P_{\lambda},f)  - \underset{P' \in \gB({P}_{\lambda}, \rho)}{\mbox{sup }} \mathscr{L} (P',f)} \\
	&\leq  \BigC{ \mathscr{L}(Q,f) + 2L_z\rho}  + \biggC{\mathbf{E}_{Z \sim P_{\lambda}}[\phi_{\gamma}(Z, f)] + {\rho}^2\gamma - \min_{\gamma^{\prime} \geq 0} \Big\{ {\rho}^2 \gamma^{\prime} + \mathbf{E}_{Z \sim P_{\lambda}}[\phi_{\gamma'}(Z, f)] \Big\}} \\
	& \leq   \mathscr{L}(Q,f) + 2L_z\rho +  {\rho}^2(\gamma - \gamma^*) + \mathbf{E}_{Z \sim P}\Big[\phi_{\gamma}(Z, f) - \phi_{\gamma^*}(Z, f)\Big] \\
	& =  \begin{aligned}[t] \mathscr{L}(Q,f) + 2L_z\rho + {\rho}^2(\gamma - \gamma^*) + \mathbf{E}_{Z \sim P}\bigg[& \sup_{\zeta \in \mathcal{Z}} \Big\{ \ell(\zeta,h)  - \gamma d(\zeta,  Z) \Big\}  - \sup_{\zeta \in \mathcal{Z}} \Big\{ \ell(\zeta,h) - \gamma^* d^2(\zeta,  Z)  \Big\} \bigg] \end{aligned} \\
	& =  \mathscr{L}(Q,f) + 2L_z\rho +  (\gamma - \gamma^*) \BigP{{\rho}^2 - \mathbf{E}_{Z \sim P}\Big[\sup_{\zeta \in \mathcal{Z}} d^2(\zeta,  Z)\Big] }\\
	& \leq   \mathscr{L}(Q,f) + 2L_z \rho  + \abs{\gamma - \gamma^*}  {\rho}^2,
	\end{align*}
	where the first inequality is due to Proposition~\ref{pro1}, and the last inequality is because we choose $\gamma \geq L_z/\rho$ and that fact that $\gamma^* \leq L_z/\rho$ by Lemma 1 of~\citet{lee_minimax_2018}. 


%
	
	For (b), defining  $f_Q \defeq \argmin_{f \in \mathcal{F}} \mathscr{L}(Q, f)$, we have
	\begin{align}
	\inf_{f' \in \mathcal{F}} \mathscr{L}_{\rho}^{\gamma}(P_{\lambda}, f') &\leq  \mathscr{L}_{\rho}^{\gamma}(P_{\lambda}, f_Q)  \\
	&\leq \mathscr{L}(Q, f_{Q}) + 2L_z \rho   + \abs{\gamma - \gamma^*} {\rho}^2 \\
	&= \inf_{f' \in \mathcal{F}} \mathscr{L}(Q, f') + 2L_z \rho   + \abs{\gamma - \gamma^*} {\rho}^2,
	\end{align}
	where the second line is due to \textbf{Fact 2}(a). 
	
	Combining all facts, we complete the proof. Specifically, by adding two inequalities in \textbf{Fact 1}(a) and \textbf{Fact 2}(b), we obtain the upperbound of Lemma~\ref{lem:excess_risk}. Similarly, adding two inequalities in \textbf{Fact 1}(b) and \textbf{Fact 2}(a), we obtain the lowerbound of this lemma. 
	
	Finally, we provide the proof of the following proposition that was used in proving \textbf{Fact 2}(a).
	\begin{proposition} \label{pro1} Let Assumption~\ref{Assumption:Lip_cont} (a) holds. For any $f \in \gF$ and for all  $Q \in \gB(P_{\lambda}, \rho)$, we have
		\begin{align*}
		\underset{P' \in \gB({P}_{\lambda}, \rho)}{\sup} \mathscr{L} (P',f) \leq \mathscr{L}(Q,f) + 2 L_z  \rho. 
		\end{align*}
	\end{proposition}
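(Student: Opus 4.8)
The plan is to exploit the coupling (Kantorovich) characterization of the Wasserstein distance together with the Lipschitz continuity of the loss in its data argument, Assumption~\ref{Assumption:Lip_cont}(a). The guiding observation is that, for a fixed hypothesis $h$, the map $z \mapsto \ell(z, h)$ is $L_z$-Lipschitz, so the gap in expected loss between two distributions is controlled by the transportation cost between them; the factor $2$ will then arise from two triangle-inequality hops through the common center $P_{\lambda}$.

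First I would fix an arbitrary $f \in \gF$ and an arbitrary $P' \in \gB(P_{\lambda}, \rho)$, and compare $\mathscr{L}(P', f)$ with $\mathscr{L}(Q, f)$. For any coupling $\pi \in \Pi(P', Q)$, rewriting the two expectations against the marginals of $\pi$ gives
$$\mathscr{L}(P', f) - \mathscr{L}(Q, f) = \mathbf{E}_{(Z, Z') \sim \pi}\bigS{\ell(Z, h) - \ell(Z', h)} \leq L_z \, \mathbf{E}_{(Z, Z') \sim \pi}\bigS{\norm{Z - Z'}},$$
applying Assumption~\ref{Assumption:Lip_cont}(a) pointwise inside the expectation. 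Taking the infimum over all couplings $\pi \in \Pi(P', Q)$ then yields $\mathscr{L}(P', f) - \mathscr{L}(Q, f) \leq L_z \, W_1(P', Q)$.

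Next I would pass from $W_1$ to the $p=2$ distance that defines the ball, and then to $\rho$. Since $W_1 \leq W_p$ for $p \geq 1$ (by the power-mean/Jensen inequality) and each Wasserstein distance obeys the triangle inequality, routing through the center $P_{\lambda}$ gives
$$W_1(P', Q) \leq W_1(P', P_{\lambda}) + W_1(P_{\lambda}, Q) \leq W_2(P', P_{\lambda}) + W_2(P_{\lambda}, Q) \leq \rho + \rho = 2\rho,$$
using that both $P'$ and $Q$ lie in $\gB(P_{\lambda}, \rho)$. Combining the two displays gives $\mathscr{L}(P', f) - \mathscr{L}(Q, f) \leq 2 L_z \rho$, and since $P'$ was arbitrary I would take the supremum over $P' \in \gB(P_{\lambda}, \rho)$ to obtain the claimed bound.

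The one delicate point I anticipate is the compatibility between the ground metric $d$ used to define the Wasserstein ball (here $d^2(z,z') = \norm{x-x'}^2 + \kappa\abs{y-y'}^2$) and the plain Euclidean norm $\norm{\cdot}$ in the Lipschitz assumption: the coupling bound most directly produces $L_z \, \mathbf{E}_{\pi}\bigS{\norm{Z - Z'}}$, which I want to dominate by the transportation cost measured in $d$ so that the triangle-inequality step chains correctly with the $W_2$-balls. This is harmless when $\kappa \geq 1$ (whence $d \geq \norm{\cdot}$), or if one simply reads Assumption~\ref{Assumption:Lip_cont}(a) with respect to the same $d$; apart from this bookkeeping the argument is entirely the Lipschitz-plus-coupling estimate, with the conceptual content concentrated in the two triangle hops through $P_{\lambda}$ that produce the constant $2$.
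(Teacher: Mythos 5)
Your proof is correct and follows essentially the same route as the paper's: control the gap $\mathscr{L}(P',f)-\mathscr{L}(Q,f)$ by $L_z\,W_1(P',Q)$ via the $L_z$-Lipschitz property of $z \mapsto \ell(z,h)$, then chain $W_1 \le W_2$ with the triangle inequality through the center $P_{\lambda}$ to obtain the factor $2\rho$. The only cosmetic difference is that you derive the $W_1$ estimate from the primal coupling formulation (the easy direction of Kantorovich--Rubinstein), whereas the paper invokes the KR dual representation directly; your closing remark about compatibility between the ground cost $d$ and the Euclidean norm in \Cref{Assumption:Lip_cont}(a) is a legitimate bookkeeping point that the paper's own proof silently glosses over as well.
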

	\begin{proof} Denote $P^* \defeq \underset{P' \in \gB({P}_{\lambda}, \rho)}{\argmax} \mathscr{L} (P',f)$. We have
		\begin{align}
		\underset{P' \in \gB({P}_{\lambda}, \rho)}{\mbox{sup }} \mathscr{L} (P',f) &=  \mathscr{L}(Q,f)  + \underset{P' \in \gB({P}_{\lambda}, \rho)}{\mbox{sup }} \mathscr{L} (P',f) - \mathscr{L}(Q,f) \nonumber\\ 
		& \leq \mathscr{L}(Q,f)  +  \abs*{\mathscr{L} (P^*,f) - \mathscr{L} (Q,f) }, \nonumber\\
		& \leq \mathscr{L}(Q,f)  +  L_z \abs*{\mathbf{E}_{Z \sim P^*} \bigS{\ell(Z, h)/L_z} - \mathbf{E}_{Z \sim Q} \bigS{\ell(Z, h)/L_z}}\nonumber \\
		& \leq  \mathscr{L}(Q,f) + L_z W_1  (P^*, Q)  \nonumber\\
		& \leq  \mathscr{L}(Q,f) + L_z \bigS{W_2  (P^*, P_{\lambda}) + W_2(P_{\lambda}, Q)} \label{E:triangle2}\\
		& \leq \mathscr{L}(Q,f) + L_z 2 \rho, \nonumber
		\end{align}
		where the  fourth line is due to the Kantorovich-Rubinstein dual representation theorem, i.e., 
		\begin{align*}
		W_1(P, Q) = \sup_{h} \BigC{ \mathbf{E}_{Z \sim P} \bigS{h(Z)} - \mathbf{E}_{Z \sim Q} \bigS{h(Z)}: h(\cdot) \text{ is 1-Lipschitz}}
		\end{align*}
		and the fifth line is due to $W_1  (P^*, Q) \leq W_2  (P^*, Q)$ and triangle inequality. 
	\end{proof}

\section{Proof of Theorem~\ref{thrm:excess_risk}}
\label{proof:thrm_excess_risk}
\begin{proof}
	To simplify notation, we denote $ \Phi \defeq \phi_{\gamma} \circ \mathcal{F} = \set{z \mapsto \phi_{\gamma}(z, f), f \in \mathcal{F}}$ where $\mathcal{F} = \bigC{f_{\theta}, \theta \in \Theta  \subset \mathbb{R}^d}$, which represents the composition of $\phi_{\gamma}$ with each of the loss function $f_{\theta}$ parametrized by $\theta$ belonging to the parameter class $\Theta$. 
	
	Defining $f_{P_{\lambda}} \in \argmin_{f \in \mathcal{F}} \mathscr{L}_{\rho}^{\gamma}(P_{\lambda}, f)$ and $ \widehat{\theta}^* \in \underset{\theta \in \Theta} {\operatorname{argmin}} \ \mathbf{E}_{Z \sim \widehat{P}_{\lambda}} \bigS{\phi_{\gamma}(Z,  f_{\theta})}$ such that $\mathscr{L}_{\rho}^{\gamma}(\widehat{P}_{\lambda}, {f}_{{\theta}^*}) = \underset{\theta \in \Theta} {\operatorname{inf}}  \BigS{\mathbf{E}_{Z \sim \widehat{P}_{\lambda}} \bigS{\phi_{\gamma}(Z,  f_{\theta})}  + \gamma \rho^2 }$, we  decompose the excess risk as follows:
	
	\begin{align}
	\mathscr{E}_{\rho}^{\gamma}(P_{\lambda}, {f}_{\widehat{\theta}^{\varepsilon}})	&=\mathscr{L}_{\rho}^{\gamma}(P_{\lambda}, {f}_{\widehat{\theta}^{\varepsilon}}) - \inf_{f \in \mathcal{F}}  \mathscr{L}_{\rho}^{\gamma}(P_{\lambda}, f) \nonumber\\
	&= \mathscr{L}_{\rho}^{\gamma}(P_{\lambda}, {f}_{\widehat{\theta}^{\varepsilon}}) -   \mathscr{L}_{\rho}^{\gamma}(P_{\lambda}, f_{P_{\lambda}})\nonumber\\
	&=  \BigS{\mathscr{L}_{\rho}^{\gamma}(P_{\lambda}, {f}_{\widehat{\theta}^{\varepsilon}}) - \mathscr{L}_{\rho}^{\gamma}(\widehat{P}_{\lambda}, {f}_{\widehat{\theta}^{\varepsilon}})} + \underbrace{\BigS{\mathscr{L}_{\rho}^{\gamma}(\widehat{P}_{\lambda}, {f}_{\widehat{\theta}^{\varepsilon}}) - \mathscr{L}_{\rho}^{\gamma}(\widehat{P}_{\lambda}, {f}_{\widehat{\theta}^{*}})} }_{\leq \varepsilon}   \nonumber\\
	& \quad \,  + \underbrace{\BigS{\mathscr{L}_{\rho}^{\gamma}(\widehat{P}_{\lambda}, {f}_{\widehat{\theta}^{*}}) - \mathscr{L}_{\rho}^{\gamma}(\widehat{P}_{\lambda}, f_{P_{\lambda}})}}_{\leq 0} + \BigS{\mathscr{L}_{\rho}^{\gamma}(\widehat{P}_{\lambda}, f_{P_{\lambda}}) - \mathscr{L}_{\rho}^{\gamma}(P_{\lambda}, f_{P_{\lambda}})} \nonumber\\
	&\leq 2 \sup_{\phi_{\gamma} \in \Phi} \abs*{ \mathbf{E}_{Z \sim P_{\lambda}}[\phi_{\gamma}(Z, f_{\theta})] -   \mathbf{E}_{Z \sim \widehat{P}_{\lambda}}[\phi_{\gamma}(Z, f_{\theta})] } + \varepsilon \nonumber\\
	&\leq 2 \sup_{\phi_{\gamma} \in \Phi}  \sum_{i=1}^{m} \lambda_i \abs*{\mathbf{E}_{Z_i \sim P_i}[\phi_{\gamma}(Z_i, f_{\theta})] -   \mathbf{E}_{Z_i \sim \widehat{P}_{i}}[\phi_{\gamma}(Z_i, f_{\theta})]}  + \varepsilon \nonumber\\
	&\leq 2   \sum_{i=1}^{m} \lambda_i \sup_{\phi_{\gamma} \in \Phi} \abs*{\mathbf{E}_{Z_i \sim P_i}[\phi_{\gamma}(Z_i, f_{\theta})] -   \mathbf{E}_{Z_i \sim \widehat{P}_{i}}[\phi_{\gamma}(Z_i, f_{\theta})]}  + \varepsilon \nonumber\\
	& \leq  \sum_{i=1}^{m} \lambda_i  \biggS{4 \mathscr{R}_{i}(\Phi)+  2 M_{\ell} \sqrt{\frac{2 \log (2 m / \delta)}{n_i}}} + \varepsilon\, \text{  with probability at least }  1-\delta, \label{unionbound1}
	\end{align}
	where the first inequality is due to optimization error and definition of $\widehat{\theta}^{*}$. The second inequality is due to the fact that $ \abs{\sum_{i=1}^m \lambda_i a_i } \leq \sum_{i=1}^m \lambda_i  \abs{a_i}, \forall a_i \in \mathbb{R}$ and $\lambda_i \geq 0$. The third inequality is because pushing the $\sup$ inside increases the value. For the last inequality, using  the facts that (i) $\abs{\phi_{\gamma}(z,f)} \leq M_{\ell}$  due to $-M_{\ell} \leq \ell(z, h) \leq \phi_{\gamma}(z, f) \leq \sup _{z \in \gZ} \ell(z, h) \leq M_{\ell}$ and (ii) the Rademacher complexity of the function class $\Phi$ defined by $\mathscr{R}_{i}(\Phi) = \mathbf{E}[\sup _{\phi_{\gamma} \in \Phi} \frac{1}{n_i} \sum_{k=1}^{n_i} \sigma_{k} \phi_\gamma(Z_k,  f_{\theta})] $ where the expectation is w.r.t both $Z_k \stackrel{\text { i.i.d. }}{\sim} P_i$ and i.i.d. Rademacher random variable $\sigma_{k}$ independent of $Z_k, \forall k \in [n_i]$,  we have 
	\begin{align}
	\sup_{\phi_{\gamma} \in \Phi} \abs*{\mathbf{E}_{Z_i \sim P_i}[\phi_{\gamma}(Z_i, f_{\theta})] -   \mathbf{E}_{Z_i \sim \widehat{P}_{i}}[\phi_{\gamma}(Z_i, f_{\theta})]}  \geq 2 \mathscr{R}_{i}(\Phi)+   M_{\ell} \sqrt{\frac{2 \log (2 m / \delta)}{n_i}} \label{Rade1}
	\end{align}
	with probability $\leq \delta/m$ due to the standard symmetrization argument and McDiarmid's inequality~{\citep[Theorem~26.5]{shalev_shwartz_understanding_2014}}. Multiplying $\lambda_i$ to both sides of \cref{Rade1}, summing up the inequalities over all $i\in [n]$, and using union bound, we obtain \cref{unionbound1}.
	
	Define a stochastic process $\left(X_{\phi_{\gamma}}\right)_{\phi_{\gamma} \in \Phi}$ 
	$$
	X_{\phi_{\gamma}}:=\frac{1}{{\sqrt{n_i}}} \sum_{k=1}^{n_i} \sigma_{k} \phi_{\gamma}(Z_k, f_{\theta})
	$$
	which is  zero-mean because $\mathbf{E}\left[X_{\phi_{\gamma}}\right]=0$ for all $\phi_{\gamma} \in \Phi$.  To upper-bound  $\mathscr{R}_{n}(\Phi)$, we first show that $\left(X_{\phi_{\gamma}}\right)_{\phi_{\gamma} \in \Phi}$ is a sub-Gaussian process with respect to the following pseudometric
	\begin{align}
	\left\|\phi_{\gamma}-\phi_{\gamma}^{\prime}\right\|_{\infty}  \defeq  \sup_{z \in \mathcal{Z}} \Big\lvert  {\phi_{\gamma}(z, f_{\theta}) - \phi_{\gamma}(z, f_{\theta'})}  \Big\rvert . 
	\end{align}  
	
	For any $t \in \mathbb{R}$,  using Hoeffding inequality with the fact that $\sigma_{k}, k \in [n]$, are i.i.d. bounded random variable with sub-Gaussian  parameter 1, we have
	$$
	\begin{aligned}
	\mathbf{E}\left[\exp \left(t\left(X_{\phi_{\gamma}}-X_{\phi_{\gamma}^{\prime}}\right)\right)\right] &=\mathbf{E}\left[\exp \left(\frac{t}{\sqrt{n_i}} \sum_{k=1}^{n_i} \sigma_{k}\left(\phi_{\gamma}\left(Z_k, f_{\theta}\right)-\phi\left(Z_k, f_{\theta'}\right)\right)\right)\right] \\
	&=\left(\mathbf{E}\left[\exp \left(\frac{t}{{\sqrt{n_i}}} \sigma_{1}\left(\phi_{\gamma}\left(Z_{1}, f_{\theta}\right)-\phi_{\gamma}\left(Z_{1}, f_{\theta'}\right)\right)\right)\right]\right)^{n_i} \\
	& \leq \exp \left(\frac{t^{2} \left\|\phi_{\gamma}-\phi_{\gamma}^{\prime}\right\|_{\infty}^2}{2 }\right). 
	\end{aligned}
	$$
	Then, invoking Dudley entropy integral, we have
	\begin{align} \label{E:Dudley}
	\sqrt{n_i}\, {\mathscr{R}}_{i}(\Phi) = \mathbf{E} \sup_{\phi_{\gamma} \in \Phi} X_{\phi_{\gamma}} \leq {12} \int_{0}^{\infty} \sqrt{\log \mathcal{N}\left(\Phi, \norm{\cdot}_{\infty}, \epsilon\right)} \mathrm{d} \epsilon
	\end{align}
	
	We will show that when $\theta \mapsto \ell(z,h_{\theta})$ is $L_{\theta}$-Lipschitz by Assumption~\ref{Assumption:Lip_cont}, then $\theta \mapsto \phi_{\gamma}(z, f_{\theta})$ is also $L_{\theta}$-Lipschitz as follows. 
	\begin{align*}
	\Big|\phi_{\gamma}(z, f_{\theta}) - \phi_{\gamma}(z, f_{\theta'})\Big|  &=  \Big| \sup_{\zeta \in \mathcal{Z}} \inf_{\zeta' \in \mathcal{Z}} \Big\{ \ell(\zeta,h_{\theta})  - \gamma d(\zeta,  z)   -   \ell(\zeta', h_{\theta'}) + \gamma d(\zeta',  z)  \Big\} \Big| \\
	&\leq  \Big| \sup_{\zeta \in \mathcal{Z}} \Big\{ \ell(\zeta,h_{\theta})    -   \ell(\zeta, h_{\theta'})   \Big\} \Big| \\
	&\leq \sup_{\zeta \in \mathcal{Z}}  \Big|  \ell(\zeta,h_{\theta})    -   \ell(\zeta, h_{\theta'})    \Big| \\
	& \leq L_{\theta}   \norm{\theta - \theta'}, \\
	\end{align*}
	which implies
	\begin{align*}
	\left\|\phi_{\gamma}-\phi_{\gamma}^{\prime}\right\|_{\infty} \leq  L_{\theta}   \norm{\theta - \theta'}.
	\end{align*}
	Therefore, by contraction principle~\citep{shalev_shwartz_understanding_2014}, we have
	\begin{align} \label{E:contraction_principle}
	\mathcal{N}\left(\Phi, \norm{\cdot}_{\infty}, \epsilon\right) \leq \mathcal{N}\left(\Theta, \norm{\cdot}, \epsilon/L_{\theta}\right).
	\end{align}
	Substituting \cref{E:contraction_principle} and \cref{E:Dudley} into \cref{unionbound1}, we obtain
	\begin{align}
	\mathscr{E}_{\rho}^{\gamma}(P_{\lambda}, {f}_{\widehat{\theta}^{\varepsilon}}) \leq  \sum_{i=1}^{m} \lambda_i \BiggS{\frac{48 \mathscr{C}(\Theta)}{\sqrt{n_i}} + 2 M_{\ell} \sqrt{\frac{2 \log (2  m / \delta)}{n_i}}} +\varepsilon,
	\end{align}
	which will be substituted into the upper-bound in Lemma~\ref{lem:excess_risk} to complete the proof. 
\end{proof}

\section{Proof of Corrolary~\ref{corr1}}	 
\label{proof:corr1}
	We now present how we adapt the result from \citet{fournier_rate_2013} to prove \Cref{corr1}
	\begin{proposition}[Measure concentration~{\citep[Theorem~2]{fournier_rate_2013}}]\label{pro:WassRate}
		Let $P$ be a probability distribution on a bounded set $\mathcal{Z}$. Let $\widehat{P}_{n}$ denote the empirical distribution of $Z_{1}, \ldots, Z_{n} \stackrel{\text { i.i.d. }}{\sim} P.$ Assuming that there exist constants $a>1$  such that $A \defeq \mathbf{E}_{Z \sim P}\bigS{\exp(\norm{Z}^a)} < \infty$ (i.e., $P$ is a light-tail distribution). Then, for any $\rho > 0$,
		$$
		\mathbf{P}\left[ W_p (\widehat{P}_{n}, P) \geq  \rho \right]  \leq \begin{cases}c_{1} \exp \left(-c_{2} n \rho^{\max \{d/p,2\}}\right) & \text { if } \rho \leq 1 \\ c_{1} \exp \left(-c_{2} n \rho^{a}\right) & \text { if } \rho>1\end{cases}
		$$
		where $c_{1}, c_{2}$ are constants depending on $a, A$ and $d$.
	\end{proposition}
	As a consequence of this proposition,  for any $\delta > 0$, we have  
	\begin{align}\label{E:deltarho}
	\mathbf{P}\left[ W_2 (\widehat{P}_{n}, P) \leq  \widehat{\rho}_{n}^{\delta} \right] \geq 1 - \delta \; \text{  where  } \; \widehat{\rho}_{n}^{\delta} \defeq \begin{cases}\left(\frac{\log \left(c_{1} / \delta \right)}{c_{2} n}\right)^{\min \{2/ d, 1 / 2\}} & \text { if } n \geq \frac{\log \left(c_{1} / \delta\right)}{c_{2}}, \\ \left(\frac{\log \left(c_{1} / \delta\right)}{c_{2} n}\right)^{1 / \alpha} & \text { if } n<\frac{\log \left(c_{1} / \delta\right)}{c_{2}}.\end{cases}
	\end{align}

		In \Cref{pro:WassRate}, \citet{fournier_rate_2013} show that the empirical distribution $\widehat{P}_n$ converges in Wasserstein distance to the true $P$ at a specific rate. This implies that  judiciously scaling the radius of Wasserstein balls according to \cref{E:deltarho} provides natural confidence regions for the data-generating distribution $P$.  

	By the duality of transport cost~{\citep[p.261]{santambrogio_optimal_2015}}, we have 
	\begin{align*}
	W_p^p(\mu, \nu) = \sup_{\varphi(x) + \psi(y) \leq d^p(x,y)} \int \varphi \, {d} \mu + \psi \, {d} \nu  =  \sup_{\varphi(x) + \psi(y) \leq d^p(x,y)} T_f (\mu, \nu), \quad \forall p \geq 1,
	\end{align*}
	which is the supremum of linear functionals $T_f: \gP \times \gP \mapsto \mathbb{R}$ defined by $T_f (\mu, \nu) = \innProd{(\mu, \nu), (\varphi, \psi)}$; therefore,  $(\mu, \nu) \mapsto W_p^p(\mu, \nu)$ is convex, $\forall p \geq 1$. Thus we have 
	\begin{align}
	W_2^2(P_{\lambda}, \widehat{P}_{\lambda}) = W_2^2 \BigP{\sum\limits_{i=1}^m \lambda_i (\widehat{P}_{n_i}, P_i)} \leq  {\sum\limits_{i=1}^m \lambda_i W_2^2(\widehat{P}_{n_i}, P_i)}. \label{E:convex2}
	\end{align}
	Then, we have
	\begin{align}
	\mathbf{P}\BigS{W_2(P_{\lambda}, \widehat{P}_{\lambda}) \geq \sqrt{\sum\nolimits_{i=1}^{m} \lambda_i \widehat{\rho}_{n_i}^{\delta/m} }}
	&= \mathbf{P}\BigS{W_2^2(P_{\lambda}, \widehat{P}_{\lambda}) \geq \sum\limits_{i=1}^{m} \lambda_i \widehat{\rho}_{n_i}^{\delta/m} } \nonumber\\
	&\leq \mathbf{P}\BigS{\sum\limits_{i=1}^m \lambda_i W_2^2(\widehat{P}_{n_i}, P_i) \geq \sum\limits_{i=1}^{m} \lambda_i \widehat{\rho}_{n_i}^{\delta/m} } \nonumber\\
	&\leq \sum\limits_{i=1}^m  \mathbf{P}\BigS{ W_2^2(\widehat{P}_{n_i}, P_i) \geq   \widehat{\rho}_{n_i}^{\delta/m} } \nonumber\\
	&= \sum\limits_{i=1}^m  \mathbf{P}\BigS{ W_2(\widehat{P}_{n_i}, P_i) \geq   \widehat{\rho}_{n_i}^{\delta/2m} } \nonumber\\
	&\leq \sum\limits_{i=1}^m  \frac{\delta}{2 m}= \frac{\delta}{2}, \label{E:barry}
	\end{align}
	where the first inequality is due to \cref{E:convex2}, the second inequality is due to the union bound, and the last inequality is due to  Proposition~\ref{pro:WassRate} and \cref{E:deltarho}. 
	
	According to \cref{E:triangle2}, by setting $\rho = \BigP{\sum\nolimits_{i=1}^{m} \lambda_i \widehat{\rho}_{n_i}^{\delta/m} }^{1/2}$  in Theorem~\ref{thrm:excess_risk} and using union bound, we complete the proof.

\section{Additional Experimental Settings And Results}
\subsection{Datasets}\label{sec:experiment:setting}

\begin{table}[h]
	\centering
	\setlength\tabcolsep{3.5 pt} 
	\caption{  Statistics of all datasets using in the WAFL's robustness experiments.} 
	\begin{tabular}{lcccll}
		\toprule
		\multirow{2}{*}{Dataset} & \multirow{2}{*}{ $m$} 
		&\multicolumn{1}{c}{\multirow{2}{*}{\begin{tabular}[c]{@{}c@{}}Total\\samples\end{tabular}}}
		&\multicolumn{1}{c}{\multirow{2}{*}{\begin{tabular}[c]{@{}c@{}}Num labels \\ / client\end{tabular}}}
		&\multicolumn{2}{l}{Samples / client}   \\ 
		
		& &                                &                              & \multicolumn{1}{c}{Mean} & \multicolumn{1}{c}{Std}\\ 
		\midrule
		CIFAR-10                  & 20                             &       43,098         &         3     &                \multicolumn{1}{c}{2154 }&\multicolumn{1}{c}{ 593.8}  \\   
		MNIST                     & 100                            &      70,000          &     2          &          \multicolumn{1}{c}{700}       & \multicolumn{1}{c}{313.4}          \\
		\bottomrule

	\end{tabular}
	\label{T:Data_Samples}
\end{table}

For robustness-related experiments, we distribute all datasets to clients as follows:
\begin{itemize}
\item \textbf{MNIST}: A handwritten digit dataset~\citep{lecun_gradient-based_1998} including $70,000$ instances belonged to 10 classes. We distribute dataset to $m = 100$ clients and each client has a different local data size with only $2$ of the $10$ classes.
\item \textbf{CIFAR-10}: An object recognition dataset~\citep{krizhevskyLearningMultipleLayers} including $60,000$ colored images belonged to $10$ classes. We partition the dataset to $m = 20$ clients and there are $3$ labels per client. Each client has a different local data size.
\end{itemize}
We standardize and randomly split all datasets with $75\%$ and $25\%$ for training and testing, respectively. 
The statistics of all datasets are summarized in~\Cref{T:Data_Samples}.

\begin{table}[h]
	\centering
	\setlength\tabcolsep{3.5 pt} 
	\caption{Statistics of all datasets using in the domain adaptation experiments.} 
	\begin{tabular}{lcccll}
		\toprule
		\multirow{2}{*}{Dataset} & \multirow{2}{*}{ Original Size} 
		&\multicolumn{1}{c}{\multirow{2}{*}{\begin{tabular}[c]{@{}c@{}}Total\\samples\end{tabular}}}
		&\multicolumn{1}{c}{\multirow{2}{*}{\begin{tabular}[c]{@{}c@{}}Num labels \\ / client\end{tabular}}}
		&\multicolumn{2}{l}{Samples / client}   \\ 
		
		& &                                &                              & \multicolumn{1}{c}{Training} & \multicolumn{1}{c}{Testing}\\ 
		\midrule
		MNIST                     & 28x28                            &      70,000          &     10          &          \multicolumn{1}{c}{60,000}       & \multicolumn{1}{c}{10,000}          \\
		USPS                     &  16x16                           &      9,298         &     10          &          \multicolumn{1}{c}{7,291}       & \multicolumn{1}{c}{2,007}          \\
		SVHN                     &   32x32                          &      89,289       &     10          &          \multicolumn{1}{c}{63,257}       & \multicolumn{1}{c}{26,032}          \\
		\bottomrule

	\end{tabular}
	\label{T:Data_Samples_2}
\end{table}

For domain adaptation experiments, we use a set of three digit recognition datasets including MNIST \citep{lecun_gradient-based_1998}, USPS \citep{hullDatabaseHandwrittenText1994}, and SVHN \citep{37648}. In particular, we choose two datasets for training and one for testing. The statistics of all datasets are summarized in~\Cref{T:Data_Samples_2}.
\subsection{Models}
The details of models for each dataset is provided as follows:
\begin{itemize}
	\item \textbf{MNIST}: We use a multinomial logistic regression model (MLR) with a cross-entropy loss function and an $L_2$-regularization term. 
	\item \textbf{CIFAR-10}: We use a CNN model employed in \citet{mcmahan_communication-efficient_2017}.
	\item \textbf{Three digit recognition datasets (MNIST, USPS, SVHN)}: We use a multinomial logistic regression model (MLR) with a cross-entropy loss function and an $L_2$-regularization term. 
\end{itemize}
In all settings, we set the number of local epochs to $K = 2$ and the number of communication rounds to $T = 200$. For domain adaptation applications, we assign one source domain to one client. For other experiments, we randomly sample $10$ clients to participate in training the global robust model in each communication round. 
 All experiments were conducted using PyTorch \citep{PyTorch2019}.

\subsection{Comparison between \OurAlg (with p = 1 and p = 2) and other methods on MNIST} 
\label{Apx:WAFL_p}
\begin{figure*}[h]
\centering
\includegraphics[width=0.5\textwidth]{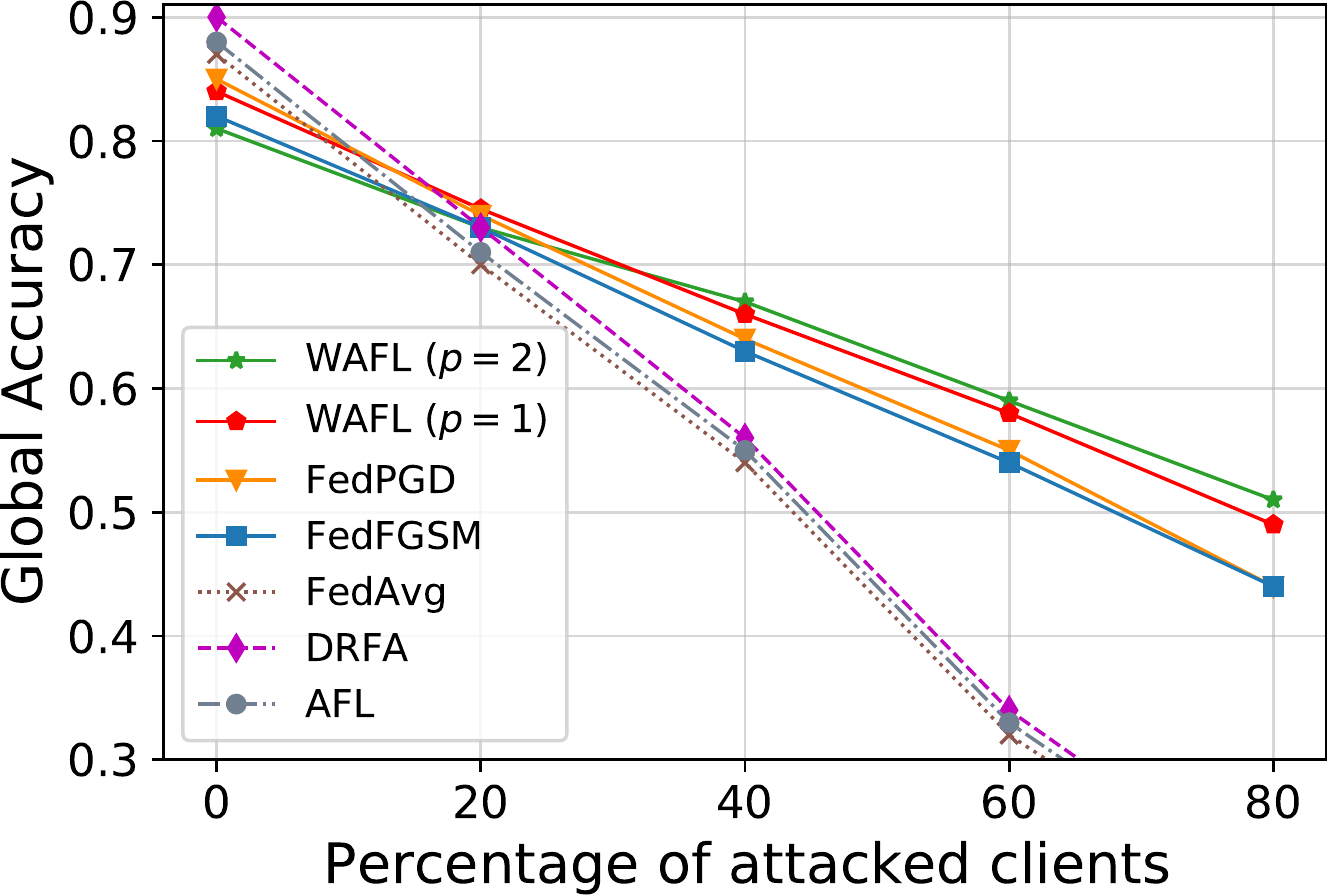}
\caption{Comparison between \OurAlg(with $p=1$ and $p=2$) and other methods on MNIST.}
\label{F:p_values}
\end{figure*}

In an additional experiment, we train  \OurAlg  using $p=1$. The duality result in \Cref{E:duality_Wass} requires only that the distance metric $d$ continuous and convex in its first argument \citep{sinha_certifying_2020}. Therefore, any $\ell_p$ norm would suffice. The use of the $\ell_2$ norm ensures that $d$ is $1$-strongly convex, implying that solving for $\phi_\gamma$ enjoys linear convergence.  As depicted in \Cref{F:p_values}, \OurAlg's performance when $p=1$ is close but not as good as when $p=2$.

\subsection{Convergence of \OurAlg}
We verify the convergence of \OurAlg under two cases: \emph{clean data} (no attacked clients) and \emph{distribution shifts} (where $40\%$ of clients are attacked). In each case, we use two datasets: MNIST and CIFAR-10 and employ the same setup as in \Cref{sec:experiment}. Specifically, for MNIST, we distribute the dataset to $100$ clients and set $\gamma = 0.05$. For CIFAR-10, we use $20$ clients and set $\gamma = 0.5$. We use $T = 200$ communication iterations.

To show \OurAlg's convergence, we plot both the \emph{original} loss (using the function $\ell$) and global accuracy in \Cref{F:convergence}.

\begin{figure*}[t]
	\centering
	\includegraphics[scale=0.28]{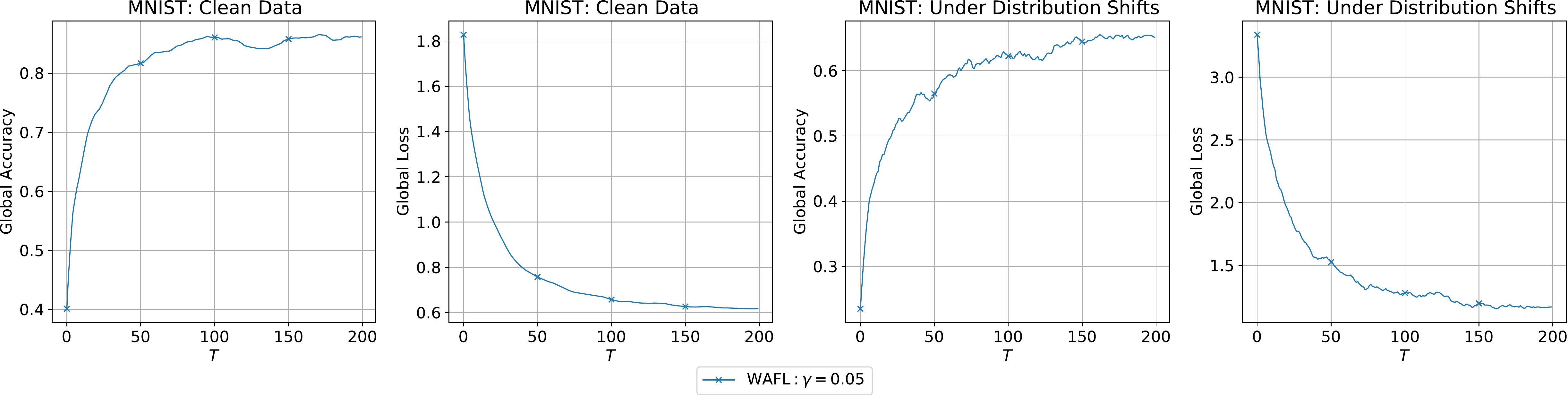}
	\includegraphics[scale=0.28]{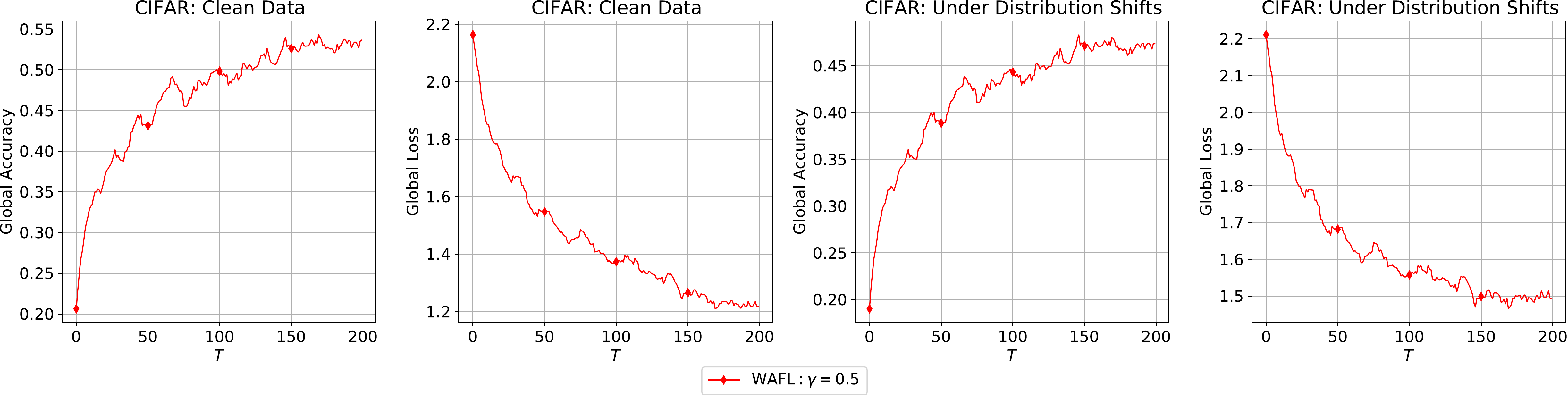}
	\caption{Convergence of \OurAlg.} 
	\label{F:convergence}
\end{figure*}

\end{document}